\DeclareMathOperator{\posi}{posi}
\newcommand{\gambles}{\mathcal{L}}
\newcommand{\pr}{P}
\newcommand{\lpr}{{\underline{\pr}}}
\newcommand{\upr}{{\overline{\pr}}}
\newcommand{\edesirs}{\mathcal{E}}
\begin{document}
\pagestyle{headings}

\title{\textbf{Information algebras in the theory of imprecise probabilities}}

\author[1]{Arianna Casanova}
\author[2]{Juerg Kohlas}
\author[1]{Marco Zaffalon}

\affil[1]{Istituto Dalle Molle di Studi sull’Intelligenza Artificiale (IDSIA), \protect\\ 
Lugano (Switzerland) \authorcr
  \{\tt arianna, zaffalon\}@idsia.ch}
\affil[2]{Department of Informatics DIUF, University of Fribourg, \protect\\
Fribourg (Switzerland)  \authorcr
 juerg.kohlas@unifr.ch}


\date{\today}

\maketitle


\begin{abstract}
In this paper, we show that coherent sets of gambles and coherent lower and upper previsions can be embedded into the algebraic structure of information algebra. This leads firstly, to a new perspective of the algebraic and logical structure of desirability and imprecise probabilities and secondly, it connects imprecise probabilities to other formalism in computer science sharing the same underlying structure.
Both the domain free and the labeled view of the resulting information algebras are presented, considering product possibility spaces. Moreover, it is shown that both are \emph{atomistic} and therefore they can be embedded in set algebras.
\end{abstract}

\tableofcontents


\section{Introduction and Overview}

In a recent paper~\cite{mirzaffalon20} derived some results about consistency and compatibility of coherent sets of gambles and coherent lower previsions and remarked that these results were in fact results of the theory of information or valuation algebras~\citep{kohlas03}. This point of view, however, was not worked out by~\cite{mirzaffalon20}. In this paper, this issue is addressed and it is shown that coherent sets of gambles, strictly desirable sets of gambles and coherent lower and upper previsions form indeed idempotent information algebras.  Like in group theory certain results concerning particular groups follow from the general theory, so many known results about coherent sets of gambles and coherent lower and upper previsions are indeed properties of information algebras and follow from the corresponding general theory. Some of these results are discussed in this paper but there are doubtless many others that can be derived.

From the point of view of information algebras, sets of gambles or lower previsions are pieces of information about certain groups of variables with their respective sets of possibilities. Such pieces of information can be aggregated or combined and the information they contain about sub-groups of variables can be extracted. This leads to an algebraic structure satisfying a number of simple axioms. In fact, there are two different versions of information algebras, a domain-free one and a labeled one. They are closely related and each one can be derived or reconstructed form the other. Grossly, the domain-free version is better suited for theoretical studies, since it is a structure of universal algebra, whereas the labeled version is better adapted to computational purposes, since it provides more efficient storage structures. In fact, labeled information algebras (or valuation algebras if idempotency is dropped) are the universal algebraic structures for local computation in join or junction trees as originally proposed for probabilistic networks by~\cite{lauritzenspiegelhalter88}. Based on this work,~\cite{shenoyshafer90} proposed a first axiomatic scheme which was sufficient to generalize the Lauritzen-Spiegelhalter scheme to a multitude of other uncertainty formalisms like Dempster-Shafer belief functions, possibility theory and many others. In~\cite{kohlas03} the algebraic theory is systematically developed and studied. In particular, the domain-free and labeled views are presented (based originally on~\citealp{shafer91}). In this paper both the domain-free and the labeled view of information algebra of coherent sets of gambles and coherent lower previsions are presented.

Coherent sets of gambles are analogous to logical theories because they can be considered as the result of applying a consequence operator to an original set of logical statements. In this case the consequence operator considered is the natural extension. It has been shown in~\cite{kohlas03} that a consequence operator, satisfying some mild conditions, induces an idempotent information algebra of logically closed sets. This theory applies to sets of desirable gambles if they avoid partial loss. The construction of the domain-free information algebra of coherent sets of gambles is based on this theory and presented in Section~\ref{secDomFreeInfAlg}. Based on the domain-free view, in Section~\ref{sec:LabInfAlg} the labeled version is derived by a standard construction for information algebras. Alternatively a second isomorphic version of it is presented, which corresponds more to the notion of marginals as used with sets of desirable gambles and lower previsions. In Section~\ref{sec:LowUpPrev} we show that there is also an information algebra of coherent lower previsions, and one of coherent upper previsions. 
Moreover, strictly desirable sets of gambles form a subalgebra of the algebra of coherent sets of gambles and this subalgebra is isomorphic to the algebras of coherent lower and upper previsions. 

Any idempotent information algebra induces an information order, a partial order, where one piece of information is less informative than another, if the combination of the two pieces yields the second one, i.e. the first piece of information is `contained' in the second one. In the case of coherent sets of gambles, this order corresponds to set inclusion and in the case of lower previsions to domination. Moreover, since natural extension of desirable sets of gambles is a consequence (or closure) operator, coherent sets of gambles form a complete lattice under information order, where meet is set intersection, and join is essentially natural extension. This carries over to coherent lower previsions where meet is the point-wise infimum of lower previsions and join again natural extension (Section~\ref{sec:LowUpPrev}). Furthermore, it is well know that there are (in this order) maximal coherent sets~\citep{CooQua12}. In the language of information algebra such maximal elements are called atoms. Any coherent set of gambles is contained in a maximal set (an atom) and it is the intersection (meet) of all atoms it is contained in. An information algebra with these properties is called atomistic. In fact, any intersection of atoms yields a coherent set of gambles. This is discussed in Section~\ref{sec:Atoms}. In the case of lower previsions, atoms are linear previsions (Section~\ref{sec:LinPrev}) and the fact that any coherent lower prevision is the meet of atoms it is contained in, is the lower envelope theorem.

Atomistic information algebras have the universal property that they are embedded in a set algebra, which is an information algebra the elements of which are sets, combination is simply intersection and extraction is the cylindrification or saturation operator. This is an important representation theorem for information algebras, since set algebras are very special kinds of algebras based on the usual set operations. Conversely, any such set algebra of subsets of the possibility space is embedded in the algebra of coherent sets of gambles. This has some importance for conditioning, a subject which however is not pursued in this paper. These links between set algebras and algebras of coherent sets of gambles are discussed in Section~\ref{sec:SetAlgs} and in Section~\ref{sec:LinPrev}.

As stated above, information algebras are the generic structures of local computation in join tree. This is used in Section~\ref{sec:Comp} to reconstruct the proof of~\cite{mirzaffalon20} concerning the running intersection property (RIP) as a condition for pairwise compatibility of elements (here coherent sets of gambles or coherent lower previsions) to be sufficient for global compatibility, which is to be marginals of some element.

\section{Desirability} \label{sec:DesGambles}

Consider a set $\Omega$ of possible worlds. A gamble over this set is a bounded function
\begin{eqnarray*}
f : \Omega \rightarrow \mathbb{R}.
\end{eqnarray*}

A gamble is interpreted as an uncertain reward in a linear utility scale. A subject might desire a gamble or not, depending on the information they have about the experiment whose possible outcomes are the elements of $\Omega$.

We denote the set of all gambles on $\Omega$ by $\mathcal{L}(\Omega)$, or more simply by $\mathcal{L}$ when there is no possible ambiguity. We also let $\mathcal{L}^+(\Omega) \coloneqq \{ f \in \mathcal{L}(\Omega): \; f\geq 0, f \not= 0\}$, or simply $\mathcal{L}^+$, denote the subset of non-vanishing, non-negative gambles. These gambles should always be desired, since they may increase the wealth with no risk of decreasing it.
As a consequence of the linearity of our utility scale, we assume also that a subject disposed to accept the transactions represented by $f$ and $g$, is disposed to accept also $\lambda f + \mu g$ with $\lambda, \mu \ge 0$ not both equal to $0$.
More generally, we can consider the notion of a coherent set of desirable gambles over $\Omega$ or, more simply, a coherent set of gambles over $\Omega$~\citep{walley91}.
\begin{definition}[\textbf{Coherent set of desirable gambles}]
We say that a subset $D$ of $\gambles(\Omega)$ is a \emph{coherent} set of desirable gambles, or more simply a coherent set of gambles, if and only if $D$ satisfies the following properties:
\begin{enumerate}[label=\upshape D\arabic*.,ref=\upshape D\arabic*]
\item\label{D1} $\gambles^+ \subseteq D$ [Accepting Partial Gains],
\item\label{D2} $0\notin D$ [Avoiding Null Gain],
\item\label{D3} $f,g \in D \Rightarrow f+g \in D$ [Additivity],
\item\label{D4} $f \in D, \lambda>0 \Rightarrow \lambda f \in D$ [Positive Homogeneity].
\end{enumerate}
\end{definition}
So $D$ is a convex cone.
This definition leads to the concept of natural extension.
\begin{definition}[\bf{Natural extension for gambles}] \label{def:natex}Given a set $D \subseteq\gambles(\Omega)$, we call $
\mathcal{E}(D) \coloneqq\posi(D\cup\gambles^+(\Omega))$,
where
\begin{equation*}
\posi(D')\coloneqq\left\{ \sum_{j=1}^{r} \lambda_{j}f_{j}: f_{j} \in D', \lambda_{j} > 0, r \ge 1\right\}
\end{equation*}
for every set $D' \subseteq \gambles(\Omega)$, its \emph{natural extension}.
\end{definition}

We do not indicate the dependency of the natural extension operator from the set $\Omega$. However, it has to be intended that, the operator applied to a set of gambles $D$, changes with the possibility set on which $D$ is defined. This fact becomes particularly important in Section~\ref{sec:LabInfAlg}.

The natural extension of a set of gambles $D$,  $\mathcal{E}(D)$, is coherent if and only if $0 \not\in \mathcal{E}(D)$. 

Coherent sets are closed under intersection, that is they form a topless $\cap$-structure~\citep{daveypriestley97}. By standard order theory~\citep{daveypriestley97} they are ordered by inclusion, intersection is meet in this order and a family of coherent sets of gambles $D_i,i \in I$, where $I$ is an index set, have a join if they have an upper bound among coherent sets
\begin{eqnarray*}
\bigvee_{i \in I} D_i \coloneqq \bigcap \{D' \in C(\Omega) : \bigcup_{i \in I} D_i \subseteq D'\},
\end{eqnarray*}
where we indicate with $C(\Omega)$, or simply with $C$ when there is no possible ambiguity, the family of coherent sets of gambles on $\Omega$.

Also, if $\mathcal{E}(D)$ is coherent, it is the smallest coherent set containing $D$
\begin{eqnarray*}
\mathcal{E}(D) = \bigcap \{D' \in C(\Omega): D \subseteq D'\},
\end{eqnarray*}
so that, if $\mathcal{E}(\bigcup_{i \in I} D_i)$ is coherent
\begin{eqnarray*}
\bigvee_{i \in I} D_i = \mathcal{E}(\bigcup_{i \in I} D_i).
\end{eqnarray*}

In view of the following section, it is convenient to add $\mathcal{L}(\Omega)$ to $C(\Omega)$ and let $\Phi(\Omega) \coloneqq C(\Omega) \cup \{\mathcal{L}(\Omega)\}$. In what follows, we refer to it also with $\Phi$, when there is no ambiguity. The family of sets in $\Phi$ is still a $\cap$-structure, but now a topped one. So, again by standard results of order theory~\citep{daveypriestley97}, $\Phi$ is a complete lattice under inclusion, meet is intersection and join is defined for any family of sets $D_i \in \Phi$ (for all $i \in I$) as
\begin{eqnarray*}
\bigvee_{i \in I} D_i \coloneqq \bigcap \{D' \in \Phi: \bigcup_{i \in I} D_i \subseteq D'\}.
\end{eqnarray*}
Note that, if a family of coherent sets $D_i$ has no upper bound in $C(\Omega)$, then its join is simply $\mathcal{L}(\Omega)$. In this topped $\cap$-structure, the operator
\begin{eqnarray*}
\mathcal{C}(D) \coloneqq \bigcap \{D' \in \Phi(\Omega): D \subseteq D'\}
\end{eqnarray*}
is a closure (or consequence) operator on $(\mathcal{P}(\gambles(\Omega)), \subseteq)$~\citep{daveypriestley97}. Also in this case, it changes with the possibility set on which $D$ is defined.
\begin{definition}[Closure operator on $(\mathcal{P}(\gambles), \subseteq)$]
A closure operator on the ordered set $(\mathcal{P}(\gambles), \subseteq)$ is a function $\mathcal{C}: \mathcal{P}(\gambles) \rightarrow \mathcal{P}(\gambles)$ that satisfies the following conditions for all sets  $D,D' \subseteq \mathcal{L}$:
\begin{itemize}
\item $D \subseteq \mathcal{C}(D)$,
\item $D \subseteq D'$ implies $\mathcal{C}(D) \subseteq \mathcal{C}(D')$,
\item $\mathcal{C}(\mathcal{C}(D)) = \mathcal{C}(D)$.
\end{itemize}
\end{definition}
Given that we always consider inclusion as the order relation on $\mathcal{P}(\gambles)$, we will refer to it, more simply, as a consequence operator on $\mathcal{P}(\gambles)$.

Note that $\mathcal{C}(D) = \mathcal{E}(D)$, if $0 \not\in \mathcal{E}(D)$, that is if $\mathcal{E}(D)$ is coherent. Otherwise we may have $\mathcal{E}(D) \not= \mathcal{L}(\Omega)$.  We refer to~\cite{decooman2005} for a similar order-theoretic view on belief models. These results prepare the way to an information algebra of coherent sets of gambles (see next section).  For further reference, note the following well-known result on consequence operators.

\begin{lemma} \label{le:UnionConsOp}
If $\mathcal{C}$ is a consequence operator on $\mathcal{P}(\gambles)$ then, for any $D_1, D_2 \subseteq \mathcal{L}(\Omega)$:
\begin{eqnarray*}
\mathcal{C}(D_1 \cup D_2) = \mathcal{C}(\mathcal{C}(D_1) \cup D_2).
\end{eqnarray*}
\end{lemma}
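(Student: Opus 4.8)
The plan is to prove the two inclusions $\mathcal{C}(D_1 \cup D_2) \supseteq \mathcal{C}(\mathcal{C}(D_1) \cup D_2)$ and $\mathcal{C}(D_1 \cup D_2) \subseteq \mathcal{C}(\mathcal{C}(D_1) \cup D_2)$ separately, using only the three defining properties of a consequence operator (extensivity, monotonicity, idempotency).

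For the inclusion $\mathcal{C}(D_1 \cup D_2) \subseteq \mathcal{C}(\mathcal{C}(D_1) \cup D_2)$, I would argue as follows. By extensivity, $D_1 \subseteq \mathcal{C}(D_1)$, hence $D_1 \cup D_2 \subseteq \mathcal{C}(D_1) \cup D_2$. Applying monotonicity of $\mathcal{C}$ to this inclusion gives $\mathcal{C}(D_1 \cup D_2) \subseteq \mathcal{C}(\mathcal{C}(D_1) \cup D_2)$ directly.

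For the reverse inclusion $\mathcal{C}(\mathcal{C}(D_1) \cup D_2) \subseteq \mathcal{C}(D_1 \cup D_2)$, the key observation is that both $\mathcal{C}(D_1)$ and $D_2$ are contained in $\mathcal{C}(D_1 \cup D_2)$. Indeed, $D_1 \subseteq D_1 \cup D_2$ and $D_2 \subseteq D_1 \cup D_2$, so by monotonicity $\mathcal{C}(D_1) \subseteq \mathcal{C}(D_1 \cup D_2)$, while by extensivity $D_2 \subseteq D_1 \cup D_2 \subseteq \mathcal{C}(D_1 \cup D_2)$. Taking the union, $\mathcal{C}(D_1) \cup D_2 \subseteq \mathcal{C}(D_1 \cup D_2)$. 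Now apply $\mathcal{C}$ to both sides using monotonicity to obtain $\mathcal{C}(\mathcal{C}(D_1) \cup D_2) \subseteq \mathcal{C}(\mathcal{C}(D_1 \cup D_2))$, and finally invoke idempotency, $\mathcal{C}(\mathcal{C}(D_1 \cup D_2)) = \mathcal{C}(D_1 \cup D_2)$, to close the argument.

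There is no real obstacle here: this is a standard fact about closure operators and the proof is a short two-line symbol chase in each direction. The only point requiring any care is making sure idempotency is applied at the right place in the second inclusion; the first inclusion needs only extensivity and monotonicity.
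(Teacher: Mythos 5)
Your proof is correct and follows essentially the same route as the paper's: the easy inclusion via extensivity and monotonicity, and the reverse inclusion by showing $\mathcal{C}(D_1) \cup D_2 \subseteq \mathcal{C}(D_1 \cup D_2)$ and then applying monotonicity and idempotency. You have merely spelled out the steps the paper labels ``obvious'' in slightly more detail.
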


\begin{proof}
Obviously, $\mathcal{C}(D_1 \cup D_2) \subseteq \mathcal{C}(\mathcal{C}(D_1) \cup D_2)$. On the other hand $D_1,D_2 \subseteq D_1 \cup D_2$, hence $\mathcal{C}(D_1) \subseteq \mathcal{C}(D_1 \cup D_2)$ and $D_2 \subseteq \mathcal{C}(D_1 \cup D_2)$. This implies $\mathcal{C}(\mathcal{C}(D_1) \cup D_2) \subseteq \mathcal{C}(\mathcal{C}(D_1 \cup D_2)) = \mathcal{C}(D_1 \cup D_2)$.
\end{proof}


The most informative cases of coherent sets of gambles, i.e., coherent sets that are not proper subsets of other coherent sets, are called \textit{maximal}.
The following proposition provides a characterisation of such maximal elements~\citep{CooQua12}.
\begin{proposition}[\textbf{Maximal set of gambles}]
A coherent set of gambles $D$ is \emph{maximal} if and only if
\begin{equation*}
(\forall f \in \gambles \setminus \{0\})\ f \notin D \Rightarrow -f \in D.
\end{equation*}
\end{proposition}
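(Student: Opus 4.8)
The plan is to prove the two implications separately, using the natural extension operator together with the fact recalled above that, for $D'\subseteq\gambles(\Omega)$, $\mathcal{E}(D')$ is coherent if and only if $0\notin\mathcal{E}(D')$. Throughout I will also use that a coherent $D$ satisfies $\posi(D)=D$ (immediate from \ref{D3} and \ref{D4}) and $\gambles^+\subseteq D$ (from \ref{D1}), so that $\posi(D\cup\{f\}\cup\gambles^+)=\posi(D\cup\{f\})$.

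For the \emph{only if} direction, suppose $D$ is maximal and let $f\in\gambles\setminus\{0\}$ with $f\notin D$. Consider $\mathcal{E}(D\cup\{f\})$. Since $f\in\mathcal{E}(D\cup\{f\})$ but $f\notin D$, this set strictly contains $D$; by maximality it cannot be coherent, so $0\in\mathcal{E}(D\cup\{f\})=\posi(D\cup\{f\})$. Write $0=\sum_{j=1}^r\lambda_j g_j$ with $\lambda_j>0$ and $g_j\in D\cup\{f\}$. The element $f$ must occur among the $g_j$, for otherwise $0\in\posi(D)=D$, contradicting \ref{D2}. Collecting the $f$-terms and the $D$-terms, we get $\lambda f+h=0$ with $\lambda>0$ and $h$ a positive combination of elements of $D$; moreover $h$ is a genuine (nonempty) such combination, since $\lambda f\neq 0$. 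Hence $h\in\posi(D)=D$, and $-f=(1/\lambda)h\in D$ by \ref{D4}, as required.

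For the \emph{if} direction, assume $D$ is coherent and has the stated property, and suppose towards a contradiction that $D$ is not maximal, i.e.\ there is a coherent $D'$ with $D\subsetneq D'$. Pick $f\in D'\setminus D$. Since $D'$ is coherent, $0\notin D'$ by \ref{D2}, so $f\neq 0$. Applying the property to $f$ (nonzero, not in $D$) gives $-f\in D\subseteq D'$, whence $f+(-f)=0\in D'$ by \ref{D3}, contradicting \ref{D2} for $D'$. Therefore $D$ is maximal.

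The only slightly delicate point is the bookkeeping in the first direction: one must check that $f$ really appears with a strictly positive coefficient in the vanishing positive combination, and that the remaining terms form a nonempty positive combination of elements of $D$ (so that their sum lies in $D$). Both facts are forced by coherence of $D$ together with $f\neq 0$, so there is no essential obstacle; the whole argument is a one-step ``add-and-derive-a-contradiction'' separation argument analogous to the one behind the natural extension.
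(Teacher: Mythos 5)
Your proof is correct: the paper itself does not prove this proposition but only cites it from the literature (\cite{CooQua12}), and your argument is the standard one — adding $f$ to $D$, applying natural extension, and extracting $-f$ from the vanishing positive combination, with the converse direction being the easy contradiction via \ref{D2} and \ref{D3}. The bookkeeping you flag (that $f$ appears with positive coefficient and that the residual combination of $D$-elements is nonempty, hence lies in $\posi(D)=D$) is handled correctly, so there is nothing to fix.
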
 
We shall indicate maximal sets of gambles with $M$ to differentiate them from the general case of coherent sets.
These sets play an important role because of the following facts proved in~\cite{CooQua12}:
\begin{enumerate}
\item any coherent set of gambles is a subset of a maximal one,
\item any coherent set of gambles is the intersection of all maximal coherent sets it is contained in.
\end{enumerate}
For a discussion of the importance of maximal coherent sets of gambles in relation to information algebras, see Section~\ref{sec:Atoms}.

A further important class of coherent sets of gambles are the \textit{strictly desirable} ones. 
\begin{definition} [\textbf{Strictly desirable set of gambles}]
A coherent set of gambles $D$ is said to be \emph{strictly desirable} if and only if it satisfies
$(\forall f  \in D \setminus \gambles^+)(\exists \delta >0)\ f- \delta \in D$.
\end{definition}
We shall employ the notation $D^+$ for strictly desirable sets of gambles to differentiate them from the general case of coherent sets of gambles. 

So strictly desirable sets of gambles are coherent and form a subfamily of coherent sets of gambles. In what follows we will indicate with $C^+(\Omega)$, or simply $C^+$, the family of strictly desirable sets of gambles. Moreover, analogously as before, we define 
$\Phi^+(\Omega) \coloneqq C^+(\Omega) \cup \{\mathcal{L}(\Omega)\}$, which we can indicate also with $\Phi^+$ when there is no ambiguity.

Another important class of sets, which plays an important role in Section~\ref{sec:LowUpPrev}, is the class of \textit{almost desirable sets of gambles}~\citep{walley91}.

\begin{definition}[\textbf{Almost desirable set of gambles}]
We say that a subset $\overline{D}$ of $\gambles(\Omega)$ is an \emph{almost desirable} set of gambles if and only if $\overline{D}$ satisfies the following properties:
\begin{enumerate}[label=\upshape D\arabic*'.,ref=\upshape D\arabic*']
\item\label{D1'}  $\inf f > 0$ implies $f \in \overline{D}$ [Accepting Sure Gains],
\item\label{D2'} $f \in \overline{D}$ implies $\sup f \geq 0$ [Avoiding Sure Loss],
\item\label{D3'} $f,g \in \overline{D} \Rightarrow f+g \in \overline{D}$ [Additivity],
\item\label{D4'} $f \in \overline{D}, \lambda>0 \Rightarrow \lambda f \in \overline{D}$ [Positive Homogeneity],
\item\label{D5'} $f + \delta \in \overline{D}$ for all $\delta > 0$ implies $f \in \overline{D}$ [Closure].
\end{enumerate}
\end{definition}
Such a set is no more coherent since it contains $f = 0$. We shall employ the notation $\overline{D}$ for almost desirable sets of gambles to differentiate them from coherent ones.

Let us indicate with $\overline{C}(\Omega)$, or simply $\overline{C}$, the family of almost desirable sets of gambles.
We remark that $\overline{C}(\Omega)$ again forms a $\cap$-system, still topped by $\mathcal{L}(\Omega)$.
Therefore if we add, as before, $\mathcal{L}(\Omega)$ to  $\overline{C}(\Omega)$ then $\overline{\Phi}(\Omega) \coloneqq \overline{C}(\Omega) \cup \{\mathcal{L}(\Omega)\}$, which we can indicate also with $\overline{\Phi}$, is a complete lattice too.

So we may define the operator
\begin{eqnarray*}
\overline{\mathcal{C}}(D) \coloneqq \bigcap \{\overline{D'} \in \overline{\Phi} :D \subseteq \overline{D'}\},
\end{eqnarray*}
that  is still a closure operator on sets of gambles.

Coherent set of gambles, strictly desirable sets of gambles and almost desirable ones encompass also a probabilistic model for $\Omega$, made of lower and upper expectations, called previsions after De Finetti~\citep{walley91, troffaes2014}.
We examine them as well from the point of view of information algebras in Section~\ref{sec:LowUpPrev}.

So far we have considered sets of gambles in $\mathcal{L}(\Omega)$ relative to a fixed set of possibilities $\Omega$. Next some more structure among sets of possibilities is introduced related to a family of variables and their possible values. This is the base for relating coherent sets of gambles to information algebras.


\section{Domain-Free Information Algebra} \label{secDomFreeInfAlg}

We may consider a coherent set of gambles as a piece of information about $\Omega$. 
We limit the analysis to the case in which information one is interested in concerns the values of certain groups of
variables. In this optic, coherent sets $D$ represent someone's evaluations of the `chances' of the elements of $\Omega$ to be possible answers to questions regarding them.
We assume therefore, a special form for the possibility space $\Omega$, namely a multivariate model. 

Let $I$ be an index set of variables $X_i, i \in I$. In practice, $I$ is often assumed to be
finite or countable. But we need not make this restriction. Any variable $X_i$ has a domain of possible values $\Omega_i$. For any subset $S$ of $I$ let
\begin{eqnarray*}
\Omega_S \coloneqq \bigtimes_{i \in S} \Omega_i,
\end{eqnarray*}
and $\Omega = \Omega_I$.\footnote{If needed, we assume the axiom of choice.}
We consider coherent sets of gambles on $\Omega$, or rather $\Phi$. The elements $\omega$ in $\Omega$ can be seen as functions $\omega : I \rightarrow \Omega$, so that $\omega_i \in \Omega_i$, for any $i \in I$. A gamble $f$ on $\Omega$ is called $S$-measurable, if for all $\omega,\omega' \in \Omega$ with $\omega \vert S = \omega' \vert S$ we have $f(\omega) = f(\omega')$ (here $\omega \vert S$ is the restriction of the map $\omega$ to $S$). Let $\mathcal{L}_S(\Omega)$, or more simply $\mathcal{L}_S$, denote the set of all $S$-measurable gambles. If $I= \emptyset$, $\gambles(\Omega_I) = \mathbb{R}$, the set of constant gambles~\citep[Section~2.3]{decooman2011}, moreover, we can define also $\mathcal{L}_\emptyset \coloneqq \mathbb{R}$, 
and note that $\mathcal{L}_I = \mathcal{L}(\Omega)$. For further reference  we have the following result.

\begin{lemma} \label{le:Measurability}
For any subsets $S$ and $T$ of $I$:
\begin{eqnarray*}
\mathcal{L}_{S \cap T} = \mathcal{L}_S \cap \mathcal{L}_T.
\end{eqnarray*}
\end{lemma}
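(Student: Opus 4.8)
The plan is to prove the two inclusions separately; the first is immediate from the definition of measurability, while the second calls for a standard ``splicing'' argument on product points.

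For $\mathcal{L}_{S \cap T} \subseteq \mathcal{L}_S \cap \mathcal{L}_T$, I would take an $(S\cap T)$-measurable gamble $f$ and any $\omega,\omega' \in \Omega$ with $\omega \vert S = \omega' \vert S$. Then a fortiori $\omega \vert (S\cap T) = \omega' \vert (S \cap T)$, so $f(\omega) = f(\omega')$; hence $f \in \mathcal{L}_S$, and by the symmetric argument $f \in \mathcal{L}_T$. This direction requires no real work.

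For the reverse inclusion $\mathcal{L}_S \cap \mathcal{L}_T \subseteq \mathcal{L}_{S \cap T}$, I would fix $f \in \mathcal{L}_S \cap \mathcal{L}_T$ and $\omega,\omega' \in \Omega$ with $\omega \vert (S \cap T) = \omega' \vert (S\cap T)$, and aim to show $f(\omega)=f(\omega')$. The key step is to introduce the element $\omega'' \in \Omega$ defined coordinatewise by $\omega''_i \coloneqq \omega_i$ for $i \in S$ and $\omega''_i \coloneqq \omega'_i$ for $i \in I \setminus S$ — this is well defined precisely because $\Omega$ is a product space. By construction $\omega'' \vert S = \omega \vert S$, so $S$-measurability gives $f(\omega'') = f(\omega)$; and $\omega'' \vert T = \omega' \vert T$, since for $i \in T \setminus S$ this holds by definition of $\omega''$ and for $i \in T \cap S$ it holds because $\omega_i = \omega'_i$ by the hypothesis on $\omega,\omega'$. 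Hence $T$-measurability gives $f(\omega'') = f(\omega')$, and chaining the two equalities yields $f(\omega) = f(\omega')$, as desired.

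I do not expect a genuine obstacle here; the only point needing a moment's care is verifying that the spliced point $\omega''$ simultaneously agrees with $\omega$ on $S$ and with $\omega'$ on $T$, which is exactly where the assumption $\omega \vert (S\cap T) = \omega' \vert (S \cap T)$ enters. The degenerate cases ($S = \emptyset$, $T = \emptyset$, or $I = \emptyset$) are subsumed by the same argument, recalling $\mathcal{L}_\emptyset = \mathbb{R} \subseteq \mathcal{L}_T$ for every $T$.
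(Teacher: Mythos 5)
Your proposal is correct and follows essentially the same route as the paper: the easy inclusion from the definition, and for the reverse inclusion a spliced point agreeing with $\omega$ on $S$ and with $\omega'$ on $T$ (the paper's $\lambda$ differs from your $\omega''$ only on $(S\cup T)^c$, which is immaterial). No gaps.
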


\begin{proof}
If one of the two subsetes is empty we immediatly have the result. Otherwise, 
assume first $f \in \mathcal{L}_{S \cap T}$. Consider two elements $\omega,\mu \in \Omega$ so that $\omega \vert S = \mu \vert S$. Then we have also $\omega \vert S \cap T = \mu \vert S \cap T$ and $f(\omega) = f(\mu)$. So we see that $f \in \mathcal{L}_S$ and similarly $f \in \mathcal{L}_T$.

Conversely, assume $f \in \mathcal{L}_S \cap \mathcal{L}_T$. Consider two elements $\omega, \; \mu \in \Omega$. so that $\omega \vert S \cap T = \mu \vert S \cap T$. Consider then the element $\lambda \in \Omega$ defined as
\begin{eqnarray*}
\lambda_i \coloneqq \left\{ \begin{array}{ll} \omega_i = \mu_i, & i \in (S \cap T) , \\ \omega_i, & i \in (S \setminus T) \cup (S \cup T)^c, \\ \mu_i, & i \in T \setminus S \end{array} \right.
\end{eqnarray*}
for every $i \in I$.
Then $\lambda \vert S = \omega \vert S$ and $\lambda \vert T = \mu \vert T$. Since $f$ is both $S$- and $T$-measurable we have $f(\omega) = f(\mu)$. It follows that $f \in \mathcal{L}_{S \cap T}$ and this concludes the proof.
\end{proof}

Now we define in $\Phi$ the following operations.
\begin{enumerate}
\item Combination: $D_1 \cdot D_2 \coloneqq D_1 \vee D_2 \coloneqq \mathcal{C}(D_1 \cup D_2)$.
\item Extraction: $\epsilon_S(D) \coloneqq \mathcal{C}(D \cap \mathcal{L}_S)$, for every $S \subseteq I$.
\end{enumerate}
Let $\mathcal{E}_S(D) \coloneqq \mathcal{C}(D) \cap \mathcal{L}_S$, for any set of gambles $D$. Then, if $D$ is coherent, we have
\begin{eqnarray*}
\epsilon_S(D) = \mathcal{C}(\mathcal{E}_S(D)).
\end{eqnarray*}

If we consider coherent sets of gambles as pieces of information (i.e., someone's beliefs about an experiment with outcomes in $\Omega$), then combination represents the aggregation of two or more pieces of information and extraction the marginalization of information to a subset of variables. Note that $D_1 \cdot D_2 = \mathcal{L}(\Omega)$ for some $D_1, D_2 \in \Phi$, means that the two sets $D_1$ and $D_2$ are not \emph{consistent}, that is, $\mathcal{E}(D_1 \cup D_2)$ is not coherent (see Section~\ref{sec:Comp}). So, clearly $\mathcal{L}(\Omega)$ is the null element of combination and  represents \emph{inconsistency}. The set $\mathcal{L}^+(\Omega)$ instead, is the unit element of combination, representing vacuous information. We claim that $\Phi$ equipped with these two operations satisfies the properties collected in the following theorem.

\begin{theorem} \label{th:DomFreeInfAlg}
\begin{enumerate}
\item $(\Phi;\cdot)$ is a commutative semigroup with a null $0 = \gambles$ and unit $1= \gambles^+$.
\item For any subset $S \subseteq I$ and $D,D_1,D_2 \in \Phi$: 
\begin{description}
\item[E1] $\epsilon_S(0) = 0$,
\item[E2] $\epsilon_S(D) \cdot D = D$,
\item[E3] $\epsilon_S(\epsilon_S(D_1) \cdot D_2) = \epsilon_S(D_1) \cdot \epsilon_S(D_2)$.
\end{description}
\item For any $S,T \subseteq I$ and any $D \in \Phi$, $(\epsilon_S \circ \epsilon_T) (D) = (\epsilon_T \circ \epsilon_S) (D) = \epsilon_{S \cap T} (D)$.
\item For any $D \in \Phi$, $\epsilon_I(D)=D$.
\end{enumerate}
\end{theorem}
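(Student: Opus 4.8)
The plan is to verify the four clauses directly from the definitions, exploiting two facts about $\Phi$: its elements are precisely the $\mathcal{C}$-closed sets, so $\mathcal{C}(D)=D$ for $D\in\Phi$ and every value of $\mathcal{C}$ lies in $\Phi$; and $\mathcal{C}(D)$ is a coherent set (equal to $\mathcal{E}(D)=\posi(D\cup\gambles^+)$) as soon as it is not $\gambles(\Omega)$. Together with Lemma~\ref{le:UnionConsOp} and Lemma~\ref{le:Measurability} these are the only tools needed. For clause~1: commutativity is immediate since $\cup$ is commutative; associativity follows by applying Lemma~\ref{le:UnionConsOp} twice to collapse $(D_1\cdot D_2)\cdot D_3$ and $D_1\cdot(D_2\cdot D_3)$ both to $\mathcal{C}(D_1\cup D_2\cup D_3)$; $\gambles$ is a null because it is the top of $\Phi$, so $\mathcal{C}(\gambles\cup D)=\gambles$; and $\gambles^+$ is a unit once one checks $\gambles^+\in\Phi$ (it satisfies \ref{D1}--\ref{D4}) and notes that $\gambles^+\subseteq D$ for every coherent $D$ by \ref{D1} (and $\gambles^+\subseteq\gambles$ trivially), whence $\gambles^+\cdot D=\mathcal{C}(\gambles^+\cup D)=\mathcal{C}(D)=D$.

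For clause~2 I would first record, for $D\in\Phi$ and $S\subseteq I$: that $\epsilon_S(D)=\mathcal{C}(D\cap\gambles_S)\subseteq\mathcal{C}(D)=D$; that $\epsilon_S$ is monotone; and hence that $\epsilon_S(D)\cap\gambles_S=D\cap\gambles_S$, which makes $\epsilon_S$ idempotent. Then E1 holds because $0\in\gambles_S$ forces $\mathcal{C}(\gambles_S)=\gambles$, so $\epsilon_S(\gambles)=\mathcal{C}(\gambles\cap\gambles_S)=\gambles$; and E2 is $\epsilon_S(D)\cdot D=\mathcal{C}(\epsilon_S(D)\cup D)=\mathcal{C}(D)=D$, using $\epsilon_S(D)\subseteq D$. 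The real work is E3. After handling $D_1=\gambles$ or $D_2=\gambles$ (both sides become $\gambles$), assume $D_1,D_2$ coherent; writing $A:=D_1\cap\gambles_S$ and using Lemma~\ref{le:UnionConsOp}, the two sides become $\mathcal{C}(\mathcal{C}(A\cup D_2)\cap\gambles_S)$ and $\mathcal{C}(A\cup(D_2\cap\gambles_S))$. The inclusion ``right~$\subseteq$~left'' is soft: the left side contains both $\epsilon_S(\epsilon_S(D_1))=\epsilon_S(D_1)$ and $\epsilon_S(D_2)$, hence their combination. For ``left~$\subseteq$~right'' it suffices to prove $\mathcal{C}(A\cup D_2)\cap\gambles_S\subseteq\mathcal{C}(A\cup(D_2\cap\gambles_S))$, and this reduces to the set-theoretic claim that $\posi(A\cup B)\cap\gambles_S\subseteq\posi(A\cup(B\cap\gambles_S))$ whenever $A\subseteq\gambles_S$ and $B$ is closed under positive linear combinations: a conic combination $h=u+v$ (the $u$-part from $A$, the $v$-part from $B$) that is $S$-measurable has $u\in\gambles_S$, so $v=h-u\in\gambles_S$, and $v\in\posi(B)\subseteq B$, so $v\in B\cap\gambles_S$. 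Since $D_2$ is coherent one has $\gambles^+\subseteq D_2$, so $\mathcal{C}(A\cup D_2)$ equals $\posi(A\cup D_2)$ when coherent and otherwise contains $0\in\posi(A\cup D_2)$ (forcing the right side to $\gambles$ as well); the claim covers both subcases.

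Clause~4 is immediate from $\gambles_I=\gambles(\Omega)$: $\epsilon_I(D)=\mathcal{C}(D\cap\gambles)=\mathcal{C}(D)=D$. For clause~3, by symmetry it is enough to prove $\epsilon_S(\epsilon_T(D))=\epsilon_{S\cap T}(D)$, and after disposing of $D=\gambles$ we may take $D$ coherent. Using Lemma~\ref{le:Measurability} ($\gambles_{S\cap T}=\gambles_S\cap\gambles_T$), the inclusion ``$\supseteq$'' follows from $D\cap\gambles_{S\cap T}\subseteq\mathcal{C}(D\cap\gambles_T)\cap\gambles_S$ and monotonicity. For ``$\subseteq$'' it suffices to show $\epsilon_T(D)\cap\gambles_S\subseteq\epsilon_{S\cap T}(D)$; writing a member $h$ of the left side as $h=u+q$ with $u\in(D\cap\gambles_T)\cup\{0\}$ and $q\geq 0$ (from $\epsilon_T(D)=\posi((D\cap\gambles_T)\cup\gambles^+)$), the crux is a \emph{sup-marginalisation}: from the $T$-measurable $u$ form $\hat u(\omega):=\sup\{u(\rho):\rho\vert(S\cap T)=\omega\vert(S\cap T)\}$. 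One checks, by an extension-of-partial-assignments argument of the kind used in the proof of Lemma~\ref{le:Measurability} (and using boundedness of $u$), that $\hat u$ is an $(S\cap T)$-measurable gamble with $u\leq\hat u\leq h$ --- here $T$-measurability of $u$ and $S$-measurability of $h$ are both needed. Then $\hat u-u\in\gambles^+\cup\{0\}$, so $\hat u=u+(\hat u-u)\in D$ by coherence, i.e.\ $\hat u\in D\cap\gambles_{S\cap T}$; and $h=\hat u+(h-\hat u)$ with $h-\hat u\in\gambles^+\cup\{0\}$ lies in $\posi((D\cap\gambles_{S\cap T})\cup\gambles^+)=\epsilon_{S\cap T}(D)$. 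The degenerate cases $u=0$ (so $h\in\gambles^+\subseteq\epsilon_{S\cap T}(D)$) and $h=\hat u$ are handled directly.

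I expect the sup-marginalisation step in clause~3 to be the main obstacle: it is the only point where the concrete structure of gambles --- boundedness and the interplay of $S$-, $T$- and $(S\cap T)$-measurability --- is used rather than the abstract behaviour of the consequence operator, and some care is needed to see that $\hat u$ is a genuine (bounded) gamble that falls back inside $D$. The conic-combination splitting behind the claim used for E3 is the secondary delicate point; everything else is bookkeeping with $\mathcal{C}$, Lemma~\ref{le:UnionConsOp} and Lemma~\ref{le:Measurability}.
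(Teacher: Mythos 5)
Your proof is correct and follows essentially the same route as the paper's: the same measurability-splitting of conic combinations for E3 (including the separate treatment of the case where one side is the null element), and the same sup-marginalisation device for clause~3. The only difference is cosmetic: you take the supremum over $(S\cap T)$-equivalence classes and use the extension-of-partial-assignments argument to establish $\hat u\leq h$, whereas the paper takes the supremum over $S$-equivalence classes and uses that same argument to prove $T$-measurability of the resulting gamble --- the two placements of the work are interchangeable.
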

 
\begin{proof}
\begin{enumerate}
    \item That $(\Phi,\cdot)$ is a commutative semigroup follows from $D_1 \cdot D_2 \coloneqq D_1 \vee D_2$, for any $D_1,D_2$ in the complete lattice $\Phi$. 
    As stated above, $0 = \mathcal{L}(\Omega)$ is the null element and $1 = \mathcal{L}^+(\Omega)$ the unit element of the semigroup (null and unit in a semigroup are always unique). 
    \item For E1 we have
\begin{eqnarray*}
\epsilon_S(0) = \epsilon_S(\mathcal{L}(\Omega)) \coloneqq \mathcal{C}(\mathcal{L}(\Omega) \cap \mathcal{L}_S) = \mathcal{C}(\mathcal{L}_S) = \mathcal{L}(\Omega) = 0,
\end{eqnarray*}
for any $S \subseteq I$.

E2 follows since $D \cap \mathcal{L}_S \subseteq D$ and $C(D \cap \mathcal{L}_S) \subseteq D$, for any $D \in \Phi$, $S \subseteq I$.

To prove E3 define, using Lemma~\ref{le:UnionConsOp},
\begin{eqnarray*}
A &\coloneqq& \mathcal{C}(\mathcal{C}(D_1 \cap \mathcal{L}_S) \cup D_2) \cap \mathcal{L}_S = \mathcal{C}((D_1 \cap \mathcal{L}_S) \cup D_2) \cap \mathcal{L}_S, \\
B &\coloneqq& \mathcal{C}(\mathcal{C}(D_1 \cap \mathcal{L}_S) \cup (\mathcal{C}(D_2 \cap \mathcal{L}_S)) = \mathcal{C}((D_1 \cap \mathcal{L}_S) \cup (D_2 \cap \mathcal{L}_S)).
\end{eqnarray*}
Then we have $B \coloneqq \epsilon_S(D_1) \cdot \epsilon_S(D_2)$ and $\mathcal{C}(A) \coloneqq \epsilon_S(\epsilon_S(D_1) \cdot D_2)$. Note that $B \subseteq \mathcal{C}(A)$. 

We claim first that:
\begin{equation}
     \epsilon_S(D_1) \cdot \epsilon_S(D_2) =0 \iff \epsilon_S(D_1) \cdot D_2 =  0.
\end{equation} Indeed, $\epsilon_S(D_1) \cdot \epsilon_S(D_2)=0$ implies a fortiori $\epsilon_S(D_1) \cdot D_2=0$. 

Assume therefore that $\epsilon_S(D_1) \cdot D_2 = 0$. 
This implies $0 = \mathcal{C}( \mathcal{C}(D_1 \cap \mathcal{L}_S) \cup D_2)= \mathcal{C}( (D_1 \cap \mathcal{L}_S) \cup D_2)$, by Lemma~\ref{le:UnionConsOp}. Now, if $D_1= \mathcal{L}(\Omega)$ or $D_2= \mathcal{L}(\Omega)$ we have immediately the result, otherwise we claim that $0 = f + g'$ with $f \in D_1 \cap \mathcal{L}_S$ and $g' \in D_2 \cap \mathcal{L}_S$.
Indeed, from $0 = \mathcal{C}( (D_1 \cap \mathcal{L}_S) \cup D_2)$, we know that $0 = f+g+h'$ with $f \in D_1 \cap \mathcal{L}_S$, $g \in D_2$, $h' \in \mathcal{L}^+(\Omega) \subseteq D_2$ or $h' =0$. Then, if we introduce $g'=g+h'$, we have $0 = f+g'$ with $f \in D_1 \cap \mathcal{L}_S$, $g' \in D_2$. However, this implies $g'= -f \in \mathcal{L}_S$ and then $g' \in D_2 \cap \mathcal{L}_S$.
Notice that $\epsilon_S(D_1) \cdot \epsilon_S(D_2) \eqqcolon B = \mathcal{C}((D_1 \cap \mathcal{L}_S) \cup (D_2 \cap \mathcal{L}_S))$. Therefore, we have the result.


So, we may now assume that both $\epsilon_S(D_1) \cdot D_2$ and $\epsilon_S(D_1) \cdot \epsilon_S(D_2)$ are coherent. Then we have
\begin{eqnarray*}
A = \{f \in \mathcal{L}_S:f \geq \lambda g + \mu h,g \in D_1 \cap \mathcal{L}_S,h \in D_2,\lambda,\mu \geq 0,f \not= 0\}.
\end{eqnarray*}

Consider $f \in A$. Then $f = \lambda g + \mu h + h'$, where $h' \in \mathcal{L}^+(\Omega) \cup \{0\}$. Since $f$ and $g$ are $S$-measurable, $\mu h + h'$ must be $S$-measurable.
Now, if $\mu h + h'=0$ then $f \in D_1 \cap \gambles_S \subseteq B$. Otherwise,
$\mu h + h' \in D_2 \cap \mathcal{L}_S$. So in any case $f \in B$, hence we have $\mathcal{C}(A) \subseteq C(B)=B$.
\item Note first that $\epsilon_S \circ \epsilon_T(D) = 0$ and $\epsilon_{S \cap T}(D) = 0$ if and only if $D = 0$. So assume $D$ to be coherent. Then we have
\begin{eqnarray*}
\epsilon_S(\epsilon_T(D)) &\coloneqq& \mathcal{C}(C(D \cap \mathcal{L}_T) \cap \mathcal{L}_S), \\
\epsilon_{S \cap T}(D) &\coloneqq& \mathcal{C}(D \cap \mathcal{L}_{S \cap T}) = \mathcal{C}(D \cap \mathcal{L}_T \cap \mathcal{L}_S).
\end{eqnarray*}
Obviously, $\epsilon_{S \cap T}(D) \subseteq \epsilon_S(\epsilon_T(D))$. Consider then $f \in C(D \cap \mathcal{L}_T) \cap \mathcal{L}_S$. If $f \in \gambles^+_S$ then clearly $f \in \epsilon_{S \cap T}(D) $. Otherwise, 
\begin{eqnarray*}
f \in \mathcal{L}_S, \quad f \geq g,\quad g \in D \cap \mathcal{L}_T.
\end{eqnarray*}
Define
\begin{eqnarray*}
g'(\omega) \coloneqq \sup_{\lambda \vert S = \omega \vert S} g(\lambda).
\end{eqnarray*}
Then we have $f \geq g'$. Clearly $g'$ is $S$-measurable and belongs to $D$, $g' \in D \cap \mathcal{L}_S$. We claim that $g'$ is also $T$-measurable. Consider two elements $\omega$ and $\mu$ so that $\omega \vert S \cap T = \mu \vert S \cap T$. Note that we may write
\begin{eqnarray*}
g'(\omega) \coloneqq \sup_{\lambda \vert S = \omega \vert S} g(\lambda) = \sup_{\lambda \vert I \setminus S} g(\omega \vert S \cap T,\omega \vert S \setminus T,\lambda \vert T \setminus S,\lambda \vert R),
\end{eqnarray*}
where $R = (S \cup T)^c$. Similarly, we have
\begin{eqnarray*}
g'(\mu) \coloneqq \sup_{\lambda' \vert S = \mu \vert S} g(\lambda') = \sup_{\lambda' \vert I \setminus S} g(\omega \vert S \cap T,\mu \vert S \setminus T,\lambda' \vert T \setminus S,\lambda' \vert R).
\end{eqnarray*}

Since $g$ is $T$-measurable, we have:
\begin{eqnarray*}
g'(\mu) = \sup_{\lambda' \vert I \setminus S} g(\omega \vert S \cap T,\omega \vert S \setminus T,\lambda' \vert T \setminus S,\lambda' \vert R),
\end{eqnarray*}
that clearly coincides with $g'(\omega)$.

This shows that $g'$ is $S \cap T$-measurable, therefore both $S$- and $T$-measurable by Lemma~\ref{le:Measurability}. So we have $g' \in D \cap \mathcal{L}_S \cap \mathcal{L}_T$, hence $f \in \mathcal{C}(D \cap \mathcal{L}_T \cap \mathcal{L}_S)$. 

\item It is obvious.
\end{enumerate}

\end{proof}

A system with two operations satisfying the conditions of Theorem~\ref{th:DomFreeInfAlg} is called a \emph{domain-free information algebra}~\citep{kohlas03}, or more precisely a \emph{commutative domain-free information algebra}~\citep{kohlasschmid16}. In this case, with a little abuse of notation, we call $\Phi$ with the two operations defined above, the \emph{domain-free information algebra of coherent sets of gambles}. There is an alternative, equivalent version, a so-called \emph{labeled information algebra}, which may derived from it, see the next section. In any information algebra an \textit{information order} can be introduced. In the case of coherent sets of gambles, this order is defined as $D_1 \leq D_2$ if $D_1 \cdot D_2 = D_2$. This means that $D_2$ is more informative than $D_1$ if adding $D_1$ gives nothing new; $D_1$ is already contained in $D_2$. It is easy to verify that it is a partial order and also that $D_1 \leq D_2$ if and only if $D_1 \subseteq D_2$. 
So, in particular, $(\Phi,\leq)$ is a complete lattice under information order, since information order corresponds to set inclusion. Moreover,  combination corresponds to join, $D_1 \cdot D_2 = D_1 \vee D_2$, vacuous information $\gambles^+(\Omega)$ is the least information in this order and $\gambles(\Omega)$ the top element (although strictly speaking it is no more a piece of information, since it represents inconsistency).
Conditions E1 to E3 in Theorem~\ref{th:DomFreeInfAlg} can also be rewritten using this order as the following.
For any subset $S \subseteq I$ and $D,D_1,D_2 \in \Phi$:
\begin{description}
\item[E1] $\epsilon_S(0) = 0$,
\item[E2] $\epsilon_S(D) \leq D$, 
\item[E3] $\epsilon_S(\epsilon_S(D_1) \vee D_2) = \epsilon_S(D_1) \vee \epsilon_S(D_2)$.
\end{description}
In algebraic logic such an operator is also called an \textit{existential quantifier}.\footnote{Although usually operators on a Boolean lattice are considered and the order is inverse to the information order.}

We further claim that extraction distributes over intersection (or meet in the complete lattice).

\begin{theorem} \label{th:ExtrMeet}
Let  $D_j$, $j \in J$ be any family of sets of gambles in $\Phi$ and $S$ any subset of variables. Then
\begin{eqnarray} \label{eq:ExzrMeet}
\epsilon_S(\bigcap_{j \in J} D_j) = \bigcap_{j \in J} \epsilon_S(D_j).
\end{eqnarray}
\end{theorem}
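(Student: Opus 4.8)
The plan is to prove the two inclusions separately, with the forward inclusion being essentially routine and the reverse one requiring the real work. For the inclusion $\epsilon_S(\bigcap_{j \in J} D_j) \subseteq \bigcap_{j \in J} \epsilon_S(D_j)$, I would argue by monotonicity: for each fixed $j$ we have $\bigcap_{k \in J} D_k \subseteq D_j$, hence $\bigcap_k D_k \cap \mathcal{L}_S \subseteq D_j \cap \mathcal{L}_S$, and applying the consequence operator $\mathcal{C}$ (which is monotone) gives $\epsilon_S(\bigcap_k D_k) \subseteq \epsilon_S(D_j)$; since this holds for every $j$, it holds for the intersection over $j$. This uses only monotonicity of $\mathcal{C}$ and the definition $\epsilon_S(D) = \mathcal{C}(D \cap \mathcal{L}_S)$.

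For the reverse inclusion $\bigcap_{j \in J} \epsilon_S(D_j) \subseteq \epsilon_S(\bigcap_{j \in J} D_j)$, I would first observe that it suffices to handle the case where every $D_j$ is coherent (if some $D_j = \mathcal{L}(\Omega)$ it can be dropped from both sides without changing either intersection; if all are $\mathcal{L}(\Omega)$ the statement is trivial, and $J = \emptyset$ is also trivial). So assume each $D_j$ is coherent and recall that then $\epsilon_S(D_j) = \mathcal{C}(\mathcal{E}_S(D_j)) = \mathcal{C}(D_j \cap \mathcal{L}_S)$, where $D_j \cap \mathcal{L}_S$ is already a $\mathcal{C}$-closed set relative to $\mathcal{L}_S$ in the sense that its elements are exactly the $S$-measurable gambles of $D_j$. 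Take $f \in \bigcap_j \epsilon_S(D_j)$. If $f \in \mathcal{L}^+_S$ then $f \in \mathcal{C}(\bigcap_j D_j \cap \mathcal{L}_S)$ trivially, so assume $f \notin \mathcal{L}^+$. For each $j$, since $f \in \mathcal{C}(D_j \cap \mathcal{L}_S) = \posi((D_j \cap \mathcal{L}_S) \cup \mathcal{L}^+)$, there is a decomposition showing $f \geq g_j$ with $g_j \in D_j \cap \mathcal{L}_S$ (absorbing the nonnegative part); the key point to extract is that $f$ itself, being $S$-measurable and dominating $g_j \in D_j$, lies in $D_j$ (by coherence, $D_j$ contains all gambles $\geq$ one of its members — this follows from \ref{D1} and \ref{D3}) and is $S$-measurable, hence $f \in D_j \cap \mathcal{L}_S$. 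This holds for every $j$, so $f \in \bigcap_j D_j \cap \mathcal{L}_S \subseteq \epsilon_S(\bigcap_j D_j)$.

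The main obstacle — though it turns out to be mild — is verifying carefully that membership of $f$ in $\mathcal{C}(D_j \cap \mathcal{L}_S)$ forces $f \in D_j \cap \mathcal{L}_S$ itself, rather than only $f \geq g_j$ for some $g_j$ in that set. This is exactly where coherence of $D_j$ is used and where the identity $\epsilon_S(D_j) = \mathcal{C}(D_j \cap \mathcal{L}_S)$ with $D_j \cap \mathcal{L}_S$ already coherent-in-$\mathcal{L}_S$ matters: the closure operator $\mathcal{C}$ adds nothing to a set that is already the $S$-measurable trace of a coherent set, beyond possibly collapsing to $\mathcal{L}(\Omega)$, which is ruled out here since $f \neq 0$ and each $\epsilon_S(D_j)$ is coherent by \ref{E2} (as $\epsilon_S(D_j) \leq D_j \neq \mathcal{L}(\Omega)$). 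Once this is pinned down, both inclusions combine to give \eqref{eq:ExzrMeet}.
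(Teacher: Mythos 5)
Your forward inclusion via monotonicity of $\mathcal{C}$ is fine (and a bit cleaner than the paper's explicit $\posi$ computation). The reverse inclusion, however, has a genuine gap: in the main case you assert that $f \in \bigcap_{j}\epsilon_S(D_j)$ with $f \notin \mathcal{L}^+$ is itself $S$-measurable, and conclude $f \in D_j \cap \mathcal{L}_S$ for every $j$. That assertion is false. By definition $\epsilon_S(D_j) = \mathcal{C}(D_j \cap \mathcal{L}_S) = \posi\bigl((D_j \cap \mathcal{L}_S) \cup \mathcal{L}^+(\Omega)\bigr)$, and the adjoined cone $\mathcal{L}^+(\Omega)$ consists of the non-negative non-zero gambles on all of $\Omega$, not just the $S$-measurable ones; so $\epsilon_S(D_j)$ contains many gambles outside $\mathcal{L}_S$, some of them outside $\mathcal{L}^+$ as well. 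Concretely, with two binary variables, $S=\{1\}$, $g(\omega_1,\omega_2) = 2\omega_1-1 \in \mathcal{L}_S$ and $D = \mathcal{E}(\{g\})$ coherent, the gamble $f = g + h$ with $h(\omega_1,\omega_2)=\omega_2/2 \in \mathcal{L}^+(\Omega)$ lies in $\epsilon_S(D)$, satisfies $f(0,0)=-1<0$ so $f\notin\mathcal{L}^+$, and is not $S$-measurable. Your closing remark that ``$\mathcal{C}$ adds nothing to the $S$-measurable trace of a coherent set beyond possibly collapsing to $\mathcal{L}(\Omega)$'' is the same misconception: $\mathcal{C}$ always adjoins at least $\mathcal{L}^+(\Omega)$ and every gamble dominating a member of $D_j\cap\mathcal{L}_S$. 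Without $S$-measurability of $f$ your argument only yields $f\in\bigcap_j D_j$, which does not give $f\in\epsilon_S(\bigcap_j D_j)$, since extraction shrinks a set ($\epsilon_S(D)\subseteq D$).

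The repair is the step your write-up skips and is exactly what the paper does. From $f\in\epsilon_S(D_j)$ and $f\notin\mathcal{L}^+$ you correctly extract $g_j \in D_j\cap\mathcal{L}_S$ with $f\ge g_j$; now pass to $g:=\sup_{j\in J} g_j$ (pointwise). This $g$ is bounded (squeezed between any fixed $g_{j_0}$ and $f$), is $S$-measurable as a pointwise supremum of $S$-measurable gambles, and satisfies $g\ge g_j$ for every $j$, so by the upward closure of coherent sets that you already invoked (\ref{D1} and \ref{D3}) it belongs to every $D_j$; hence $g\in(\bigcap_j D_j)\cap\mathcal{L}_S$, and $f\ge g$ then gives $f\in\mathcal{C}\bigl((\bigcap_j D_j)\cap\mathcal{L}_S\bigr)=\epsilon_S(\bigcap_j D_j)$. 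It is the auxiliary $S$-measurable minorant $g$, not $f$ itself, that lands in $(\bigcap_j D_j)\cap\mathcal{L}_S$.
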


\begin{proof}
We may assume that $D_j \in C(\Omega)$ for all $j \in J$ since if some or all $D_j = \mathcal{L}(\Omega)$, then we may restrict the intersection on both sides over the set $D_j \in C(\Omega)$, or the intersection over both sides equals $\mathcal{L}(\Omega)$. If $D_j \in C(\Omega)$ for all $j \in J$, we have 
\begin{eqnarray*}
\epsilon_S(\bigcap_{j \in J} D_j) &=& \mathcal{E}((\bigcap_{j \in J} D_j) \cap \mathcal{L}_S) = \posi(\mathcal{L}^+(\Omega) \cup ((\bigcap_{j \in J} D_j) \cap \mathcal{L}_S)), \\
\bigcap_{j \in J} \epsilon_S(D_j) &=& \bigcap_{j \in J} \mathcal{E}(D_j \cap \mathcal{L}_S) = \bigcap_{j \in J} \posi(\mathcal{L}^+(\Omega) \cup (D_j \cap \mathcal{L}_S)).
\end{eqnarray*}
Consider first a gamble $f \in \epsilon_S(\bigcap_{j \in J} D_j)$ so that $f = \lambda g + \mu h$, where $\lambda,\mu$ nonnegative and not both equal zero, $g \in (\bigcap_{j \in J} D_j) \cap \mathcal{L}_S$ and $h \in \mathcal{L}^+(\Omega)$. But then $g \in D_j \cap \mathcal{L}_S$ for all $j \in J$, so that $f \in  \bigcap_{j \in J} \epsilon_S(D_j)$.

Conversely, assume $f \in  \bigcap_{j \in J} \epsilon_S(D_j)$. If $f \in \gambles^+$, then $f \in \epsilon_S(\bigcap_{j\in J} D_j)$. Otherwise, $f \ge g_j$ for some $g_j \in (D_j \cap \mathcal{L}_S)$, for all $j \in J$.
Hence, $f(\omega) \ge \sup_{k \in J} g_k (\omega)$ for every $\omega \in \Omega$.
However, $\sup_{k \in J} g_k \in \bigcap_{j \in J} (D_j) $ because $\sup_{k \in J} g_k(\omega) \ge g_j(\omega)$ for all $j \in J$, for all $\omega \in \Omega$. Moreover, $\sup_{k \in J} g_k \in \mathcal{L}_S$, thanks to the fact that $g_j \in \mathcal{L}_S$ for all $j \in J$. Therefore, $f \in \epsilon_S(\bigcap_{j \in J} D_j)$.



\end{proof}

So $(\Phi,\leq)$ is a lattice under information order and satisfies Eq.~\eqref{eq:ExzrMeet}. Such an information algebra is called a \emph{lattice} information algebra.


The family of strictly desirable sets of gambles enlarged with $\mathcal{L}(\Omega)$ is also closed under combination and extraction in $\Phi$. Therefore 
$\Phi^+$ forms a \emph{subalgebra} of $\Phi$.



\section{Labeled Information Algebra} \label{sec:LabInfAlg}

The domain-free information algebra of coherent sets of gambles treats the general case of gambles defined on $\Omega$. However, it is well known that, if a coherent set of gambles $D$ is such that $D= \edesirs(D \cap \gambles_S)$ for some $S \subset I$, it is essentially determined by values of gambles defined on smaller sets of possibilities than $\Omega$, namely on blocks $[\omega]_S$ of the equivalence relation $\equiv_S$ defined as $\omega \equiv_S \omega' \iff \omega \vert S = \omega' \vert S$ for every $\omega, \omega' \in \Omega$. 
Indeed, it contains the same information of the set $D \cap \gambles_S$ that is in a one-to-one correspondence with a set $\tilde{D}$ directly defined on 
blocks $[\omega]_S$ (see for example~\citealp{mirzaffalon20}).
This view leads to another, so-called labeled version of information algebras that clearly is better suited for computational purposes. 

We start deriving a labeled view of the information algebra of coherent sets of gambles, using a general method for domain-free information algebras to derive corresponding labeled ones~\citep{kohlas03}.
In this case, as well as in the case of coherent lower previsions, there is a second isomorphic version of the labeled algebra, which is nearer to the intuition explained before and which will be introduced after the general construction of the labeled algebra. From a labeled information algebra, the domain-free one may be reconstructed~\citep{kohlas03}. So the two views are equivalent. 

The whole theory presented here could also have been developed in the labeled view. It is a question of convenience whether the domain-free or the labeled view is chosen. Usually, for theoretical considerations, the domain-free view is preferable because it is nearer to universal algebra. For computational purposes the labeled view is used in general, because it corresponds better to the needs of efficient data representation.

We begin introducing the concept of support.
\begin{definition}[Support for sets of gambles]
A subset $S$ of $I$ is called \emph{support} of a set of gambles $D \in \Phi$, if $\epsilon_S(D) = D$.
\end{definition}
This means that the information contained in $D$ concerns, or is focused on, the group $S$ of variables. Here are a few well-known results on  supports in domain-free information algebras (for proofs see~\citeauthor{kohlas03}, \citeyear{kohlas03}).
\begin{lemma}
Let $D, D_1, D_2 \in \Phi, \; S,T \subseteq I$. The following are true:
\begin{enumerate}
\item any $S$ is a support of the null $0$ ($\mathcal{L}(\Omega))$ and the unit $1$ ($\mathcal{L}^+(\Omega))$ elements,
\item $S $ is a support of $\epsilon_S(D)$,
\item if $S$ is a support of $D$, then $S$ is a support of $\epsilon_T(D)$,   
\item if $S$ and $T$ are supports of $D$, then so is $S \cap T$,
\item if $S$ is a support of $D$, then $\epsilon_T(D) = \epsilon_{S \cap T}(D)$, 
\item if $S$ is a support of $D$ and $S \subseteq T$, then $T$ is a support of $D$, 
\item if $S$ is a support of $D_1$ and $D_2$, then it is also a support of $D_1 \cdot D_2$, 
\item if $S$ is a support of $D_1$ and $T$ a support of $D_2$, then $S \cup T$ is a support of $D_1 \cdot D_2$. 
\end{enumerate}
\end{lemma}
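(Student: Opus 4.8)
The plan is to prove the eight items of the Lemma in sequence, since each is a short consequence of Theorem~\ref{th:DomFreeInfAlg} together with the basic semigroup and lattice structure of $\Phi$; many of the later items follow from the earlier ones, so the order matters. First I would establish item~1: by E1, $\epsilon_S(0)=0$ for every $S$, so every $S$ supports $0$; for the unit $1=\gambles^+$, apply E2 to get $\epsilon_S(1)\le 1$, and then note that $1$ is the least element of the information order, so $\epsilon_S(1)=1$, whence every $S$ supports $1$. Item~2 is immediate from property~3 of Theorem~\ref{th:DomFreeInfAlg} (idempotency-type identity $\epsilon_S\circ\epsilon_S=\epsilon_{S\cap S}=\epsilon_S$), so $\epsilon_S(\epsilon_S(D))=\epsilon_S(D)$, meaning $S$ supports $\epsilon_S(D)$.

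Next, item~3: if $S$ supports $D$, then using property~3, $\epsilon_S(\epsilon_T(D))=\epsilon_{S\cap T}(D)=\epsilon_{T\cap S}(D)=\epsilon_T(\epsilon_S(D))=\epsilon_T(D)$, so $S$ supports $\epsilon_T(D)$. Item~4: if both $S$ and $T$ support $D$, then by property~3, $\epsilon_{S\cap T}(D)=\epsilon_S(\epsilon_T(D))=\epsilon_S(D)=D$, so $S\cap T$ supports $D$. Item~5: if $S$ supports $D$, then $\epsilon_T(D)=\epsilon_T(\epsilon_S(D))=\epsilon_{T\cap S}(D)=\epsilon_{S\cap T}(D)$, again straight from property~3. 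Item~6: if $S\subseteq T$ and $S$ supports $D$, then by item~5, $\epsilon_T(D)=\epsilon_{S\cap T}(D)=\epsilon_S(D)=D$, so $T$ supports $D$.

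For item~7, suppose $S$ supports $D_1$ and $D_2$. The goal is $\epsilon_S(D_1\cdot D_2)=D_1\cdot D_2$. By E2, $\epsilon_S(D_1\cdot D_2)\le D_1\cdot D_2$, so it suffices to show the reverse, i.e.\ $D_1\cdot D_2\le \epsilon_S(D_1\cdot D_2)$. Here I would use E3 in the form $\epsilon_S(\epsilon_S(D_1)\cdot D_2)=\epsilon_S(D_1)\cdot\epsilon_S(D_2)$: since $\epsilon_S(D_1)=D_1$ and $\epsilon_S(D_2)=D_2$, the right side is $D_1\cdot D_2$ and the left side is $\epsilon_S(D_1\cdot D_2)$, giving equality directly. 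Item~8 then follows from item~7 combined with item~6: if $S$ supports $D_1$, then since $S\subseteq S\cup T$, item~6 gives that $S\cup T$ supports $D_1$; likewise $S\cup T$ supports $D_2$; then item~7 (applied with support set $S\cup T$) gives that $S\cup T$ supports $D_1\cdot D_2$.

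I expect the main (and only mild) obstacle to be item~7, where one must correctly recognize that E3 gives the needed identity in one stroke rather than attempting a fixed-point argument; the subtlety is just keeping track of which of E2, E3, and property~3 of Theorem~\ref{th:DomFreeInfAlg} is being invoked, and ensuring that the argument also covers the degenerate cases where some $D_i$ equals the null element $0=\gambles(\Omega)$ — but those are handled uniformly since item~1 already shows every subset supports $0$, and $0$ is absorbing for combination.
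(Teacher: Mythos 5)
Your proof is correct. The paper itself does not prove this lemma (it defers to Kohlas, 2003), and your derivation of all eight items from E1--E3 and the composition rule $\epsilon_S\circ\epsilon_T=\epsilon_{S\cap T}$ of Theorem~\ref{th:DomFreeInfAlg} is exactly the standard argument: in particular the one-stroke use of E3 for item~7 and the reduction of item~8 to items~6 and~7 are the intended steps, and the order-theoretic argument for $\epsilon_S(1)=1$ (using that $\gambles^+$ is least in the information order) is valid.
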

Now, we consider sets $\Psi_S(\Omega)$, or $\Psi_S$ when there is no possible ambiguity, of pairs $(D,S)$, where $S \subseteq I$ is a support of $D \in \Phi$. That is, we collect together pieces of information concerning the same set of variables. Let
\begin{eqnarray*}
\Psi(\Omega) \coloneqq \bigcup_{S \subseteq I} \Psi_S(\Omega).
\end{eqnarray*}
As usual, we can refer to it also with $\Psi$, when there is no possible ambiguity.
In $\Psi$ we define the following operations in terms of the ones of the domain-free algebra.
\begin{enumerate}
\item Labeling: $d(D,S) \coloneqq S$.
\item Combination: $(D_1,S) \cdot (D_2,T) \coloneqq (D_1 \cdot D_2,S \cup T)$.
\item Projection (Marginalization): $\pi_T(D,S) \coloneqq (\epsilon_T(D),T)$, for every $T \subseteq S \subseteq I$. 
\end{enumerate}
It is well-known and easy to verify that $\Psi$ with these three operations satisfies the properties in the following theorem~\citep{kohlas03}.

\begin{theorem} \label{th:LabInfAlg}
\item Semigroup: $(\Psi,\cdot)$ is a commutative semigroup.
\item Labeling: $d((D_1,S) \cdot (D_2,T)) = d(D_1,S) \cup d(D_2,T)$ , for any $(D_1,S), (D_2,T) \in \Psi$ and $d(\pi_T(D,S)) = T$, for any $(D,S) \in \Psi, \; T \subseteq S \subseteq I$.
\item Null and Unit: $(0,S) \cdot (D,S) = (0,S)$, $(1,S) \cdot (D,S) = (D,S)$ for any $(D,S) \in \Psi$, $\pi_T(D,S) = (0,T)$ if and only if $(D,S) = (0,S)$, $\pi_T(1,S) = (1,T)$, with $T \subseteq S \subseteq I$ and $(1,S) \cdot (1,T) = (1,S \cup T)$, with $S,T \subseteq I$.
\item Transitivity of Projection: if $U \subseteq T \subseteq S \subseteq I$, then $\pi_U(D,S) = \pi_U(\pi_T(D,S))$, for any $(D,S) \in \Psi$.
\item Combination: $\pi_S((D_1,S) \cdot (D_2,T)) = (D_1,S) \cdot \pi_{S \cap T}(D_2,T)$, for any $(D_1,S), (D_2,T) \in \Psi$.
\item Idempotency: $(D,S) \cdot \pi_T(D,S) = (D,S)$, for any $(D,S) \in \Psi, \; T \subseteq S \subseteq I$.
\item Identity: $\pi_S(D,S) = (D,S)$, for any $(D,S) \in \Psi$.
\end{theorem}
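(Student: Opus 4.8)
The plan is to derive all seven properties mechanically from Theorem~\ref{th:DomFreeInfAlg} together with the lemma on supports stated just above, since $\Psi$ is built entirely from the domain-free algebra $\Phi$. First I would check that the three operations are well defined on $\Psi$: labeling and projection obviously produce legitimate pairs (for projection, $T$ is a support of $\epsilon_T(D)$ by item~2 of the support lemma), and combination is well defined because if $S$ is a support of $D_1$ and $T$ a support of $D_2$, then $S\cup T$ is a support of $D_1\cdot D_2$ (item~8 of the lemma). Commutativity and associativity of $\cdot$ on $\Psi$ then follow immediately from commutativity and associativity of $\cdot$ on $\Phi$ (Theorem~\ref{th:DomFreeInfAlg}, part~1) together with those of $\cup$ on subsets of $I$, yielding the Semigroup property. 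The Labeling property is immediate from the definitions, and the Identity property $\pi_S(D,S)=(\epsilon_S(D),S)=(D,S)$ is just the definition of support.

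For Null and Unit, I would use that $0=\gambles(\Omega)$ and $1=\gambles^+(\Omega)$ are the null and unit of $\Phi$ and that every $S$ is a support of both (item~1 of the lemma): then $(0,S)\cdot(D,S)=(0\cdot D,S)=(0,S)$, $(1,S)\cdot(D,S)=(D,S)$, $\pi_T(1,S)=(\epsilon_T(1),T)=(1,T)$, and $(1,S)\cdot(1,T)=(1,S\cup T)$. The one small extra observation is that $\pi_T(D,S)=(0,T)$ iff $(D,S)=(0,S)$: the direction ``$\Leftarrow$'' is E1, and for ``$\Rightarrow$'', if $\epsilon_T(D)=0$ then E2 (in the form $\epsilon_T(D)\cdot D=D$) gives $0\cdot D=D$, hence $D=0$. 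Transitivity of Projection reduces, via $\pi_U(\pi_T(D,S))=(\epsilon_U(\epsilon_T(D)),U)$, to $\epsilon_U\circ\epsilon_T=\epsilon_{U\cap T}=\epsilon_U$ whenever $U\subseteq T$, which is part~3 of Theorem~\ref{th:DomFreeInfAlg}. Idempotency $(D,S)\cdot\pi_T(D,S)=(D\cdot\epsilon_T(D),S\cup T)=(D,S)$ (using $T\subseteq S$) follows from E2.

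The one step that requires a little care is the Combination axiom. I would compute the left-hand side $\pi_S((D_1,S)\cdot(D_2,T))=(\epsilon_S(D_1\cdot D_2),S)$, then, using that $S$ is a support of $D_1$ so $D_1=\epsilon_S(D_1)$, rewrite $\epsilon_S(D_1\cdot D_2)=\epsilon_S(\epsilon_S(D_1)\cdot D_2)=\epsilon_S(D_1)\cdot\epsilon_S(D_2)=D_1\cdot\epsilon_S(D_2)$ by E3. Since $T$ is a support of $D_2$, item~5 of the support lemma gives $\epsilon_S(D_2)=\epsilon_{S\cap T}(D_2)$, so the left-hand side equals $(D_1\cdot\epsilon_{S\cap T}(D_2),S)$; the right-hand side is $(D_1,S)\cdot(\epsilon_{S\cap T}(D_2),S\cap T)=(D_1\cdot\epsilon_{S\cap T}(D_2),S\cup(S\cap T))=(D_1\cdot\epsilon_{S\cap T}(D_2),S)$, and the two agree. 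I do not expect a genuine obstacle here: the whole theorem is a routine transfer of the domain-free axioms (and, as noted in the text, is well known); the only thing demanding attention is keeping the support bookkeeping straight, in particular the combined use of E3 and item~5 of the lemma in the Combination axiom.
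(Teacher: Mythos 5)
Your proof is correct and is exactly the standard verification the paper is alluding to: the paper itself gives no proof of Theorem~\ref{th:LabInfAlg}, merely noting that it is ``well-known and easy to verify'' with a citation to the general construction of a labeled algebra from a domain-free one, and your argument carries out that construction faithfully, reducing each axiom to Theorem~\ref{th:DomFreeInfAlg} and the support lemma (with the only nontrivial step, the Combination axiom, handled correctly via E3 and item~5 of the support lemma).
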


An algebraic system like $\Psi$ with the operations defined before, satisfying the conditions of Theorem~\ref{th:LabInfAlg} is called a \emph{labeled (idempotent and stable) information algebra}~\citep{kohlas03}. In this sense, coherent sets of gambles augmented with the set of all gambles $\gambles(\Omega)$ form both a domain-free and a labeled information algebra.  

We may associate to this labeled algebra another, isomorphic one. For a subset $S$ of variables, let $C(\Omega_S)$ be the family of coherent sets of gambles on $\Omega_S$. 
Furthermore, let $\tilde{\Psi}_S(\Omega)$ be the set of pairs $(\tilde{D},S)$, where $S \subseteq I$ and $\tilde{D} \in C(\Omega_S) \cup \{ \gambles(\Omega_S)\}$, 
and
\begin{eqnarray*}
\tilde{\Psi}(\Omega) \coloneqq \bigcup_{S \subseteq I} \tilde{\Psi}_S(\Omega).
\end{eqnarray*}
We will refer to them also with $\tilde{\Psi}$ and $\tilde{\Psi}_S$ for every $S \subseteq I$, when there is no ambiguity.

It is well known that there is a one-to-one correspondence between gambles $f \in \gambles_S(\Omega_{R})$ with $S \subseteq R \subseteq I$, and gambles $f' \in \gambles(\Omega_S)$. So, in what follows, given a gamble $f \in \gambles_S(\Omega_{R})$, we indicate with  $f^{\downarrow{S}}$ the corresponding gamble in $\gambles(\Omega_{S})$ defined, for all $\omega_S \in \Omega_S$, as $f^{\downarrow{S}}(\omega_S) \coloneqq f(\omega_{R})$ for all $\omega_{R} \in \Omega_R$
 such that $\omega_R \vert S=\omega_S$. 
Vice versa, given a gamble $f' \in \gambles(\Omega_S)$ we indicate with $(f')^{\uparrow{R}}$ the corresponding gamble in $\gambles_S(\Omega_{R})$ defined as $(f')^{\uparrow{R}}(\omega_{R}) \coloneqq f'(\omega_{R} \vert S)$, for all $\omega_{R} \in \Omega_{R}$.
Clearly, if $f \in \gambles_S(\Omega_R)$, then $(f^{\downarrow{S}})^{\uparrow{R}}= f$ and  $f^{\downarrow{S}}= (f^{\uparrow{I}})^{\downarrow{S}}$ and also $f^{\uparrow{I}}=(f^{\downarrow{S}})^{\uparrow{I}} $. Vice versa, if $f' \in \gambles(\Omega_S)$, then $((f')^{\uparrow{R}})^{\downarrow{S}}= f'= ((f')^{\uparrow{I}})^{\downarrow{S}}$.

We extend these maps also to sets of gambles in the following way:
\begin{eqnarray*}
D^{\downarrow{S}} \coloneqq \{ f' \in \gambles(\Omega_S): f'=f^{\downarrow{S}} \text{ for some } f \in D\},
\end{eqnarray*}
for every $D \subseteq \gambles_S(\Omega_{R})$.
\begin{eqnarray*}
D^{\uparrow{R}} \coloneqq \{ f \in \gambles_S(\Omega_{R}): f=(f')^{\uparrow{R}} \text{ for some } f' \in D\},
\end{eqnarray*}
for every $D \subseteq \gambles(\Omega_S)$.

\begin{lemma}\label{le:cylExtProperties}
Consider $T \subseteq S \subseteq R \subseteq I$. The following properties are valid.
\begin{enumerate}
     \item If $D \subseteq \gambles_T(\Omega_R)$, then $ D^{\downarrow{S}}= (D^{\downarrow{T}})^{\uparrow{S}} $. So, in particular, if $S=R$, we have $D = (D^{\downarrow{T}})^{\uparrow{R}}$.
    \item If $D \subseteq \gambles_T(\Omega_S)$, then $D^{\downarrow{T}}= (D^{\uparrow{R}})^{\downarrow{T}}$. 
    \item If $D_1,D_2 \subseteq \gambles_T(\Omega_R)$, then $D_1^{\downarrow{T}} \cap D_2^{\downarrow{T}} = (D_1 \cap D_2)^{\downarrow{T}}$.
    \item If $D_1,D_2 \subseteq \gambles_T(\Omega_R)$, then $D_1^{\downarrow{T}} \cup D_2^{\downarrow{T}} = (D_1 \cup D_2)^{\downarrow{T}}$.
    \item If $D \subseteq \gambles_T(\Omega_R)$, then $(\mathcal{C}(D) \cap \gambles_T)^{\downarrow{T}} = \mathcal{C}(D^{\downarrow{T}})$.
\end{enumerate}
\end{lemma}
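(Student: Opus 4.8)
The plan is to dispose of items (1)--(4) quickly, since they are formal consequences of the single-gamble identities listed just before the lemma, and to concentrate on item (5), which carries the content. Throughout I would use, without comment, that wherever they are defined the maps $(\cdot)^{\downarrow T}$ and $(\cdot)^{\uparrow R}$ are additive, positively homogeneous and order-preserving on gambles (immediate from the pointwise definitions), and that by the one-to-one correspondence recalled above $(\cdot)^{\downarrow T}$ restricts to a bijection of $\gambles_T(\Omega_R)$ onto $\gambles(\Omega_T)$ with inverse $(\cdot)^{\uparrow R}$; in particular $(\gambles_T(\Omega_R))^{\downarrow T} = \gambles(\Omega_T)$, and $g^{\downarrow T} = 0 \iff g = 0$, $g^{\downarrow T} \le 0 \iff g \le 0$ for $T$-measurable $g$.

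For item (1): fix $f \in D \subseteq \gambles_T(\Omega_R)$; since $T \subseteq S$, $f$ is $S$-measurable, $f^{\downarrow S}$ is defined, lies in $\gambles_T(\Omega_S)$, and a direct pointwise check gives $(f^{\downarrow S})^{\downarrow T} = f^{\downarrow T}$; applying the identity $(g^{\downarrow T})^{\uparrow S} = g$ (valid for $g \in \gambles_T(\Omega_S)$) to $g = f^{\downarrow S}$ yields $f^{\downarrow S} = (f^{\downarrow T})^{\uparrow S}$, and passing to images over $D$ gives $D^{\downarrow S} = (D^{\downarrow T})^{\uparrow S}$; the case $S = R$ follows because then $D^{\downarrow R} = D$. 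For item (2): for each $f \in D \subseteq \gambles_T(\Omega_S)$ a one-line pointwise computation using the $T$-measurability of $f$ gives $(f^{\uparrow R})^{\downarrow T} = f^{\downarrow T}$; pass to images. For item (3): $\supseteq$ is immediate, and for $\subseteq$, if $f' = f_1^{\downarrow T} = f_2^{\downarrow T}$ with $f_i \in D_i$, then $f_1 = (f_1^{\downarrow T})^{\uparrow R} = (f_2^{\downarrow T})^{\uparrow R} = f_2 \in D_1 \cap D_2$, so $f' \in (D_1 \cap D_2)^{\downarrow T}$. Item (4) is just the fact that the image of a union is the union of the images.

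For item (5) I would first invoke the dichotomy recalled in Section~\ref{sec:DesGambles}: $\mathcal{C}(\cdot)$ equals $\mathcal{E}(\cdot) = \posi(\cdot \cup \gambles^+)$ when the latter is coherent and equals the whole gamble space otherwise, keeping the two natural-extension operators (over $\Omega_R$ and over $\Omega_T$) apart notationally. The key auxiliary claim is $0 \in \mathcal{E}(D) \iff 0 \in \mathcal{E}(D^{\downarrow T})$. Going down: from a representation $0 = \sum_j \lambda_j f_j + \sum_k \mu_k h_k$ with $f_j \in D$ and $h_k \in \gambles^+(\Omega_R)$, the $T$-measurable gamble $g := \sum_j \lambda_j f_j$ (with nonempty $f_j$-part) satisfies $g \le 0$, hence $g^{\downarrow T} \le 0$, and then either $g^{\downarrow T} = 0$, so $0 \in \posi(D^{\downarrow T})$, or $-g^{\downarrow T} \in \gambles^+(\Omega_T)$ and $0 = g^{\downarrow T} + (-g^{\downarrow T}) \in \mathcal{E}(D^{\downarrow T})$; going up is the mirror-image argument, lifting each element of $D^{\downarrow T}$ back to one of $D$ and using $(\cdot)^{\uparrow R}$. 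In the incoherent branch $\mathcal{C}(D) = \gambles(\Omega_R)$, so $\mathcal{C}(D) \cap \gambles_T = \gambles_T(\Omega_R)$, whose image under $(\cdot)^{\downarrow T}$ is $\gambles(\Omega_T) = \mathcal{C}(D^{\downarrow T})$, and the two sides coincide. In the coherent branch I would show the two inclusions directly. For $\subseteq$: given $f \in \mathcal{E}(D) \cap \gambles_T$, write $f = \sum_j \lambda_j g_j + h$ with $g_j \in D$ and $h \in \gambles^+(\Omega_R) \cup \{0\}$; since $f$ and the $g_j$ are $T$-measurable so is $h = f - \sum_j \lambda_j g_j$, hence $f^{\downarrow T} = \sum_j \lambda_j g_j^{\downarrow T} + h^{\downarrow T}$ lies in $\mathcal{E}(D^{\downarrow T}) = \posi(D^{\downarrow T} \cup \gambles^+(\Omega_T))$ after a routine split on whether $h$ vanishes. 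For $\supseteq$: given $f' = \sum_j \lambda_j g_j^{\downarrow T} + h' \in \mathcal{E}(D^{\downarrow T})$ with $g_j \in D$ and $h' \in \gambles^+(\Omega_T) \cup \{0\}$, the lift $\hat f := \sum_j \lambda_j g_j + (h')^{\uparrow R}$ is a $T$-measurable element of $\posi(D \cup \gambles^+(\Omega_R)) = \mathcal{E}(D) = \mathcal{C}(D)$ with $\hat f^{\downarrow T} = f'$ (using $((h')^{\uparrow R})^{\downarrow T} = h'$), so $f' \in (\mathcal{C}(D) \cap \gambles_T)^{\downarrow T}$.

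The only real obstacle I anticipate is item (5): getting the coherence equivalence right and carefully handling the degenerate positive combinations (an empty $D$-part or an empty $\gambles^+$-part), together with the observation that the additive remainder $h$ in a $\posi$-representation of a $T$-measurable gamble is itself forced to be $T$-measurable. Items (1)--(4) are routine transcriptions of the single-gamble identities already available.
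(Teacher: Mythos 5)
Your proof is correct and follows essentially the same route as the paper: items (1)--(4) are treated as routine consequences of the single-gamble identities, and item (5) is reduced to the identity $(\mathcal{E}(D)\cap\gambles_T)^{\downarrow T}=\mathcal{E}(D^{\downarrow T})$, proved by the same two inclusions (with the same key observation that the residual $\gambles^+$-part of a $T$-measurable positive combination is itself $T$-measurable). The only difference is that you make explicit the passage from $\mathcal{E}$ to $\mathcal{C}$ via the equivalence $0\in\mathcal{E}(D)\iff 0\in\mathcal{E}(D^{\downarrow T})$ and the incoherent branch, a step the paper compresses into ``from which item 5 derives.''
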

\begin{proof}
Items 1,2,3 and 4 are obvious.
Regarding item 5, we will show that $(\edesirs(D) \cap \gambles_T)^{\downarrow{T}} = \edesirs(D^{\downarrow{T}})$, from which item 5 derives.
So, consider $f' \in (\edesirs(D) \cap \gambles_T)^{\downarrow{T}}$. Then $f' = f^{\downarrow{T}}$, for some $f \in \edesirs(D) \cap \gambles_T$, so, for every $\omega_T \in \Omega_T$, $f'(\omega_T) = f(\omega_R)= \sum_{i=1}^r \lambda_i g_i(\omega_R) + \mu h(\omega_R)$, with $\lambda_i, \mu \ge 0, \; \forall i$ not all equal to $0$, $r \ge 0$, $g_i \in D \subseteq \gambles_T(\Omega_R), \; h \in \gambles^+$, for every $\omega_R \in \Omega_R$ such that $\omega_R \vert T = \omega_T$. Therefore $h \in \gambles^+_T$. So, $f' = \sum_{i=1}^r \lambda_i g_i^{\downarrow{T}} + \mu h^{\downarrow{T}}$, therefore $f' \in \edesirs(D^{\downarrow{T}})$. The other inclusion can be proven analogously, therefore we have the thesis.

\end{proof}

Within $\tilde{\Psi}$ we define the following operations.
\begin{enumerate}
\item Labeling: $d(\tilde{D},S) \coloneqq S$.
\item Combination: $(\tilde{D}_1,S) \cdot (\tilde{D_2},T) \coloneqq ( \mathcal{C}( \tilde{D}_1^{\uparrow{S \cup T}})  \cdot \mathcal{C}( \tilde{D}_2^{\uparrow{S \cup T}}), S \cup T)$. 

\item Projection (Marginalization): $\pi_T(\tilde{D},S) \coloneqq ((\epsilon_T( \tilde{D}) \cap \gambles_T(\Omega_S))^{\downarrow{T}},T) 
$, for every $T \subseteq S \subseteq I$.
\end{enumerate}

Consider now the map $h : \Psi \rightarrow \tilde{\Psi}$ defined by $(D,S) \mapsto ((\epsilon_S(D) \cap \mathcal{L}_S(\Omega))^{\downarrow{S}},S)=((D \cap \mathcal{L}_S(\Omega))^{\downarrow{S}},S)$, because $D$ has support $S$. The map is clearly well defined. Moreover, $h$ will establish an isomorphism between the labeled information algebras $\Psi$ and $\tilde{\Psi}$ with the respective operations.


\begin{theorem} \label{th:LabelIsom}
The map $h$ has the following properties.
\begin{enumerate}
\item It maintains combination, null and unit, and projection. Let $(D,S), \; (D_1,S), \; (D_2,T) \in \Psi$:
\begin{eqnarray*}
h((D_1,S) \cdot (D_2,T)) &=& h(D_1,S) \cdot h(D_2,T), \\
 h(\mathcal{L}(\Omega),S) &=& (\mathcal{L}(\Omega_S),S), \\
 h(\mathcal{L}^+(\Omega),S) &=& (\mathcal{L}^+(\Omega_S),S), \\
h(\pi_T(D,S)) &=& \pi_T(h(D,S)), \textrm{ if } T  \subseteq S.
\end{eqnarray*}
\item $h$ is bijective.
\end{enumerate}
\end{theorem}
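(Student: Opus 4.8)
The plan is to verify the four displayed equalities in item 1 and the bijectivity in item 2 directly from the definitions, using Lemma~\ref{le:cylExtProperties} as the main computational tool. For the null and unit equations, note that $\mathcal{L}(\Omega)$ has every $S$ as support, so $h(\mathcal{L}(\Omega),S)=((\mathcal{L}(\Omega)\cap\mathcal{L}_S(\Omega))^{\downarrow S},S)=(\mathcal{L}_S(\Omega)^{\downarrow S},S)=(\mathcal{L}(\Omega_S),S)$, since the down-map is a bijection between $\mathcal{L}_S(\Omega)$ and $\mathcal{L}(\Omega_S)$; the unit case is identical, using that $\mathcal{L}^+(\Omega)\cap\mathcal{L}_S(\Omega)=\mathcal{L}^+_S(\Omega)$ maps onto $\mathcal{L}^+(\Omega_S)$.

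For combination, I would unfold both sides. On the left, $h((D_1,S)\cdot(D_2,T))=h(D_1\cdot D_2,\,S\cup T)=((\mathcal{C}(D_1\cup D_2)\cap\mathcal{L}_{S\cup T}(\Omega))^{\downarrow S\cup T},\,S\cup T)$. On the right, combination in $\tilde\Psi$ first lifts $h(D_1,S)=((D_1\cap\mathcal{L}_S(\Omega))^{\downarrow S},S)$ to level $S\cup T$; by item~1 of Lemma~\ref{le:cylExtProperties} the lift $\big((D_1\cap\mathcal{L}_S(\Omega))^{\downarrow S}\big)^{\uparrow S\cup T}$ equals $D_1\cap\mathcal{L}_S(\Omega)$ viewed inside $\mathcal{L}_S(\Omega_{S\cup T})$, and analogously for $D_2$. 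Combining these in $\Phi$ and applying $\mathcal{C}$ to the union, then down-projecting to $S\cup T$, I would invoke Lemma~\ref{le:UnionConsOp} (to absorb the inner closures) together with item~5 of Lemma~\ref{le:cylExtProperties} and Lemma~\ref{le:Measurability}, reducing the right-hand side to $(\mathcal{C}(D_1\cup D_2)\cap\mathcal{L}_{S\cup T}(\Omega))^{\downarrow S\cup T}$. The labels match by construction, giving the equality. For projection, with $T\subseteq S$, the left side is $h(\epsilon_T(D),T)=((\epsilon_T(D)\cap\mathcal{L}_T(\Omega))^{\downarrow T},T)$, while the right side is $\pi_T\big((D\cap\mathcal{L}_S(\Omega))^{\downarrow S},S\big)=\big((\epsilon_T((D\cap\mathcal{L}_S(\Omega))^{\downarrow S})\cap\mathcal{L}_T(\Omega_S))^{\downarrow T},T\big)$; one checks that $\epsilon_T$ and the cylindric maps commute appropriately, again via items 2, 5 of Lemma~\ref{le:cylExtProperties} and the fact that $\mathcal{L}_T(\Omega)\cap\mathcal{L}_S(\Omega)=\mathcal{L}_T(\Omega)$ when $T\subseteq S$ (Lemma~\ref{le:Measurability}).

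For bijectivity: surjectivity is immediate because for any $(\tilde D,S)\in\tilde\Psi$ the pair $(\tilde D^{\uparrow S},S)$ lies in $\Psi$ (it has support $S$) and $h$ sends it back to $(\tilde D,S)$ by item~1 of Lemma~\ref{le:cylExtProperties}. Injectivity follows since $(D,S)$ with support $S$ satisfies $D=\epsilon_S(D)=\mathcal{C}(D\cap\mathcal{L}_S(\Omega))$, so $D$ is recovered from $(D\cap\mathcal{L}_S(\Omega))^{\downarrow S}$ by lifting and applying $\mathcal{C}$; hence $h$ has a two-sided inverse. The main obstacle I anticipate is the combination identity: one must carefully track which ambient possibility space each set of gambles lives on as the cylindric extensions move between $\Omega_S$, $\Omega_{S\cup T}$, and $\Omega$, and make sure the natural-extension (closure) operator is applied on the correct space at each step — this is exactly where Lemma~\ref{le:UnionConsOp} and item~5 of Lemma~\ref{le:cylExtProperties} do the real work, and getting the bookkeeping right is more delicate than the individual algebraic manipulations.
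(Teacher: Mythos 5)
Your treatment of item~1 and of injectivity follows the paper's proof essentially step for step: the same unfolding of both sides of the combination identity, the same appeal to Lemma~\ref{le:UnionConsOp} to absorb the inner closures, and items 1, 4 and 5 of Lemma~\ref{le:cylExtProperties} to move the closure operator past the down-projection; the projection identity likewise reduces, as in the paper, to $\epsilon_T(D)\cap\mathcal{L}_T = D\cap\mathcal{L}_T$ and to $\mathcal{L}_T\subseteq\mathcal{L}_S$ for $T\subseteq S$. No issues there.

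There is, however, a genuine flaw in your surjectivity argument. You propose $(\tilde D^{\uparrow S},S)$ --- presumably meaning the cylindrical extension $\tilde D^{\uparrow I}$ to the full space $\Omega$, since $\tilde D$ already lives on $\Omega_S$ --- as the preimage of $(\tilde D,S)$, and assert it ``lies in $\Psi$ (it has support $S$)''. But $(\tilde D^{\uparrow I},S)$ is not an element of $\Psi$: elements of $\Psi$ are pairs $(D,S)$ with $D\in\Phi(\Omega)=C(\Omega)\cup\{\mathcal{L}(\Omega)\}$, whereas the bare lift $\tilde D^{\uparrow I}$ of a coherent $\tilde D\in C(\Omega_S)$ is contained in $\mathcal{L}_S(\Omega)$ and hence violates axiom D1 on $\Omega$ (it contains only the $S$-measurable part of $\mathcal{L}^+(\Omega)$, not all of it), so it is not a coherent set on $\Omega$ and the support condition is not even well posed for it. The correct preimage is $(\mathcal{C}(\tilde D^{\uparrow I}),S)$, and with it two checks are needed that your argument skips: first, that $\epsilon_S(\mathcal{C}(\tilde D^{\uparrow I}))=\mathcal{C}(\tilde D^{\uparrow I})$, so the pair really belongs to $\Psi$; second, that $h(\mathcal{C}(\tilde D^{\uparrow I}),S)=(\tilde D,S)$, which no longer follows from item~1 of Lemma~\ref{le:cylExtProperties} (that item only covers the raw lift-then-project identity) but requires item~5: $(\mathcal{C}(\tilde D^{\uparrow I})\cap\mathcal{L}_S)^{\downarrow S}=\mathcal{C}((\tilde D^{\uparrow I})^{\downarrow S})=\mathcal{C}(\tilde D)=\tilde D$, the last equality holding because $\tilde D$ is already closed. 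Both checks are short and are exactly how the paper completes the proof, but as written your surjectivity step applies $h$ to an object outside its domain.
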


\begin{proof}
\begin{enumerate}
    \item Recalling that $D_1$ has support $S$ and $D_2$ has support $T$, we have by definition
    \begin{eqnarray*}
\lefteqn{h((D_1,S) \cdot (D_2,T)) \coloneqq h(D_1 \cdot D_2,S \cup T)}\\ &&\coloneqq (((D_1 \cdot D_2) \cap \mathcal{L}_{S \cup T})^{\downarrow{S \cup T}}, S \cup T) \\ && = ((\mathcal{C}((D_1 \cap \gambles_S) \cup (D_2 \cap \gambles_T)) \cap \mathcal{L}_{S \cup T})^{\downarrow{S \cup T}}, S \cup T) \\ && = (\mathcal{C}( ((D_1 \cap \gambles_S) \cup (D_2 \cap \gambles_T))^{\downarrow{S \cup T}}), S \cup T) 
\end{eqnarray*}
thanks to Lemma~\ref{le:UnionConsOp} and item 5 of Lemma~\ref{le:cylExtProperties}.
On the other hand, thanks again to Lemma~\ref{le:UnionConsOp}, we have
\begin{eqnarray*}
h(D_1,S) \cdot h(D_2,T) \coloneqq ((D_1 \cap \gambles_S)^{\downarrow{S}},S) \cdot ((D_2 \cap \gambles_T)^{\downarrow{T}},T) \coloneqq
\end{eqnarray*}
\begin{eqnarray*}
(\mathcal{C}( ((D_1 \cap \gambles_S)^{\downarrow{S}})^{\uparrow{S \cup T}} \cup ((D_2 \cap \gambles_T)^{\downarrow{T}})^{\uparrow{S \cup T}}), S \cup T).
\end{eqnarray*}
Now, using again properties of Lemma~\ref{le:cylExtProperties}, we have
\begin{eqnarray*}
\lefteqn{h(D_1,S) \cdot h(D_2,T) } \\
&& \coloneqq (\mathcal{C}( ((D_1 \cap \gambles_S)^{\downarrow{S}})^{\uparrow{S \cup T}} \cup ((D_2 \cap \gambles_T)^{\downarrow{T}})^{\uparrow{S \cup T}}), S \cup T) \\
&&= (\mathcal{C}( (D_1 \cap \gambles_S)^{\downarrow{S \cup T}} \cup (D_2 \cap \gambles_T)^{\downarrow{S \cup T}} ), S \cup T)
\\
&&= (\mathcal{C}( ((D_1 \cap \gambles_S) \cup (D_2 \cap \gambles_T))^{\downarrow{S \cup T}}), S \cup T) = h(D_1,S) \cdot h(D_2,T).
\end{eqnarray*}



Obviously, $(\mathcal{L}(\Omega),S)$ maps to $(\mathcal{L}(\Omega_S),S)$ and $(\mathcal{L}^+(\Omega),S)$ maps to $(\mathcal{L}^+(\Omega_S),S)$. Then we have, again by definition
\begin{eqnarray*}
\lefteqn{h(\pi_T(D,S)) \coloneqq  h(\epsilon_T(D),T)} \\&& \coloneqq ((\epsilon_T(D) \cap \mathcal{L}_T)^{\downarrow{T}}, T) \\ &&=  ( ( D \cap \mathcal{L}_T)^{\downarrow{T}}, T).
\end{eqnarray*}
Indeed, $D \cap \gambles_T \subseteq \mathcal{C}(D \cap \gambles_T) \cap \gambles_ T = \epsilon_T(D) \cap \gambles_T \subseteq \mathcal{C}(D) \cap \gambles_T = D \cap \gambles_T$.
However, from $T \subseteq S$, it follows $\gambles_T \subseteq \gambles_S$. Therefore we have
\begin{eqnarray*}
\lefteqn{h(\pi_T(D,S)) = ( ( D \cap \mathcal{L}_T)^{\downarrow{T}}, T)}  \\&&= ( (D \cap \gambles_S) \cap \mathcal{L}_T)^{\downarrow{T}},T).
\end{eqnarray*}
On the other hand, we have
\begin{eqnarray*}
\lefteqn{\pi_T(h(D,S)) \coloneqq \pi_T( (D \cap \gambles_S)^{\downarrow{S}},S)}
\\ && \coloneqq( (\epsilon_T((D \cap \gambles_S)^{\downarrow{S}}) \cap \gambles_T(\Omega_S))^{\downarrow{T}},T)
\\ && =(((D \cap \gambles_S)^{\downarrow{S}} \cap \gambles_T(\Omega_S))^{\downarrow{T}},T) \\ && = (((D \cap \gambles_S)^{\downarrow{S}} \cap (\gambles_T(\Omega))^{\downarrow{S}} )^{\downarrow{T}},T) \\ && = (( (D \cap \gambles_S) \cap \gambles_T)^{\downarrow{S}} )^{\downarrow{T}},T) \\ && = ( (( (D \cap \gambles_S) \cap \gambles_T)^{\downarrow{S}}) ^{\uparrow{I}} )^{\downarrow{T}},T) \\ && = ( (D \cap \gambles_S) \cap \mathcal{L}_T)^{\downarrow{T}},T)= h(\pi_T(D,S)),
\end{eqnarray*}
thanks to Lemma~\ref{le:cylExtProperties}.
\item  Suppose $h(D_1,S) =h(D_2,T)$. Then we have $S=T$ and $(D_1 \cap \gambles_S)^{\downarrow{S}} = (D_2 \cap \gambles_S)^{\downarrow{S}} $, from which we derive that $D_1 \cap \gambles_S = D_2 \cap \gambles_S$ and therefore, $D_1= \mathcal{C}(D_1 \cap \gambles_S)= \mathcal{C}(D_2 \cap \gambles_S)= D_2$.
So the map $h$ is injective.

Moreover, for any $(\tilde{D},S) \in \tilde{\Psi}$ we have
that $(\tilde{D},S) = h(D,S)$ where $(D,S)=(\mathcal{C}(\tilde{D}^{\uparrow{I}}),S) \in \Psi$. Indeed:
\begin{itemize}
    \item $(\mathcal{C}(\tilde{D}^{\uparrow{I}}),S) \in \Psi$. In fact, $\epsilon_S(\mathcal{C}(\tilde{D}^{\uparrow{I}})) \coloneqq \mathcal{C}(\mathcal{C}(\tilde{D}^{\uparrow{I}}) \cap \gambles_S)$. Now, $\tilde{D}^{\uparrow{I}} \subseteq \mathcal{C}(\tilde{D}^{\uparrow{I}}) \cap \gambles_S$, therefore $\mathcal{C}(\tilde{D}^{\uparrow{I}}) \subseteq \mathcal{C}(\mathcal{C}(\tilde{D}^{\uparrow{I}}) \cap \gambles_S)$. On the other hand, $\mathcal{C}(\tilde{D}^{\uparrow{I}}) \cap \gambles_S \subseteq \mathcal{C}(\tilde{D}^{\uparrow{I}})$, therefore $\mathcal{C}(\mathcal{C}(\tilde{D}^{\uparrow{I}}) \cap \gambles_S) \subseteq \mathcal{C}(\tilde{D}^{\uparrow{I}})$. Hence, $\epsilon_S(\mathcal{C}(\tilde{D}^{\uparrow{I}})) \coloneqq \mathcal{C}(\mathcal{C}(\tilde{D}^{\uparrow{I}}) \cap \gambles_S)= \mathcal{C}(\tilde{D}^{\uparrow{I}})$.
    \item $h(\mathcal{C}(\tilde{D}^{\uparrow{I}}),S)= (\tilde{D},S)$. In fact, $h(\mathcal{C}(\tilde{D}^{\uparrow{I}}),S) \coloneqq (( \epsilon_S(\mathcal{C}(\tilde{D}^{\uparrow{I}})) \cap \gambles_S )^{\downarrow{S}},S)= ( (\mathcal{C}(\tilde{D}^{\uparrow{I}})  \cap \gambles_S)^{\downarrow{S}},S)$ by previous item. Moreover, $( (\mathcal{C}(\tilde{D}^{\uparrow{I}})  \cap \gambles_S)^{\downarrow{S}},S)= (\tilde{D},S)$ by item 5 of Lemma~\ref{le:cylExtProperties}.
\end{itemize}
 So $h$ is surjective, hence bijective.
\end{enumerate}
\end{proof}

We remark that also in labeled information algebras, an information order can be defined analogously to the one seen for domain-free ones.


In a computational application of this second version of the labeled information algebra, one would use the fact that any set
$(\tilde{D},S)$ is determined by gambles defined on the set of possibilities $\Omega_S$, which reduce greatly the efficiency of storage. Observations like this explain why labeled information algebra are better suited for computational purposes.


\section{Atoms and Maximal Coherent Sets of Gambles} \label{sec:Atoms}

In certain information algebras there are maximally informative elements, called \textit{atoms}~\citep{kohlas03}. This is in particular the case for the information algebra of coherent sets of gambles. Maximal coherent sets of gambles $M$ (see Section~\ref{sec:DesGambles}) are clearly different from $\gambles$ and moreover have the property that, in information order,
\begin{eqnarray*}
M \leq D \textrm{ for}\ D \in \Phi \Rightarrow M = D \textrm{ or}\ D = \mathcal{L}(\Omega).
\end{eqnarray*}
Elements in an information algebra with these properties are called atoms~\citep{kohlas03}. In certain cases atoms determine fully the structure of an information algebra. We shall show that this is indeed the case for the algebra of coherent sets of gambles, see in particular the next section~\ref{sec:SetAlgs}. The definition of an atom can alternatively being expressed by combination. $M$ is an atom if 
\begin{eqnarray*}
M \cdot D = \mathcal{L}(\Omega) \textrm{ or}\ M \cdot D = M, \; \forall D \in \Phi.
\end{eqnarray*}
Let $At(\Omega)$ denote the set of all atoms (maximal coherent sets) on $\Omega$. For any set of gambles $D \in \Phi(\Omega)$, let $At(D)$ denote the subset of $At(\Omega)$ (maximal coherent sets) which contain $D$,
\begin{eqnarray*}
At(D) \coloneqq \{M \in At(\Omega):D \leq M\}.
\end{eqnarray*}
Clearly, this set depends on $D$ and $\Omega$. If $D \in \Phi(\Omega_S)$ for some $S \subseteq I$, then $At(D) \coloneqq \{M \in At(\Omega_S):D \leq M\}$. We will use this fact later on in this section.

In general such sets may be empty. Not so in the case of coherent sets of gambles. In the case of the information algebra of coherent sets of gambles, we have in fact a number of additional properties concerning atoms (see Section~\ref{sec:DesGambles}).
\begin{enumerate}
\item For any coherent set of gambles $D$, there is a maximal set (an atom) $M$ so that in information order  $D \leq M$ (i.e., $D \subseteq M$). So $At(D)$, for $D$ coherent, is never empty. An information algebra with this property is called \textit{atomic}.
\item For all coherent sets of gambles $D$, we have
\begin{eqnarray*}
D = \inf At(D) = \bigcap At(D).
\end{eqnarray*}
An information algebra with this property is called \textit{atomic composed}~\citep{kohlas03} or \textit{atomistic}.
\item For any, not empty, subset $A$ of $At(\Omega)$ we have that 
\begin{eqnarray*}
\inf A = \bigcap A
\end{eqnarray*}
is a coherent set of gambles, i.e., an element of $C(\Omega)$. Such an algebra is called \textit{completely atomistic}.
\end{enumerate}
The first two properties are proved in~\cite{CooQua12}, the third follows since coherent sets form a $\bigcap$-structure. Note that, if $A$ is a set of maximal sets of gambles, $A \subseteq At(\bigcap A)$, and in general $A$ is a proper subset of $At(\bigcap A)$. These properties determine the structure of the information algebra of coherent sets in term of so-called set algebras, as we shall discuss in the following section~\ref{sec:SetAlgs}. 

The labeled version $\tilde{\Psi}$ of the information algebra of coherent sets of gambles, exhibits some further structure of atoms. In fact, we may have maximally informative elements relative to a domain $\Omega_S$ for any set of variables $S \subseteq I$. If $\tilde{M} \in At(\Omega_S)$ and $\tilde{D} \in \Phi(\Omega_S)$, then
we have $( \tilde{D},S) \geq (\tilde{M},S)$ if and only if either $(\tilde{D},S) = (\tilde{M},S)$ or $(\tilde{D},S) = (\mathcal{L}(\Omega_S),S)$. This means that the elements $(\tilde{M},S)$ are maximally informative relative to the subset of variables $S$. Such elements are called \textit{atoms relative to $S$}~\citep{kohlas03}.
Such relative atoms have the following properties.

\begin{lemma} \label{le:PropOfAtoms}
Assume $(\tilde{M},S)$ and $(\tilde{M_1},S),(\tilde{M_2},S)$ to be atoms relative to $S$ and $(\tilde{D},S) \in \tilde{\Psi}$, with $S \subseteq I$. Then
\begin{enumerate}
\item $(\tilde{M},S) \cdot (\tilde{D},S) = (\mathcal{L}(\Omega_S),S)$ or $(\tilde{M},S) \cdot (\tilde{D},S)= (\tilde{M},S)$. 
\item If $T \subseteq S$, then $\pi_T(\tilde{M},S)$ is an atom relative to $T$.
\item Either $(\tilde{D},S) \leq (\tilde{M},S)$ or $(\tilde{D},S) \cdot (\tilde{M},S) = (\mathcal{L}(\Omega_S),S)$. 
\item Either $(\tilde{M}_{1},S) \cdot (\tilde{M}_{2},S) = (\mathcal{L}(\Omega_S),S)$ or $(\tilde{M}_{1},S) = (\tilde{M}_{2},S)$.
\end{enumerate}
\end{lemma}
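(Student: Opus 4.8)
The plan is to reduce each of the four claims about relative atoms in $\tilde{\Psi}$ to the already-established properties of atoms in the domain-free algebra $\Phi(\Omega_S)$, exploiting the fact that, for a fixed label $S$, the second component of a pair plays no role: combination within the fibre $\tilde{\Psi}_S$ is just combination in $\Phi(\Omega_S)$ (since $\tilde{D}^{\uparrow S\cup S}=\tilde{D}$), projection $\pi_S$ is the identity, and the information order on $\tilde{\Psi}_S$ mirrors inclusion on $\Phi(\Omega_S)$. Thus $(\tilde{M},S)$ is an atom relative to $S$ exactly when $\tilde{M}$ is a maximal coherent set of gambles on $\Omega_S$, i.e.\ an atom of $\Phi(\Omega_S)$ in the sense of Section~\ref{sec:Atoms}.

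For item~1, I would simply invoke the alternative characterisation of an atom by combination, applied in $\Phi(\Omega_S)$: since $\tilde{M}$ is an atom there, $\tilde{M}\cdot\tilde{D}=\mathcal{L}(\Omega_S)$ or $\tilde{M}\cdot\tilde{D}=\tilde{M}$, and attaching the common label $S$ gives the claim. Item~3 is the restatement of the same fact using the information order: if $(\tilde{D},S)\cdot(\tilde{M},S)\neq(\mathcal{L}(\Omega_S),S)$ then by item~1 it equals $(\tilde{M},S)$, which by the definition of information order means $(\tilde{D},S)\leq(\tilde{M},S)$. Item~4 follows from item~3 (or item~1) by taking $(\tilde{D},S)=(\tilde{M}_1,S)$: if $(\tilde{M}_1,S)\cdot(\tilde{M}_2,S)\neq(\mathcal{L}(\Omega_S),S)$ then $(\tilde{M}_1,S)\leq(\tilde{M}_2,S)$; but $\tilde{M}_1$ is itself an atom, so by the maximality property $(\tilde{M}_1,S)=(\tilde{M}_2,S)$ (the other option, $(\tilde{M}_1,S)=(\mathcal{L}(\Omega_S),S)$, is excluded because atoms are coherent, hence different from $\mathcal{L}(\Omega_S)$).

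Item~2 is the one genuinely new point and the main obstacle: I must show that projecting a relative atom to a smaller domain again yields a maximal coherent set, now on $\Omega_T$. Here $\pi_T(\tilde{M},S)=((\epsilon_T(\tilde{M})\cap\gambles_T(\Omega_S))^{\downarrow T},T)$, and I first note this is a coherent set on $\Omega_T$ (it is not $\mathcal{L}(\Omega_T)$, since by item~1 applied to a non-trivial $(\tilde{D},S)$ the projection of an atom is never the null element — more directly, E1 and the Null-and-Unit axiom of Theorem~\ref{th:LabInfAlg} give $\pi_T(\tilde{M},S)=(0,T)$ only if $(\tilde{M},S)=(0,S)$, impossible for an atom). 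To get maximality I would take any $(\tilde{D},T)\geq\pi_T(\tilde{M},S)$ in $\tilde{\Psi}$ and cylindrically extend it back to label $S$: using idempotency and the combination axiom of Theorem~\ref{th:LabInfAlg}, $(\tilde{M},S)\leq(\tilde{M},S)\cdot(\tilde{D},T)$, and combining $(\tilde{M},S)$ with something containing its own $T$-projection keeps it an atom-or-null by item~1, so $(\tilde{M},S)\cdot(\tilde{D},T)$ is either $(\mathcal{L}(\Omega_S),S)$ or $(\tilde{M},S)$. Projecting the resulting equation back to $T$ via Transitivity of Projection and the Combination axiom then forces $(\tilde{D},T)$ to be either $(\mathcal{L}(\Omega_T),T)$ or $\pi_T(\tilde{M},S)$, which is exactly the definition of $\pi_T(\tilde{M},S)$ being an atom relative to $T$. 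The delicate bookkeeping is tracking the cylindric extensions $\cdot^{\uparrow S\cup T}$ and restrictions $\cdot^{\downarrow T}$ through these manipulations, for which Lemma~\ref{le:cylExtProperties} supplies all the needed identities.
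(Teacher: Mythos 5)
Your proposal is correct, but it is worth noting that the paper itself does not prove this lemma at all: it declares the four items to be ``purely properties of information algebras'' and defers to the general theory in Kohlas (2003). What you have done is reconstruct that general argument from the axioms of Theorem~\ref{th:LabInfAlg}, which is a legitimate and self-contained alternative. Your reduction of items 1, 3 and 4 to the combination characterisation of atoms in $\Phi(\Omega_S)$ is exactly right (one cosmetic slip in item 4: the alternative excluded by maximality is $(\tilde{M}_2,S)=(\mathcal{L}(\Omega_S),S)$, not $(\tilde{M}_1,S)=(\mathcal{L}(\Omega_S),S)$, but the reason you give --- atoms are coherent --- disposes of it just the same). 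For item 2 your strategy is the standard one and it does close: for $(\tilde{D},T)\geq\pi_T(\tilde{M},S)$ one has $(\tilde{M},S)\cdot(\tilde{D},T)=(\tilde{M}\cdot\mathcal{C}(\tilde{D}^{\uparrow S}),S)$, which by the atom property on $\Omega_S$ is either $(\tilde{M},S)$ or the null; projecting back with the Combination axiom gives $\pi_T((\tilde{D},T)\cdot(\tilde{M},S))=(\tilde{D},T)\cdot\pi_T(\tilde{M},S)=(\tilde{D},T)$, and the Null-and-Unit axiom identifies $\pi_T$ of the null with $(\mathcal{L}(\Omega_T),T)$, so $(\tilde{D},T)$ is forced to be $\pi_T(\tilde{M},S)$ or the null on $T$. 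The trade-off between the two routes is the usual one: the paper's citation keeps the exposition short and emphasises that nothing here is specific to desirability, while your proof makes the lemma verifiable within the paper at the cost of the cylindrification bookkeeping you rightly flag.
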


These are purely properties of information algebras, for a proof see~\cite{kohlas03}. The properties of the algebra of being atomic, atomistic and completely atomistic carry over to the labeled version of the algebra of coherent sets. 
\begin{enumerate}
\item \textit{Atomic:} For any element $(\tilde{D},S)$ $\in \tilde{\Psi}_S, \; S \subseteq I$ with $\tilde{D} \in C(\Omega_S)$, there is an atom relative to $S$,  $(\tilde{M},S)$, so that $(\tilde{D},S) \leq (\tilde{M},S)$.
\item \textit{Atomistic:} For any element $(\tilde{D},S) \in \tilde{\Psi}_S, \; S \subseteq I$, with $\tilde{D} \in C(\Omega_S)$, $(\tilde{D},S) = \inf\{(\tilde{M} ,S):\tilde{M} \in At(\tilde{D})\}$.
\item \textit{Completly Atomistic:} For any, not empty, subset $A$ of $At(\Omega_S)$, $\inf\{(\tilde{M},S):\tilde{M} \in A\}$ exists and belongs to $\tilde{\Psi}_S$, for every $S \subseteq I$.
\end{enumerate}

As in the domain-free case these properties imply that the atoms determine the structure of the information algebra. This will be discussed in the next section.


\section{Set Algebras} \label{sec:SetAlgs}



Important instances of information algebras are \textit{set algebras}. We analyze here the set algebra whose elements are subsets of $\Omega$. The operation of combination is simply set intersection. Extraction is defined in terms of cylindrification: if $A$ is a subset of $\Omega$, then its cylindrification with respect to a subset $S$ of variables is defined as
\begin{eqnarray*}
\sigma_S(A) \coloneqq \{\omega \in \Omega: \exists \omega' \in A \textrm{ so that}\ \omega' \vert S = \omega \vert S\}.
\end{eqnarray*}
If $S= \emptyset$ then $\sigma_S(A)= \Omega$ for every $\emptyset \neq A \subseteq \Omega$. This is a saturation operator. The family of subsets of $\Omega$ with intersection as combination and cylindrification as extraction forms a domain-free information algebra~\citep{kohlas17}. Saturation operators are more generally defined relative to partitions or equivalence relations. In the present case we have, as before, the relations $\omega \equiv_S \omega'$ with $\omega, \omega' \in \Omega, \; S \subseteq I$, if $\omega \vert S = \omega' \vert S$. 

We show now that such a set algebra can be \emph{embedded} into the information algebra of coherent sets of gambles.\footnote{For the definition of embedding and in general of homomorphism between information algebras, see~\citealp{kohlas03}.} Define for any subset $A$ of $\Omega$
\begin{eqnarray*}
D_A \coloneqq \{f \in \mathcal{L}(\Omega): \inf_{\omega \in A} f(\omega) > 0\} \cup \mathcal{L}^+(\Omega).
\end{eqnarray*}
If $A$ is not empty, this is obviously a coherent set of gambles.
Now we have the following result.

\begin{theorem} \label{th:EmbedSetAlg}
For all subsets $A$ and $B$ of $\Omega$ and subsets $S$ of $I$
\begin{enumerate}
\item $D_\emptyset = \mathcal{L}(\Omega)$, $D_\Omega = \mathcal{L}^+(\Omega)$,
\item $D_A \cdot D_B = D_{A \cap B}$,
\item $\epsilon_S(D_A) = D_{\sigma_S(A)}$.
\end{enumerate}
\end{theorem}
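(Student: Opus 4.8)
The plan is to verify the three identities in turn, exploiting the explicit description of $D_A$ as the set of gambles strictly positive on $A$ together with $\gambles^+(\Omega)$. For item 1, the claim $D_\Omega = \gambles^+(\Omega)$ follows because a gamble with $\inf_{\omega \in \Omega} f(\omega) > 0$ is in particular nonnegative and nonzero, so the first set in the union is already contained in $\gambles^+(\Omega)$; and $D_\emptyset = \gambles(\Omega)$ because the infimum over the empty set is $+\infty > 0$, so the first set is all of $\gambles(\Omega)$. These are immediate once one is careful about the empty-infimum convention, which I would state explicitly.

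For item 2, I would argue by double inclusion. The inclusion $D_{A \cap B} \subseteq D_A \cdot D_B = \mathcal{C}(D_A \cup D_B)$ is easy: if $\inf_{A \cap B} f > 0$ then... actually the cleaner direction is $D_A \cdot D_B \subseteq D_{A \cap B}$, since $D_A, D_B \subseteq D_{A \cap B}$ (a gamble positive on $A$, or on $B$, is positive on $A \cap B$), hence $\mathcal{C}(D_A \cup D_B) \subseteq \mathcal{C}(D_{A \cap B}) = D_{A \cap B}$ because $D_{A\cap B}$ is coherent (or equals $\gambles(\Omega)$ when $A \cap B = \emptyset$). For the reverse inclusion $D_{A \cap B} \subseteq D_A \cdot D_B$, given $f$ with $\inf_{A \cap B} f = \varepsilon > 0$, I would exhibit $f$ as a positive combination of an element of $D_A$ and an element of $D_B$ (plus possibly something in $\gambles^+$): the idea is to split using indicator-like gambles of the cylinders, writing $f = g + h$ where $g$ is chosen large and positive on $A\setminus B$ and bounded below on $A$, and $h$ large and positive on $B \setminus A$; concretely one can take $g = f + c\,\mathbbm{1}_{\Omega \setminus A}$ and $h = c\,\mathbbm{1}_{\Omega \setminus A}$ won't quite work since $f$ need not be in $D_A$ — so instead one writes $f = (f + N \mathbbm{1}_{\Omega \setminus A}) - N\mathbbm{1}_{\Omega\setminus A}$ which fails positivity of the second piece; the correct move is $f + N\mathbbm{1}_{(\Omega\setminus A)} \in D_A$ for $N$ large (since $f$ is bounded), and symmetrically, and then recombine. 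The honest approach: show $f + g \in D_A$ and $g \in D_B$ for a suitable $g \in \gambles^+$-ish correction, using that $A \cap B$ carries the positivity and $\Omega \setminus A$, $\Omega \setminus B$ can be inflated. When $A \cap B = \emptyset$ both sides are $\gambles(\Omega)$, handled separately.

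For item 3, again double inclusion. First, $\epsilon_S(D_A) = \mathcal{C}(D_A \cap \gambles_S) \subseteq D_{\sigma_S(A)}$: if $f \in \gambles_S$ and $\inf_A f > 0$, then because $f$ is $S$-measurable its value at $\omega$ depends only on $\omega|S$, and every point of $\sigma_S(A)$ agrees on $S$ with some point of $A$, so $\inf_{\sigma_S(A)} f = \inf_A f > 0$, giving $D_A \cap \gambles_S \subseteq D_{\sigma_S(A)}$; then apply $\mathcal{C}$ and coherence of $D_{\sigma_S(A)}$. Conversely, given $f$ with $\inf_{\sigma_S(A)} f = \varepsilon > 0$, I would replace $f$ by its $S$-measurable lower envelope $\bar f(\omega) \coloneqq \inf_{\lambda|S = \omega|S} f(\lambda)$ (this is exactly the construction already used in the proof of Theorem~\ref{th:DomFreeInfAlg}, item 3, mutatis mutandis with inf in place of sup); then $\bar f$ is $S$-measurable, $\bar f \le f$ so $f - \bar f \in \gambles^+(\Omega) \cup \{0\}$, and $\inf_A \bar f \ge \varepsilon > 0$ because $\sigma_S(A) \supseteq A$ and for $\omega \in A$ the infimum defining $\bar f(\omega)$ ranges over points that all lie in $\sigma_S(A)$. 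Hence $\bar f \in D_A \cap \gambles_S$ and $f = \bar f + (f - \bar f) \in \mathcal{C}(D_A \cap \gambles_S) = \epsilon_S(D_A)$. The $S = \emptyset$ case must be checked against the convention $\sigma_\emptyset(A) = \Omega$ for $A \neq \emptyset$, matching $\epsilon_\emptyset(D_A) = \mathcal{C}(D_A \cap \mathbb{R}) = \gambles^+(\Omega) = D_\Omega$.

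The main obstacle I anticipate is the reverse inclusion in item 2: producing an explicit decomposition of a gamble positive on $A \cap B$ into a sum (or positive combination) of a gamble positive on $A$ and one positive on $B$. Boundedness of gambles is essential here — it lets one add a large enough constant multiple of a $\{0,1\}$-valued gamble supported off $A$ (respectively off $B$) without destroying membership in $\gambles^+$ — and one has to be careful that the "indicator" gambles one uses are themselves legitimate elements of $D_A$ or $D_B$ or $\gambles^+(\Omega)$. The measurability bookkeeping in item 3 is routine given that the analogous envelope argument already appears earlier in the paper, so I would cite that construction rather than repeat it.
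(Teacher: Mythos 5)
Items 1 and 3 of your proposal are sound: item 1 is immediate, and your lower-envelope argument for item 3 is essentially the paper's own proof (the paper uses the same $S$-measurable infimum envelope, merely shifted down by a small $\delta$ obtained from strict desirability of $D_{\sigma_S(A)}$; your unshifted version $\bar f(\omega) = \inf_{\lambda\vert S = \omega\vert S} f(\lambda)$ works just as well and is slightly cleaner). The genuine gap is in item 2, exactly where you flag it: you never produce the decomposition for $D_{A\cap B} \subseteq D_A \cdot D_B$, and your concrete attempts cannot be repaired. Adding $N$ times the indicator of $\Omega\setminus A$ to $f$ changes nothing on $A$ itself, whereas the real difficulty is that $f \in D_{A\cap B}$ may be arbitrarily negative on $A\setminus B$ and on $B\setminus A$ — precisely where the $D_A$-summand (respectively the $D_B$-summand) must still be bounded away from $0$. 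No correction supported off $A$ can fix the values of the $D_A$-summand on $A\setminus B$.

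The decomposition that works exploits instead that each summand is unconstrained off its own set: fix $\delta>0$ and set $f_1 \coloneqq \tfrac12 f$ on $(A\cap B)\cup(A\cup B)^c$, $f_1 \coloneqq \delta$ on $A\setminus B$, $f_1 \coloneqq f-\delta$ on $B\setminus A$, and $f_2 \coloneqq f - f_1$. Then $\inf_A f_1 = \min(\delta, \tfrac12\inf_{A\cap B} f) > 0$ and $\inf_B f_2 = \min(\delta, \tfrac12\inf_{A\cap B} f) > 0$, so $f = f_1 + f_2$ with $f_1\in D_A$ and $f_2\in D_B$, hence $f \in \mathcal{C}(D_A\cup D_B)$. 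Note also that the case $A\cap B=\emptyset$ is not settled by saying ``both sides are $\gambles(\Omega)$'': $D_{A\cap B}=\gambles(\Omega)$ is clear, but $D_A\cdot D_B=\gambles(\Omega)$ requires exhibiting $0\in\mathcal{E}(D_A\cup D_B)$. The same off-set trick does it: pick $f\in D_A$, $g\in D_B$, let $\tilde f$ equal $f$ on $A$, $-g$ on $B$, $0$ elsewhere, and $\tilde g \coloneqq -\tilde f$; then $\tilde f\in D_A$, $\tilde g\in D_B$ and $\tilde f+\tilde g=0$, so the combination is inconsistent.
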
 

\begin{proof}
\begin{enumerate}
    \item They follow from the definition.
    \item Notice that $D_A=\mathcal{L}^+$ or $D_B = \mathcal{L}^+$ if and only if $A= \Omega$ or $B= \Omega$. Clearly in this case we have immediately the result.

The same is true if $D_A= \gambles$ or $D_B = \gambles$ that is equivalent to have $A= \emptyset$ or $B= \emptyset$.

Now, suppose $D_A,D_B \neq \mathcal{L}^+$ and $D_A,D_B \neq \mathcal{L}$.

If $A \cap B = \emptyset$, then $D_{A \cap B} = \mathcal{L}(\Omega)$. 
By definition we have $D_A \cdot D_B \coloneqq \mathcal{C}(D_A \cup D_B)$. Consider $f \in D_A$ and $g \in D_B$. Since $A$ and $B$ are disjoint, we have $\tilde{f} \in D_A$ and $\tilde{g} \in D_B$, where $\tilde{f}, \; \tilde{g}$ are defined in the following way:

\begin{eqnarray*}
\tilde{f}(\omega) \coloneqq \left\{ \begin{array}{ll} f(\omega) & \textrm{for}\ \omega \in A, \\ -g(\omega) & \textrm{for}\ \omega \in B, \\ 0 & \textrm{for}\ \omega \in  (A\cup B)^c\end{array} \right.
\\
\tilde{g}(\omega) \coloneqq \left\{ \begin{array}{ll} -f(\omega) & \textrm{for}\ \omega \in A , \\ g(\omega) & \textrm{for}\ \omega \in B, \\ 0 & \textrm{for}\ \omega \in (A\cup B)^c. \end{array} \right.
\end{eqnarray*}
However, $\tilde{f} + \tilde{g} = 0 \in \mathcal{E}(D_A \cup D_B)$, hence $D_A \cdot D_B = \mathcal{L}(\Omega) = D_{A \cap B}$.

Assume then that $A \cap B \not= \emptyset$. Note that $D_A \cup D_B \subseteq D_{A \cap B}$ so that $D_A \cdot D_B$ is coherent and $D_A \cdot D_B \subseteq D_{A \cap B}$. Consider then a gamble $f \in D_{A \cap B}$. 
Select a $\delta > 0$ and define the two functions
\begin{eqnarray*}
f_1(\omega) \coloneqq \left\{ \begin{array}{ll} 1/2f(\omega) & \textrm{for}\ \omega \in A \cap B, \omega \in  (A\cup B)^c, \\ \delta & \textrm{for}\ \omega \in A \setminus B, \\ f(\omega) - \delta & \textrm{for}\ \omega \in B \setminus A \end{array} \right.
\\
f_2(\omega) \coloneqq \left\{ \begin{array}{ll} 1/2f(\omega) & \textrm{for}\ \omega \in A \cap B, \omega \in  (A\cup B)^c, \\ f(\omega) - \delta & \textrm{for}\ \omega \in A \setminus B, \\ \delta & \textrm{for}\ \omega \in B \setminus A. \end{array} \right.
\end{eqnarray*}
Then, $f = f_1 + f_2$ and $f_1 \in D_A$, $f_2 \in D_B$. Therefore $f \in C(D_A \cup D_B) \eqqcolon D_A \cdot D_B$, hence $D_A \cdot D_B = D_{A \cap B}$. 

\item If $A$ is empty, then $\epsilon_S(D_\emptyset) = \mathcal{L}(\Omega)$ so that item 3 holds in this case. So assume $A \not= \emptyset$. Then we have
\begin{eqnarray*}
\epsilon_S(D_A) \coloneqq \mathcal{C}(D_A \cap \mathcal{L}_S) \coloneqq \posi(\mathcal{L}^+(\Omega) \cup (D_A \cap \mathcal{L}_S)). 
\end{eqnarray*}
Consider then a gamble $f \in D_A \cap \mathcal{L}_S$. Then, we have $\inf_A f > 0$ and $f$ is $S$-measurable. So, if $\omega \vert S = \omega' \vert S$ for some $\omega' \in A$ and $\omega \in \Omega$, then $f(\omega) = f(\omega')$. Therefore $\inf_{\sigma_S(A)} f = \inf_A f > 0$, hence $f \in D_{\sigma_S(A)}$. Then $\mathcal{C}(D_A \cap \mathcal{L}_S) \subseteq \mathcal{C}(D_{\sigma_S(A)})=D_{\sigma_S(A)}$.

Conversely, consider a gamble $f \in D_{\sigma_S(A)}$. $D_{\sigma_S(A)}$ is a strictly desirable set of gambles, hence, if $f \in D_{\sigma_S(A)}$, $f \in \mathcal{L}^+(\Omega)$ or there exists $\delta >0$ such that $f-\delta \in D_{\sigma_S(A)}$. If $f \in \mathcal{L}^+(\Omega)$, then $f \in \epsilon_S(D_A)$. Otherwise, let us define for every $\omega \in \Omega$,
\begin{eqnarray*}
g(\omega) \coloneqq \inf_{\omega' \vert S = \omega \vert S} f(\omega') - \delta.
\end{eqnarray*}

If $\omega \in A$, then $g(\omega) > 0$ since $\inf_{\sigma_S(A)} (f- \delta) > 0$. So we have $\inf_A g \ge 0$ and $g$ is $S$-measurable. However, $ \inf_A ( g + \delta) = \inf_A g + \delta >0$ hence $(g + \delta) \in D_A \cap \mathcal{L}_S$ and $f \geq g+ \delta$. Therefore
$f \in \mathcal{C}(D_A \cap \mathcal{L}_S)$.
\end{enumerate}
\end{proof}

This theorem shows that the map $A \mapsto D_A$ is an homomorphism between the set algebra and the information algebra of coherent sets of gambles. Furthermore, the map is one-to-one, hence it is an embedding of the set algebra into the algebra of coherent sets of gambles. This is a manifestation of the fact that the theory of desirable gambles covers among other things propositional and predicate logic.

Recall that $\Phi$ forms a lattice (Section~\ref{sec:DesGambles}) where meet is set intersection. This is also the case for subsets of $\Omega$; they form even a Boolean lattice. We need however to stress that in information order, $A \leq B$ iff $B \subseteq A$. That is, information order is the opposite of the usual inclusion order between sets. This means that meet is set union. Given this observation, it turns out that the homomorphism found is even a lattice homomorphism.

\begin{theorem}
For all subsets $A$ and $B$ of $\Omega$,
\begin{eqnarray*}
D_A \cap D_B = D_{A \cup B}.
\end{eqnarray*}
\end{theorem}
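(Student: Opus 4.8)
The plan is to prove $D_A \cap D_B = D_{A \cup B}$ by double inclusion, working directly from the definition $D_C = \{f \in \gambles(\Omega) : \inf_{\omega \in C} f(\omega) > 0\} \cup \gambles^+(\Omega)$, and handling the degenerate cases ($A$ or $B$ empty, or equal to $\Omega$) separately at the start. If $A = \emptyset$ then $D_A = \gambles(\Omega)$, so $D_A \cap D_B = D_B = D_{\emptyset \cup B}$, and symmetrically for $B$; if $A = \Omega$ then $D_A = \gambles^+(\Omega) \subseteq D_B$, so $D_A \cap D_B = \gambles^+(\Omega) = D_\Omega = D_{\Omega \cup B}$, and symmetrically. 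So assume $\emptyset \neq A, B \subsetneq \Omega$.

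For the inclusion $D_{A \cup B} \subseteq D_A \cap D_B$: take $f \in D_{A \cup B}$. If $f \in \gambles^+(\Omega)$, then trivially $f \in D_A$ and $f \in D_B$. Otherwise $\inf_{\omega \in A \cup B} f(\omega) > 0$; since $A \subseteq A \cup B$ and $B \subseteq A \cup B$, we get $\inf_A f \geq \inf_{A \cup B} f > 0$ and likewise $\inf_B f > 0$, so $f \in D_A \cap D_B$. This direction is immediate.

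For the reverse inclusion $D_A \cap D_B \subseteq D_{A \cup B}$: take $f \in D_A \cap D_B$. The subtlety is that $f$ could lie in $D_A$ because $f \in \gambles^+(\Omega)$ while lying in $D_B$ because $\inf_B f > 0$ (or vice versa), so one cannot simply add the two infimum conditions. I would argue by cases. If $f \in \gambles^+(\Omega)$, then $f \in D_{A \cup B}$ directly. If $f \notin \gambles^+(\Omega)$, then membership of $f$ in $D_A$ forces $\inf_A f > 0$ (the $\gambles^+$ alternative is excluded), and similarly $\inf_B f > 0$; hence $\inf_{A \cup B} f = \min\{\inf_A f, \inf_B f\} > 0$, so $f \in D_{A \cup B}$. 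The main point to get right is exactly this observation that $f \notin \gambles^+(\Omega)$ collapses the disjunction in the definition of $D_A$ (and of $D_B$) to its first clause; once that is noted the computation $\inf_{A \cup B} f = \min\{\inf_A f, \inf_B f\}$ is routine. I expect no real obstacle here — the result is essentially a bookkeeping statement about infima, with the only care needed being the separation of the $\gambles^+(\Omega)$ case — and in fact it can be read off more abstractly from Theorem~\ref{th:EmbedSetAlg}: $D$ is an order-reversing embedding taking $\cap$ (meet, i.e. union of sets) consistently, which is precisely the content of the lattice-homomorphism remark preceding the statement.
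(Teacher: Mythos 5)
Your proof is correct and follows essentially the same route as the paper's, namely a direct verification from the definition via the identity $\inf_{A\cup B} f = \min\{\inf_A f, \inf_B f\}$. You are in fact more careful than the paper's one-line argument, which glosses over the $\gambles^+(\Omega)$ clause in the definition of $D_A$ that your case split handles explicitly.
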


\begin{proof}
A gamble $f$ belongs to $D_A \cap D_B$ if and only if both $\inf_A f$ and $\inf_B f$ are both positive. But then it belongs to $D_{A \cup B}$.
\end{proof}

But there is much more about set algebras and information algebras of coherent sets of gambles. And this depends on the atomisticity of the information algebra of coherent sets of gambles. We stick in our discussion here to the domain-free view. The labeled view of what follows has been described in~\cite{kohlas03}. In the domain-free case the result we prove below states that $\Phi$ is homomorphic to a set algebra.

Consider the set of all atoms, that is all maximal (coherent) sets $At(\Phi)$, and define equivalence relations $M \equiv_S M'$ if $\epsilon_S(M) = \epsilon_S(M')$ in $At(\Phi)$ for all subsets of variables $S \subseteq I$. In what follows we limit our analysis to this kind of equivalence relations. Associated with them, there are saturation operators $\sigma_S$ defined by
\begin{eqnarray*}
\sigma_S(X) \coloneqq \{M \in At(\Phi):\exists M' \in X \textrm{ so that}\ M \equiv_S M'\},
\end{eqnarray*}
for any subset $X$ of $At(\Phi)$ and $S \subseteq I$. Any saturation operator satisfies a number of important properties which are related to information algebras.

\begin{lemma}
Let $\sigma_S$ be a saturation operator on $At(\Phi)$ for some $S \subseteq I$, associated with the equivalence relation $\equiv_S$, and $X,Y$ subsets of $At(\Phi)$. Then
\begin{enumerate}
\item $\sigma_S(\emptyset) = \emptyset$,
\item $X \subseteq \sigma_S(X)$,
\item $\sigma_S(\sigma_S(X) \cap Y) = \sigma_S(X) \cap \sigma_S(Y)$,
\item $X \subseteq Y$ implies $\sigma_S(X) \subseteq \sigma_S(Y)$,
\item $\sigma_S(\sigma_S(X)) = \sigma_S(X)$,
\item $X = \sigma_S(X)$ and $Y = \sigma_S(Y)$ imply $X \cap Y = \sigma_S(X \cap Y)$.
\end{enumerate}
\end{lemma}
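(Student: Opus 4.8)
The plan is to reduce all six statements to the single observation that $\equiv_S$ is an equivalence relation on $At(\Phi)$. This is immediate: by definition $M \equiv_S M'$ means $\epsilon_S(M) = \epsilon_S(M')$, and equality of the values $\epsilon_S(\cdot)$ is automatically reflexive, symmetric and transitive. Once this is noted, none of the items depends any longer on the information-algebraic structure of $\Phi$; they become purely statements about the existential image of a subset of $At(\Phi)$ under an equivalence relation. I would establish them in the order $1,2,4,5,3,6$, since the later ones can reuse the earlier ones.

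Items $1$, $2$ and $4$ are immediate. We have $\sigma_S(\emptyset) = \emptyset$ because there is no witness $M' \in \emptyset$; $X \subseteq \sigma_S(X)$ because every $M \in X$ serves as its own witness, by reflexivity of $\equiv_S$; and $X \subseteq Y$ implies $\sigma_S(X) \subseteq \sigma_S(Y)$ because enlarging $X$ can only enlarge the supply of available witnesses. For item $5$, the inclusion $\sigma_S(X) \subseteq \sigma_S(\sigma_S(X))$ is a special case of item $2$; conversely, if $M \equiv_S M'$ with $M' \in \sigma_S(X)$, pick $M'' \in X$ with $M' \equiv_S M''$, so $M \equiv_S M''$ by transitivity and hence $M \in \sigma_S(X)$.

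The only item with genuine content is item $3$, the quasi-distributive law $\sigma_S(\sigma_S(X) \cap Y) = \sigma_S(X) \cap \sigma_S(Y)$. For $\subseteq$: if $M$ has a witness $M' \in \sigma_S(X) \cap Y$, then $M' \in \sigma_S(X)$ together with transitivity gives $M \in \sigma_S(X)$, while $M' \in Y$ and $M \equiv_S M'$ give $M \in \sigma_S(Y)$. For $\supseteq$: given $M \in \sigma_S(X) \cap \sigma_S(Y)$, choose a witness $M' \in Y$ for $M \in \sigma_S(Y)$ and a witness $M'' \in X$ for $M \in \sigma_S(X)$; then $M' \equiv_S M \equiv_S M''$, so by symmetry and transitivity $M' \equiv_S M''$, hence $M' \in \sigma_S(X)$, so $M' \in \sigma_S(X) \cap Y$ is a witness for $M$. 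Finally item $6$ follows formally: assuming $X = \sigma_S(X)$ and $Y = \sigma_S(Y)$, item $2$ gives $X \cap Y \subseteq \sigma_S(X \cap Y)$, while item $3$ gives $\sigma_S(X \cap Y) = \sigma_S(\sigma_S(X) \cap Y) = \sigma_S(X) \cap \sigma_S(Y) = X \cap Y$.

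The main obstacle, such as it is, is item $3$, and within it the inclusion $\supseteq$, where one must remember to invoke symmetry of $\equiv_S$ in order to transport the $X$-witness back onto the chosen $Y$-witness; apart from that bit of witness-chasing the argument is entirely routine. Together the six properties exhibit $(\mathcal{P}(At(\Phi)), \cap, (\sigma_S)_{S \subseteq I})$ as a domain-free information algebra of the set-algebra type referred to at the start of the section, which is the point of the lemma.
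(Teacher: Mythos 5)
Your proof is correct and follows essentially the same route as the paper's: both reduce everything to reflexivity, symmetry and transitivity of $\equiv_S$, and the decisive step --- the reverse inclusion in item 3, obtained by transporting the $X$-witness onto the $Y$-witness via transitivity --- is identical to the paper's argument. The only cosmetic difference is one of ordering: you prove item 3 directly (both inclusions) and then derive item 6 from items 2 and 3, whereas the paper proves item 6 first and then uses it, together with item 4, for the forward inclusion of item 3.
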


\begin{proof}
Item 1., 2., 4. and 5. are obvious.

For 6. If $M \in \sigma_S(X \cap Y)$, then there is a $M' \in X \cap Y$ so that $M \equiv_S M'$. In particular, $M' \in X$, hence $M \in \sigma_S(X)$. At the same time, $M' \in Y$, hence $M \in \sigma_S(Y)$. Then $M \in \sigma_S(X) \cap \sigma_S(Y)= X \cap Y$. By 2. we must then have equality.

For 3. Observe that $\sigma_S(X) \cap Y \subseteq \sigma_S(X) \cap \sigma_S(Y)$, so that $\sigma_S(\sigma_S(X) \cap Y) \subseteq \sigma_S(\sigma_S(X) \cap \sigma_S(Y)) = \sigma_S(X) \cap \sigma_S(Y)$
by 4. and 6. For the reverse inclusion, note that $M \in \sigma_S(X) \cap \sigma_S(Y)$ means that there are $M' \in X$, $M'' \in Y$ so that $M \equiv_S M'$ and $M \equiv_S M''$. By transitivity we have then $M'' \equiv_S M'$ so that $M'' \in \sigma_S(X)$. Then $M \equiv_S M''$ and $M'' \in \sigma_S(X) \cap Y$ imply $M \in \sigma_S(\sigma_S(X) \cap Y)$. This concludes the proof.
\end{proof}

The first three items of the theorem correspond to the properties E1 to E3 of an existential quantifier in an information algebra, if combination is intersection. This is a first step to show that the subsets of $At(\Phi)$ indeed form an information algebra with intersection as combination and saturation operators $\sigma_S$ for $S \subseteq I$ as extraction operators. The missing item will be verified below.

First of all we need to define the combination of two equivalence relations.
\begin{definition}
Given two equivalence relations $\equiv_S$, $\equiv_T$ in $At(\Phi)$ with $S,T \subseteq I$, their combination is defined as
\begin{eqnarray*}
\equiv_S \cdot \equiv_T\ \coloneqq  \{(M,M') \in At(\Phi) \times At(\Phi):\exists M'' \in At(\Phi), \textrm{ so that}\ M \equiv_S M'' \equiv_T M'\}.
\end{eqnarray*}
\end{definition}
In general this is no more an equivalence relation. In our case however it is and the relations commute as the following lemma shows.

\begin{lemma} \label{le:ComEqRel}
For the equivalence relations $\equiv_S$ and $\equiv_T$ in $At(\Phi)$ with $S,T \subseteq I$, we have
\begin{eqnarray*}
\equiv_S \cdot \equiv_T\ =\ \equiv_T \cdot \equiv_S\ =\ \equiv_{S \cap T}.
\end{eqnarray*}
\end{lemma}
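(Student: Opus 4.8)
The plan is to prove the two claimed equalities $\equiv_S \cdot \equiv_T\ =\ \equiv_{S\cap T}$ and $\equiv_T\cdot\equiv_S\ =\ \equiv_{S\cap T}$; by symmetry of $S$ and $T$ it suffices to establish the first one (and commutativity then follows for free). So I would fix maximal coherent sets $M, M'$ and argue both inclusions of $(M,M')\in\ \equiv_S\cdot\equiv_T\iff \epsilon_{S\cap T}(M)=\epsilon_{S\cap T}(M')$.

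For the easy inclusion, suppose $(M,M')\in\ \equiv_S\cdot\equiv_T$, so there is an atom $M''$ with $\epsilon_S(M)=\epsilon_S(M'')$ and $\epsilon_T(M'')=\epsilon_T(M')$. Applying $\epsilon_{S\cap T}$ and using part 3 of Theorem~\ref{th:DomFreeInfAlg} (that $\epsilon_S\circ\epsilon_T=\epsilon_{S\cap T}$, together with $\epsilon_S\circ\epsilon_S=\epsilon_S$), I get $\epsilon_{S\cap T}(M)=\epsilon_{S\cap T}(\epsilon_S(M))=\epsilon_{S\cap T}(\epsilon_S(M''))=\epsilon_{S\cap T}(M'')$, and symmetrically $\epsilon_{S\cap T}(M')=\epsilon_{S\cap T}(M'')$; hence $\epsilon_{S\cap T}(M)=\epsilon_{S\cap T}(M')$, i.e. $(M,M')\in\ \equiv_{S\cap T}$. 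This direction needs nothing about atomisticity.

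The harder inclusion is the converse: given two atoms $M,M'$ with $\epsilon_{S\cap T}(M)=\epsilon_{S\cap T}(M')$, I must produce an atom $M''$ that is $\equiv_S$-equivalent to $M$ and $\equiv_T$-equivalent to $M'$. The natural candidate is a maximal coherent set extending the combination $\epsilon_S(M)\cdot\epsilon_T(M')$ — but first I must check this combination is \emph{consistent}, i.e.\ that $\mathcal{C}(\epsilon_S(M)\cup\epsilon_T(M'))\neq\mathcal{L}(\Omega)$; here the hypothesis $\epsilon_{S\cap T}(M)=\epsilon_{S\cap T}(M')$ is exactly what should be needed, presumably via the reasoning in the E3 part of Theorem~\ref{th:DomFreeInfAlg} (the ``$0=f+g'$ with $f\in D_1\cap\gambles_S$, $g'\in D_2\cap\gambles_S$'' trick) to rule out a partial-loss combination. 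Once consistency is in hand, I take $M''\in At(\epsilon_S(M)\cdot\epsilon_T(M'))$ (nonempty because the algebra is atomic, Section~\ref{sec:Atoms}). Then $\epsilon_S(M)\leq M''$ and $\epsilon_S(M)$ has support $S$; I would like to conclude $\epsilon_S(M'')=\epsilon_S(M)$, using part 2 of Lemma~\ref{le:PropOfAtoms} (that projections of atoms are atoms relative to the smaller domain) to see that $\epsilon_S(M'')$ is a relative atom on $S$ above $\epsilon_S(M)$, and then using that $\epsilon_S(M)$ is \emph{itself} maximal relative to $S$ (because $M$ is an atom, again Lemma~\ref{le:PropOfAtoms}(2)) to force equality. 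Symmetrically $\epsilon_T(M'')=\epsilon_T(M')$. Hence $M\equiv_S M''\equiv_T M'$, giving $(M,M')\in\ \equiv_S\cdot\equiv_T$.

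The main obstacle I anticipate is the consistency check for $\epsilon_S(M)\cdot\epsilon_T(M')$: one has to show that the agreement of $M$ and $M'$ on the overlap $S\cap T$ genuinely prevents $0$ from appearing in the natural extension of the union, and the cleanest route is probably to reuse verbatim the $S\cap T$-measurability argument already used for axiom E3 in the proof of Theorem~\ref{th:DomFreeInfAlg}, perhaps combined with Lemma~\ref{le:Measurability}. A secondary point requiring care is justifying that $\epsilon_S(M)$ is maximal among coherent subsets of $\gambles_S$ — this is the content of the relative-atom machinery of Section~\ref{sec:Atoms} and Lemma~\ref{le:PropOfAtoms}, so I would cite it rather than reprove it.
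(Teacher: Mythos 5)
Your proposal is correct and follows the same architecture as the paper's proof: the easy inclusion by applying $\epsilon_{S\cap T}$ to both legs of the chain, and the converse by choosing an atom $M''$ dominating $\epsilon_S(M)\cdot\epsilon_T(M')$ once that combination is shown to be consistent, then forcing $\epsilon_S(M'')=\epsilon_S(M)$ and $\epsilon_T(M'')=\epsilon_T(M')$. The one point where you diverge is the consistency check, which you flag as the main obstacle and propose to settle by re-running the gamble-level measurability argument from the proof of E3. The paper avoids this entirely with a purely algebraic computation: if $\epsilon_S(M)\cdot\epsilon_T(M')=0$, then applying $\epsilon_S$ and using E1, E3 and $\epsilon_S\circ\epsilon_T=\epsilon_{S\cap T}$ gives $\epsilon_S(M)\cdot\epsilon_{S\cap T}(M')=0$; applying $\epsilon_{S\cap T}$ once more and invoking the hypothesis $\epsilon_{S\cap T}(M)=\epsilon_{S\cap T}(M')$ together with idempotency yields $\epsilon_{S\cap T}(M)=0$, which contradicts $M$ being an atom (by E2 it would force $M=\mathcal{L}(\Omega)$). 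So no measurability or $\posi$-level reasoning is needed for this step, only the quantifier axioms; this also keeps the lemma a pure statement about atomistic information algebras rather than about gambles. Your handling of the final step via relative atoms (Lemma~\ref{le:PropOfAtoms}) is a legitimate substitute for the paper's explicit computation $\epsilon_S(M'')=\epsilon_S(M'')\cdot\epsilon_S(M)=\epsilon_S(\epsilon_S(M'')\cdot M)=\epsilon_S(M)$, and the two are equivalent in substance.
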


\begin{proof}
Let $(M,M') \in \equiv_S \cdot \equiv_T$ so that there is an $M''$ such that $\epsilon_S(M) = \epsilon_S(M'')$ and $\epsilon_T(M') = \epsilon_T(M'')$. It follows that $\epsilon_{S \cap T}(M) = \epsilon_T(\epsilon_S(M)) = \epsilon_T(\epsilon_S(M'')) = \epsilon_{S \cap T}(M'')$. Similarly we obtain $\epsilon_{S \cap T}(M') = \epsilon_{S \cap T}(M'')$. But this shows that $(M,M') \in\ \equiv_{S \cap T}$. 

Conversely, suppose $(M,M') \in \equiv_{S \cap T}$, that is $\epsilon_{S \cap T}(M) = \epsilon_{S \cap T}(M')$. We claim that $\epsilon_S(M) \cdot \epsilon_T(M') \not= 0$. If, on the contrary  $\epsilon_S(M) \cdot \epsilon_T(M') = 0$, then $\epsilon_S(\epsilon_S(M) \cdot \epsilon_T(M')) = \epsilon_S(M) \cdot \epsilon_{S \cap T}(M') = 0$ and further
$\epsilon_{S \cap T}(\epsilon_S(M) \cdot \epsilon_{S \cap T}(M')) = \epsilon_{S \cap T}(M) \cdot \epsilon_{S \cap T}(M') = \epsilon_{S \cap T}(M) = 0$. But since $M$ is an atom, this is a contradiction. So there is an atom $M'' \in At(\epsilon_S(M) \cdot \epsilon_T(M'))$ so that $\epsilon_S(M) \cdot \epsilon_T(M') \leq M''$. Then $\epsilon_S(M'') \geq \epsilon_S(M) \cdot \epsilon_{S \cap T}(M') = \epsilon_S(M)$. Therefore $\epsilon_S(M'') = \epsilon_S(M'') \cdot \epsilon_S(M) = \epsilon_S(\epsilon_S(M'') \cdot M) = \epsilon_S(M)$ since $M$ is an atom. In the same way, we deduce $\epsilon_T(M'') = \epsilon_T(M')$. But this means that $(M,M') \in\ \equiv_S \cdot \equiv_T$ and we have proved $\equiv_S \cdot \equiv_T\ =\ \equiv_{S \cap T}$. The other equality follows by symmetry.
\end{proof}
 
 As a corollary, it follows that the associated saturation operators commute. 
  
 \begin{lemma}
 For the equivalence relations $\equiv_S$ and $\equiv_T$ in $At(\Phi)$, $S,T \subseteq I$ we have
 \begin{eqnarray*}
\sigma_S \circ \sigma_T = \sigma_T \circ \sigma_S = \sigma_{S \cap T}.
\end{eqnarray*}
\end{lemma}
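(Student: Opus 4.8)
The statement to prove is that for equivalence relations $\equiv_S$ and $\equiv_T$ on $At(\Phi)$ (with $S,T \subseteq I$), the associated saturation operators satisfy $\sigma_S \circ \sigma_T = \sigma_T \circ \sigma_S = \sigma_{S \cap T}$. This is billed as a corollary of Lemma~\ref{le:ComEqRel}, so the plan is to deduce it from the identity $\equiv_S \cdot \equiv_T\ =\ \equiv_T \cdot \equiv_S\ =\ \equiv_{S \cap T}$ together with a general fact: for any two equivalence relations $\rho,\tau$ on a set whose relational composition $\rho \cdot \tau$ is again an equivalence relation, the saturation operator of $\rho\cdot\tau$ equals the composition (in either order) of the individual saturation operators. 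Once this general fact is in hand, applying it to $\rho = \equiv_S$, $\tau = \equiv_T$ and invoking Lemma~\ref{le:ComEqRel} gives $\sigma_S \circ \sigma_T = \sigma_{\equiv_S \cdot \equiv_T} = \sigma_{S \cap T}$, and symmetrically $\sigma_T \circ \sigma_S = \sigma_{S \cap T}$, which is exactly the claim.

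So the core of the argument is the unfolding of definitions. First I would show $\sigma_S(\sigma_T(X)) \subseteq \sigma_{S\cap T}(X)$ for an arbitrary $X \subseteq At(\Phi)$: if $M \in \sigma_S(\sigma_T(X))$, then there is $M''$ with $M \equiv_S M''$ and $M'' \in \sigma_T(X)$, hence there is $M' \in X$ with $M'' \equiv_T M'$; this exhibits the pair $(M,M')$ in $\equiv_S \cdot \equiv_T$, which by Lemma~\ref{le:ComEqRel} equals $\equiv_{S\cap T}$, so $M \equiv_{S\cap T} M'$ with $M' \in X$, i.e.\ $M \in \sigma_{S\cap T}(X)$. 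For the reverse inclusion, if $M \in \sigma_{S\cap T}(X)$ there is $M' \in X$ with $M \equiv_{S\cap T} M'$; since $\equiv_{S\cap T}\ =\ \equiv_S \cdot \equiv_T$ there is an intermediate $M''$ with $M \equiv_S M'' \equiv_T M'$, and then $M'' \in \sigma_T(X)$ (witnessed by $M'$) and $M \in \sigma_S(\sigma_T(X))$ (witnessed by $M''$). This gives $\sigma_S \circ \sigma_T = \sigma_{S\cap T}$, and the identical argument with the roles of $S$ and $T$ swapped — using the other equality $\equiv_T \cdot \equiv_S\ =\ \equiv_{S\cap T}$ from Lemma~\ref{le:ComEqRel} — yields $\sigma_T \circ \sigma_S = \sigma_{S\cap T}$.

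There is essentially no obstacle here: the proof is a pure set-theoretic diagram chase, and all the substantive work (in particular the non-obvious direction showing $\equiv_{S\cap T} \subseteq \equiv_S \cdot \equiv_T$, which relied on atomicity and on axiom E3 via $\epsilon_{S\cap T}(\epsilon_S(M)\cdot\epsilon_T(M'))$-type manipulations) has already been discharged in Lemma~\ref{le:ComEqRel}. The only thing to be a little careful about is to keep the witnesses straight when chaining the memberships, and to state the argument for one composition order and then simply remark that symmetry (i.e.\ the second equality in Lemma~\ref{le:ComEqRel}) handles the other. If one wanted to avoid even naming a general lemma about composed equivalence relations, one can just inline the two-inclusion argument above directly, which is what I would do to keep the corollary short.
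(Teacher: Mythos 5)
Your proposal is correct and follows essentially the same route as the paper: unfold the definition of $\sigma_S\circ\sigma_T$ to see it is the saturation operator of the composed relation $\equiv_S\cdot\equiv_T$, invoke Lemma~\ref{le:ComEqRel} to replace that relation by $\equiv_{S\cap T}$, and obtain the other order by symmetry. The only cosmetic difference is that you spell out the two inclusions separately where the paper writes a single chain of set equalities.
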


\begin{proof}
For any subset $X$ of $At(\Phi)$ we have
\begin{eqnarray*}
\lefteqn{\sigma_S \circ \sigma_T(X)  \coloneqq \{M \in At(\Phi):\exists M' \in X,\exists M'' \in At(\Phi) \textrm{ so that }M \equiv_S M'' \equiv_T M'\} } \\
&&= \{M \in At(\Phi):\exists M' \in X \textrm{ so tat } M \equiv_S \cdot \equiv_T M'\}  \\
&&= \{M \in At(\Phi):\exists M' \in X \textrm{ so that } M \equiv_{S \cap T} M'\} \eqqcolon \sigma_{S \cap T}(X).
\end{eqnarray*}
This proves that $\sigma_S \circ \sigma_T = \sigma_{S \cap T}$. The remaining equality follows by symmetry.
\end{proof}
 
By this result, we have established that $\mathcal{P}(At(\Phi))$ with intersection as combination and saturation as extraction satisfies all items of  Theorem~\ref{th:DomFreeInfAlg} (clearly we have also $\sigma_I(X) =X$ for every $X \subseteq At(\Phi)$). This means that it is a domain-free information algebra. Since its elements are subsets and combination and extraction are set operations, it is a \textit{set algebra}. But in addition $\mathcal{P}(At(\Phi))$ is also a complete, Boolean lattice under inclusion, which corresponds to information order in $\mathcal{P}(At(\Phi))$.

Now in the following theorem we show that 
$D \mapsto At(D)$ is an information algebra embedding and also maintain arbitrary joins. Recall that in information order in $\Phi$ we have $D_1 \leq D_2$ if and only if $D_1 \subseteq D_2$. So information order is inclusion. In $\mathcal{P}(At(\Phi))$ combination is intersection so the information order is $X \leq Y$ if $X \cap Y = Y$, hence $Y \subseteq X$. Therefore, information order in $\mathcal{P}(At(\Phi))$ is the inverse of inclusion, join is intersection in this order and meet union. Remark that the following theorems are purely results of atomistic information algebras and not specific to the algebra of coherent sets of gambles. Part of the following has been developed in~\cite{kohlasschmid16}.
\begin{theorem} \label{th:SertAlgInfom}
The map $D \mapsto At(D)$ that takes an element $D \in \Phi$ and maps it to $At(D) \in \mathcal{P}(At(\Phi))$, 
is injective. Moreover, if $D$, $D_1,D_2$ belong to $\Phi$ and $S$ is a subset of $I$, the following are valid
\begin{enumerate}
\item $At(\mathcal{L}^+(\Omega)) = At(\Phi)$ and $At(\mathcal{L}(\Omega)) = \emptyset$,
\item $At(D_1 \cdot D_2) = At(D_1) \cap At(D_2)$,
\item $At(\epsilon_S(D)) = \sigma_S(At(D))$.
\end{enumerate}
\end{theorem}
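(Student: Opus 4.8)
The plan is to prove the three displayed identities together with injectivity, leaning heavily on the atomistic structure of $\Phi$ recorded in Section~\ref{sec:Atoms} (in particular the facts, proved in~\cite{CooQua12}, that every coherent $D$ satisfies $D = \bigcap At(D)$ and that $At(D)\neq\emptyset$ for $D$ coherent), and on properties E1--E3 of the extraction operators from Theorem~\ref{th:DomFreeInfAlg} together with Lemma~\ref{le:ComEqRel}. I would organise the argument in the natural order: item 1, then injectivity, then item 2, then item 3, since item 3 is where the real work lies.

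\textbf{Item 1 and injectivity.} Item 1 is immediate: every atom $M$ satisfies $\mathcal{L}^+(\Omega)\le M$ since $\mathcal{L}^+(\Omega)$ is the unit of combination, so $At(\mathcal{L}^+(\Omega))=At(\Phi)$; and no atom $M$ satisfies $M = \mathcal{L}(\Omega)$ (atoms are coherent, hence $\neq\mathcal{L}(\Omega)$) while $\mathcal{L}(\Omega)\le M$ would force $M=\mathcal{L}(\Omega)$, so $At(\mathcal{L}(\Omega))=\emptyset$. For injectivity: if $D$ is coherent then $D=\bigcap At(D)$ by atomisticity, so $At(D_1)=At(D_2)$ with both $D_1,D_2$ coherent gives $D_1=\bigcap At(D_1)=\bigcap At(D_2)=D_2$; and if $At(D_1)=At(D_2)=\emptyset$ then neither is coherent, so both equal $\mathcal{L}(\Omega)$ by item 1 (reading it contrapositively, together with the fact that a non-coherent element of $\Phi$ is exactly $\mathcal{L}(\Omega)$). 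Hence $D\mapsto At(D)$ is injective.

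\textbf{Item 2.} Here $\le$ is information order, i.e.\ inclusion, and combination $D_1\cdot D_2$ is the join $D_1\vee D_2$. The inclusion $At(D_1\cdot D_2)\subseteq At(D_1)\cap At(D_2)$ is trivial because $D_i\le D_1\cdot D_2$ for $i=1,2$. Conversely, if $M$ is an atom with $D_1\le M$ and $D_2\le M$, then $M$ is an upper bound of $\{D_1,D_2\}$, so $D_1\vee D_2\le M$, i.e.\ $M\in At(D_1\cdot D_2)$; note this also covers the case $D_1\cdot D_2=\mathcal{L}(\Omega)$, where the right side is empty because no atom can dominate both $D_1$ and $D_2$.

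\textbf{Item 3 --- the main obstacle.} This is the heart of the theorem and the step I expect to be hardest, because it must relate the algebraic extraction $\epsilon_S$ on $\Phi$ to the purely set-theoretic saturation $\sigma_S$ on $\mathcal{P}(At(\Phi))$, and the saturation operator is defined via the equivalence $M\equiv_S M'\iff\epsilon_S(M)=\epsilon_S(M')$. For the inclusion $\sigma_S(At(D))\subseteq At(\epsilon_S(D))$: take $M\equiv_S M'$ with $D\le M'$. Then $\epsilon_S(D)\le\epsilon_S(M')=\epsilon_S(M)\le M$ (using E2 monotonicity of $\epsilon_S$, which follows from E2 plus the lattice structure, and then E2 again), so $M\in At(\epsilon_S(D))$. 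The reverse inclusion $At(\epsilon_S(D))\subseteq\sigma_S(At(D))$ is the delicate one: given an atom $M$ with $\epsilon_S(D)\le M$, I must produce an atom $M'$ with $D\le M'$ and $M\equiv_S M'$, i.e.\ $\epsilon_S(M')=\epsilon_S(M)$. The natural candidate is to take $M'$ to be any atom dominating the combination $D\cdot\epsilon_S(M)$ --- this requires first checking $D\cdot\epsilon_S(M)\neq\mathcal{L}(\Omega)$, which I would argue exactly as in the proof of Lemma~\ref{le:ComEqRel}: if it were $\mathcal{L}(\Omega)$, applying $\epsilon_S$ and using E3 gives $\epsilon_S(D)\cdot\epsilon_S(M)=\mathcal{L}(\Omega)$, but $\epsilon_S(D)\le M$ and $\epsilon_S(M)\le M$ with $M$ coherent makes $\epsilon_S(D)\cdot\epsilon_S(M)\le M\neq\mathcal{L}(\Omega)$, a contradiction. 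Then for such an atom $M'\ge D\cdot\epsilon_S(M)$ we have $\epsilon_S(M')\ge\epsilon_S(\epsilon_S(M))=\epsilon_S(M)$, and since $M'$ is an atom and $\epsilon_S(M)\le\epsilon_S(M')\le M'$, the combination $\epsilon_S(M')\cdot\epsilon_S(M)$ is $\le M'$ hence coherent, so by the atom property applied appropriately (mimicking the closing lines of the proof of Lemma~\ref{le:ComEqRel}) we get $\epsilon_S(M')=\epsilon_S(M)$; together with $D\le M'$ this gives $M\in\sigma_S(At(D))$. The only subtlety to watch is the degenerate case $\epsilon_S(D)=\mathcal{L}(\Omega)$ (equivalently $D=\mathcal{L}(\Omega)$ by E1 and coherence), where both sides of item 3 are empty and the claim is trivial.
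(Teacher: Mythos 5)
Your proposal is correct and follows essentially the same route as the paper's proof: items 1 and 2 are handled identically, injectivity is derived from atomisticity ($D=\bigcap At(D)$), and for the hard inclusion of item 3 you use the same construction as the paper — pick an atom $M'$ above $D\cdot\epsilon_S(M)$ after ruling out $D\cdot\epsilon_S(M)=0$ via E3, then use the atom property of $M$ (exactly as in the closing lines of Lemma~\ref{le:ComEqRel}) to force $\epsilon_S(M')=\epsilon_S(M)$. The only cosmetic difference is that you obtain $\epsilon_S(M)\leq\epsilon_S(M')$ directly from $M'\geq\epsilon_S(M)$ rather than repeating the symmetric argument twice, which is a harmless simplification.
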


\begin{proof}
The map $D \mapsto At(D)$ is injective because $\Phi$, with the two operations of combination and extraction defined in Section~\ref{secDomFreeInfAlg}, is atomistic.

\begin{enumerate}
    \item By definition we have $At(\mathcal{L}(\Omega)) = \emptyset$ and since $\mathcal{L}^+(\Omega) \leq M$ for all atoms $M$, $At(\mathcal{L}^+(\Omega)) = At(\Phi)$ (recall $0 = \mathcal{L}(\Omega)$ and $1 = \mathcal{L}^+(\Omega)$). 
\item If there is an atom $M \in At(D_1 \cdot D_2)$, since $D_1,D_2 \leq D_1 \cdot D_2 \leq M$, we conclude that $M \in At(D_1) \cap At(D_2)$. On the other hand, if $M \in At(D_1) \cap At(D_2)$, then $D_1 \leq M$ and $D_2 \leq M$, hence $D_1 \cdot D_2 \coloneqq D_1 \vee D_2 \leq M$ and therefore $M \in At(D_1 \cdot D_2)$. 

\item If there is an atom $M \in \sigma_S(At(D))$, 
then there is a $M' \in At(D)$ so that $\epsilon_S(M) = \epsilon_S(M')$. Further $D \leq M'$ implies $\epsilon_S(D) \leq \epsilon_S(M') = \epsilon_S(M) \leq M$ so that 
 $M \in At(\epsilon_S(D))$.
 
 Conversely, if $M \in At(\epsilon_S(D))$ then $\epsilon_S(D) \leq M$. We have $D \leq \epsilon_S(M) \cdot D$. We claim that $\epsilon_S(M) \cdot D \not= 0$. Indeed, otherwise we would have $\epsilon_S(M \cdot \epsilon_S(D)) = \epsilon_S(M) \cdot \epsilon_S(D) = \epsilon_S(\epsilon_S(M) \cdot D) = \epsilon_S(0) = 0$ implying $M \cdot \epsilon_S(D) = 0$ which contradicts 
 $M \in At(\epsilon_S(D))$. So there exists an atom $M' \in At(\epsilon_S(M) \cdot D)$ and thus $D \leq \epsilon_S(M) \cdot D \leq M'$. We conclude that $M' \in At(D)$.

Further $\epsilon_S(\epsilon_S(M) \cdot D) = \epsilon_S(M) \cdot \epsilon_S(D) \leq \epsilon_S(M')$, hence $\epsilon_S(M) \cdot \epsilon_S(D) \cdot \epsilon_S(M') = \epsilon_S(M')$. This implies $\epsilon_S(M) \cdot \epsilon_S(M') \not= 0$. Since $\epsilon_S(M) \cdot \epsilon_S(M') = \epsilon_S(M \cdot \epsilon_S(M'))$, we conclude that $M \cdot \epsilon_S(M') \not= 0$, hence $\epsilon_S(M') \leq M$ since $M$ is an atom. It follows that $\epsilon_S(M') \leq \epsilon_S(M)$.

Proceed in the same way from $\epsilon_S(M) \cdot \epsilon_S(M') = \epsilon_S(\epsilon_S(M) \cdot M')$ in order to obtain $\epsilon_S(M) \leq \epsilon_S(M')$ so that finally $\epsilon_S(M) = \epsilon_S(M')$. But this means that $M \in \sigma_S(At(D))$. So we have proved that $At(\epsilon_S(D)) = \sigma_S(At(D))$.
\end{enumerate}
\end{proof}

This means that $\Phi$ is embedded in the set algebra $\mathcal{P}(At(\Phi))$ by the map $D \mapsto At(D)$. 
So, the algebra of coherent sets of gambles is essentially a set algebra in the technical sense used here. Note that this is purely a result of the theory of information algebra for atomistic algebras, and is not particular to the algebra of coherent sets of gambles. Recall however that we have already seen that $\Phi$ under information order is a complete lattice. And in fact the map $D \mapsto At(D)$ preserves arbitrary joins.

\begin{corollary}
Let $D_j$, $j \in J$ be an arbitrary family of sets of gambles such that $D_j \in \Phi$, for every $j \in J$. Then 
\begin{eqnarray*}
At(\bigvee_{j \in J} D_j) = \bigcap_{j \in J} At(D_j).
\end{eqnarray*}
\end{corollary}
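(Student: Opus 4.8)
The plan is to establish the identity by proving the two inclusions separately, using only that $\bigvee_{j\in J} D_j$ is, by its very definition $\bigcap\{D'\in\Phi:\bigcup_{j\in J}D_j\subseteq D'\}$, the least upper bound of the family $\{D_j\}_{j\in J}$ in the complete lattice $(\Phi,\subseteq)$. Two further elementary facts are needed: first, since information order on $\Phi$ coincides with set inclusion, $M\in At(D)$ means exactly $D\subseteq M$; second, every atom $M$, being a maximal coherent set of gambles, is an element of $C(\Omega)\subseteq\Phi$, so it may itself serve as an upper bound inside $\Phi$.

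For the inclusion $At(\bigvee_{j\in J}D_j)\subseteq\bigcap_{j\in J}At(D_j)$ I would argue as follows. Because $\bigvee_{j\in J}D_j$ is an upper bound of $\{D_j\}_{j\in J}$, we have $D_k\subseteq\bigvee_{j\in J}D_j$ for every $k\in J$. Hence, whenever an atom $M$ satisfies $\bigvee_{j\in J}D_j\subseteq M$, it also satisfies $D_k\subseteq M$ for every $k$, i.e.\ $M\in At(D_k)$ for all $k\in J$, so $M\in\bigcap_{j\in J}At(D_j)$.

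For the converse inclusion, take $M\in\bigcap_{j\in J}At(D_j)$, so that $D_j\subseteq M$ for every $j\in J$. Since $M\in\Phi$, it is an upper bound of $\{D_j\}_{j\in J}$ in $\Phi$; as $\bigvee_{j\in J}D_j$ is the least such upper bound, $\bigvee_{j\in J}D_j\subseteq M$, that is $M\in At(\bigvee_{j\in J}D_j)$. This is just item~2 of Theorem~\ref{th:SertAlgInfom} extended from binary to arbitrary combinations, and I do not expect a real obstacle: the only point to double-check is that the argument remains correct in the degenerate cases. When $\bigvee_{j\in J}D_j=\mathcal{L}(\Omega)$ the left-hand side is $\emptyset$ by item~1, and the converse argument then forces any member of $\bigcap_{j\in J}At(D_j)$ to equal $\mathcal{L}(\Omega)$, which is impossible for an atom, so the right-hand side is $\emptyset$ too; and for $J=\emptyset$ both sides reduce to $At(\Phi)$, again by item~1. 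Note that atomisticity of $\Phi$ is not actually needed for this statement, only the complete-lattice structure and the definition of $At(\cdot)$.
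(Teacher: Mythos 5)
Your proof is correct and follows essentially the same route as the paper's: both inclusions are obtained from the facts that $\bigvee_{j\in J}D_j$ is the least upper bound of the family in $(\Phi,\subseteq)$ and that $M\in At(D)$ means $D\subseteq M$. Your extra remarks on the degenerate cases are sound but not needed; the paper simply notes that the argument of item~2 of Theorem~\ref{th:SertAlgInfom} carries over.
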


\begin{proof}
The proof of item 2 of Theorem~\ref{th:SertAlgInfom} carries over to this more general case.
If there is an atom $M \in At(\bigvee_{j \in J} D_j)$, given that $D_j \subseteq \bigvee_{j \in J} D_j$ for every $j \in J$, we conclude that $M \in At(D_j)$ for every $j \in J$, hence $M \in \bigcap_{j \in J} At(D_j)$. Conversely, if $M \in \bigcap_{j \in J} At(D_j)$, then $D_j \subseteq M$ for all $j \in J$, hence $\bigvee_{j \in J} D_j \subseteq M$ and therefore $M \in At(\bigvee_{j \in J} D_j)$.
\end{proof}

We may ask how the images $At(D)$ of coherent sets of gambles are characterized in $At(\Phi)$. A (partial) answer is given in Section~\ref{sec:LinPrev}. We remark also that an analogous analysis can be made relative to the labeled view. We refer to~\cite{kohlas03} for this. In this view, the labeled algebras $\Psi$ or $\tilde{\Psi}$ are isomorphic to a generalized relational algebra in the sense of relational database theory~\citep{kohlas03}. We come back to this in Section~\ref{sec:LinPrev} with regard to lower previsions.


\section{Algebras of Lower and Upper Previsions} \label{sec:LowUpPrev}
A \emph{lower prevision} $\lpr$ is a function with values in $ \mathbb{R} \cup \{+\infty\}$ defined on some class of gambles $dom(\lpr)$, called the \emph{domain} of $\lpr$. Lower previsions, and their corresponding upper previsions, are the probabilistic models on which desirability is based~\citep{walley91}.\footnote{ Usually lower previsions are functions with values in $\mathbb{R}$. We consider here an extended version of this concept.}

We can also think of $\lpr(f)$ as the supremum buying price that a subject is willing to spend for the gamble $f$.
Following this interpretation, we can define it starting from the set of gambles $D$ that the subject is willing to accept~\citep{walley91, troffaes2014}. In this case, $dom(\lpr)$ is constituted by all the gambles $f$ for which $\{\mu \in \mathbb{R}:f - \mu \in D\}$ is not empty. 
\begin{definition}[Lower and upper prevision]
Given a non-empty set $D \subseteq \gambles(\Omega)$, we can associate to it the \emph{lower prevision} (operator) $\lpr: dom(\lpr) \rightarrow \mathbb{R} \cup \{+\infty\}$ defined as
\begin{eqnarray} \label{eq:LowPrevision}
\lpr(f) \coloneqq \sup\{\mu \in \mathbb{R}:f - \mu \in D\},
\end{eqnarray}
and the \emph{upper prevision} (operator) $\upr: dom(\upr) \rightarrow \mathbb{R} \cup \{-\infty\}$ defined as
\begin{eqnarray} \label{eq:UppPrevision}
\upr(f) \coloneqq -\lpr(-f),
\end{eqnarray}
where $dom(\lpr), dom(\upr)= -dom(\lpr) \subseteq \gambles(\Omega)$.
\end{definition}
Given the fact that we can always express upper previsions in terms of lower ones, in what follows we will concentrate only on lower previsions.
In the definition above we have not made explicit the dependence on $D$. However, when it is important to indicate it,  
we can see $\lpr$ also as the outcome of a function $\sigma$ applied to a set of gambles $D$ and write $\lpr = \sigma(D)$. We can also denote the set of gambles for which $\lpr$ is defined as $dom(\sigma(D))$. 



\begin{lemma}
Given a non-empty set of gambles $D \subseteq \mathcal{L}(\Omega)$, we have
\begin{enumerate}
\item $D \subseteq dom(\sigma(D))$.
\item If $0 \not \in \mathcal{E}(D)$, then $\sigma(D)(f) \in \mathbb{R}$ for every $f \in D$.
\item If $D \in C(\Omega)$, then $dom(\sigma(D))= \gambles(\Omega)$ and $\sigma(D)(f) \in \mathbb{R}$ for every $f \in \gambles(\Omega)$.
\item If $\overline{D} \in \overline{C}(\Omega)$, then $dom(\sigma(\overline{D}))= \gambles(\Omega)$ and $\sigma(\overline{D})(f) \in \mathbb{R}$ for every $f \in \gambles(\Omega)$.
\end{enumerate}
\end{lemma}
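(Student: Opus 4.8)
The plan is to verify the four claims in turn, exploiting the definition $\lpr(f) = \sup\{\mu \in \mathbb{R}: f-\mu \in D\}$ and the structural properties of the various classes of sets of gambles.

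For item 1, I would note that if $f \in D$, then $f - 0 = f \in D$, so the set $\{\mu \in \mathbb{R}: f-\mu \in D\}$ contains $0$ and is nonempty; hence $f \in dom(\sigma(D))$ by definition. This is immediate.

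For item 2, suppose $0 \notin \edesirs(D)$, so $\edesirs(D)$ is coherent. For $f \in D$ I must show $\lpr(f)$ is finite, i.e. bounded above (it is already $> -\infty$ by item 1). The key observation: if $\mu \le \inf f - c$ for suitable large $c$, or rather, I should argue that $\mu$ cannot be too large. If $f - \mu \in D \subseteq \edesirs(D)$ for arbitrarily large $\mu$, then since $f$ is bounded, $f - \mu$ eventually satisfies $\sup(f-\mu) < 0$, so $-(f-\mu) = \mu - f \in \mathcal{L}^+(\Omega) \subseteq \edesirs(D)$; combined with $f-\mu \in \edesirs(D)$ and additivity (\ref{D3}) this gives $0 = (f-\mu) + (\mu - f) \in \edesirs(D)$, contradicting coherence. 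More carefully: take any $\mu > \sup f$; then $\mu - f > 0$ everywhere and $\mu - f \ne 0$, so $\mu - f \in \mathcal{L}^+(\Omega)$; if also $f - \mu \in D$ then $0 = (f-\mu)+(\mu-f) \in \posi(D \cup \mathcal{L}^+(\Omega)) = \edesirs(D)$, a contradiction. Hence $\lpr(f) \le \sup f < \infty$. Since also $f \in D$ gives $\lpr(f) \ge 0 > -\infty$, we get $\lpr(f) \in \mathbb{R}$.

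For item 3, assume $D \in C(\Omega)$ is coherent. To show $dom(\sigma(D)) = \gambles(\Omega)$, take an arbitrary $f \in \gambles(\Omega)$: since $f$ is bounded, $f - \inf f \ge 0$; adding a bit, for $\mu < \inf f$ we have $f - \mu > 0$, and $f-\mu \ne 0$, so $f - \mu \in \mathcal{L}^+(\Omega) \subseteq D$ by \ref{D1}; hence $\{\mu: f-\mu \in D\}$ is nonempty and $f \in dom(\sigma(D))$. Finiteness: $\lpr(f) \ge \inf f > -\infty$ from the above, and for the upper bound I repeat the argument of item 2 (coherence of $D$ itself now plays the role of coherence of $\edesirs(D)$): if $\mu > \sup f$ then $\mu - f \in \mathcal{L}^+(\Omega) \subseteq D$, so $f - \mu \in D$ would force $0 \in D$, contradicting \ref{D2}. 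Thus $\inf f \le \lpr(f) \le \sup f$ and $\lpr(f) \in \mathbb{R}$ for all $f$.

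For item 4, assume $\overline{D} \in \overline{C}(\Omega)$ is almost desirable. The argument mirrors item 3 but uses \ref{D1'} and \ref{D2'} in place of \ref{D1} and \ref{D2}. For nonemptiness of $\{\mu: f - \mu \in \overline{D}\}$: pick $\mu < \inf f$, so $\inf(f-\mu) = \inf f - \mu > 0$, hence $f - \mu \in \overline{D}$ by \ref{D1'}; thus $f \in dom(\sigma(\overline D))$ and $\lpr(f) \ge \inf f$. For the upper bound: if $\mu > \sup f$ then $\sup(f - \mu) = \sup f - \mu < 0$, so by \ref{D2'} (avoiding sure loss) $f - \mu \notin \overline{D}$; hence $\lpr(f) \le \sup f$. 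Therefore $\lpr(f) \in [\inf f, \sup f] \subseteq \mathbb{R}$ for every $f \in \gambles(\Omega)$.

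The only mildly delicate point is item 2, where one must be careful that the relevant contradiction lives in $\edesirs(D)$ rather than in $D$ (since $D$ need not be coherent, only $\edesirs(D)$ is assumed to be), but this is handled by noting $D \subseteq \edesirs(D)$ and $\mathcal{L}^+(\Omega) \subseteq \edesirs(D)$ together with the closure of $\edesirs(D)$ under addition. All other steps are routine boundedness arguments using that gambles are bounded functions.
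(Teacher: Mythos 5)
Your proof is correct and follows essentially the same route as the paper: nonemptiness of $\{\mu: f-\mu \in D\}$ via $\mu = 0$ or $\mu < \inf f$ with \ref{D1} (resp.\ \ref{D1'}), and upper-boundedness by $\sup f$ via additivity and $0 \notin \edesirs(D)$ (resp.\ \ref{D2}, \ref{D2'}). The only cosmetic difference is that in item 3 you bound $\lpr(f)$ above uniformly for all $f$, whereas the paper splits into $f \in D$ and $f \notin D$ and shows $\lpr(f) < 0$ in the latter case; both are fine.
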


\begin{proof}
\begin{enumerate}
    \item Consider $f \in D$. Then the set $\{\mu \in \mathbb{R}:f - \mu \in D\}$ is not empty, since it contains at least $0$. 
    \item Assume $f - \mu \in D$. If $\mu \geq \sup f$, then $f(\omega) - \mu \le 0$ for all $\omega$, but then $0 \in \mathcal{E}(D)$, contrary to assumption. So, the set $\{\mu \in \mathbb{R}:f - \mu \in D\}$ is not empty and bounded from above for every $f \in D$.
    \item If $D$ is a coherent set of gambles, then $0 \not\in \mathcal{C}(D) = \mathcal{E}(D)=D$ so that $D \subseteq dom(\sigma(D))$ and $\sigma(D)(f) \in \mathbb{R}$, for every $f \in D$.
    Consider therefore $f \in \mathcal{L}(\Omega) \setminus D$. If there would be a $\mu \geq 0$ so that $f - \mu \in D$, then $f - \mu \leq f \in D$, which contradicts the assumption. Now, 
    if $\mu \leq \inf f < 0$, then $f - \mu \in \mathcal{L}^+(\Omega) \subseteq D$, so it follows $\inf f \leq \sigma(D)(f) < 0$ and $dom(\sigma(D)) = \mathcal{L}(\Omega)$. 
    \item From item 1., we have that $dom(\sigma(\overline{D})) \supseteq \overline{D}$. Moreover, if $\overline{D} \in \overline{C}(\Omega)$, then $-1 \notin \overline{D}$. Therefore, for every $f \in \overline{D}$, if $\mu \ge \sup f +1$ then $f- \mu \le -1 \notin \overline{D}$. So, $\{ \mu \in \mathbb{R}: \; f - \mu \in \overline{D} \}$ is bounded from above for every $f \in \overline{D}$.
    
    For $f \in \mathcal{L}(\Omega) \setminus \overline{D}$, we can repeat the procedure of item 3.
\end{enumerate}
\end{proof}

If $D$ is a coherent set of gambles, then the 
associated functional  $\lpr$ on $\mathcal{L}(\Omega)$ is called a coherent lower prevision. It is characterized by the following properties~\citep{walley91}. For every $f,g \in \mathcal{L}(\Omega)$:
\begin{enumerate}
\item $\lpr(f) \geq \inf_{\omega \in \Omega} f(\omega)$,
\item $\lpr(\lambda f) = \lambda \lpr(f)$, $\forall \lambda > 0$,
\item $\lpr(f + g) \geq \lpr(f) + \lpr(g)$.
\end{enumerate}

Let $\underline{\mathcal{P}}(\Omega)$ denote the family of coherent lower previsions. The map $\sigma$ maps $C(\Omega)$ to $\underline{\mathcal{P}}(\Omega)$. This map is not one-to-one as different coherent sets of gambles may induce the same lower prevision. We may apply the map $\sigma$ to almost desirable sets of gambles $\overline{D}$ and its range is still $\underline{\mathcal{P}}(\Omega)$ and we recall that moreover the map $\sigma$ restricted to almost desirable sets of gambles is one-to-one~\citep{walley91} and
\begin{eqnarray} \label{eq:AlmDesSetAndPrev}
\lpr(f) \coloneqq \max\{\mu  \in \mathbb{R}:f - \mu \in \overline{D}\}, \; \forall f \in \mathcal{L} , \quad \overline{D} \coloneqq \{f \in \gambles:\lpr(f) \geq 0\}.
\end{eqnarray}

There is also a one-to-one relation between coherent lower previsions $\lpr$ and strictly desirable sets of gamble $D^+$, so that, if we restrict $\sigma$ to strictly desirable sets, we have~\citep{walley91}
\begin{eqnarray*}
\lpr(f) \coloneqq \sup\{\mu \in \mathbb{R}:f - \mu \in D^+\} , \; \forall f \in \mathcal{L}, \quad D^+ \coloneqq \{f \in \gambles: \lpr(f) > 0\} \cup \mathcal{L}^+(\Omega).
\end{eqnarray*}

Define  the maps $\tau$ and $\overline{\tau}$ from coherent lower previsions to strictly desirable sets of gambles and almost desirable sets of gambles accordingly by
\begin{eqnarray*}
\tau(\lpr) \coloneqq \{f \in \gambles: \lpr(f) > 0\} \cup \mathcal{L}^+(\Omega), \quad \overline{\tau}(\lpr) \coloneqq  \{f \in \gambles:\lpr(f) \geq 0\}.
\end{eqnarray*}
Then $\tau$ and $\overline{\tau}$ are the inverses of the map $\sigma$ restricted to strictly desirable and almost desirable sets of gambles respectively. The following lemma shows how coherent, strictly desirable and almost desirable sets are linked relative to the coherent lower previsions they induce.\footnote{
This result follows also from the fact that, in the sup-norm topology, given a coherent set $D$ its relative interior plus the non-negative, non-zero gambles $D^+$ is a strictly desirable set of gambles and $\overline{D}$, the relative closure of $D$, is an almost desirable set of gambles~\citep{walley91}.}

\begin{lemma}
Let $D$ be a coherent set of gambles. Then
\begin{eqnarray*}
D^+ \coloneqq \tau(\sigma(D)) \subseteq D \subseteq \overline{\tau}(\sigma(D)) \coloneqq \overline{D}
\end{eqnarray*}
and $\sigma(D^+) = \sigma(D) = \sigma(\overline{D})$.
\end{lemma}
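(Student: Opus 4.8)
The plan is to establish the chain of inclusions $D^+ \subseteq D \subseteq \overline{D}$ first, and then derive the equality of the three lower previsions by a squeezing argument from the monotonicity of $\sigma$ together with the known facts about $\tau$ and $\overline{\tau}$ being inverses of $\sigma$ on the respective subfamilies.

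First I would prove $D^+ \subseteq D$. Recall $D^+ \coloneqq \tau(\sigma(D)) = \{f \in \gambles : \sigma(D)(f) > 0\} \cup \gambles^+(\Omega)$. Since $D$ is coherent, $\gambles^+(\Omega) \subseteq D$ by~\ref{D1}, so it suffices to take $f$ with $\sigma(D)(f) > 0$ and show $f \in D$. By definition $\sigma(D)(f) = \sup\{\mu \in \mathbb{R} : f - \mu \in D\} > 0$, so there exists $\mu > 0$ with $f - \mu \in D$. Then $f = (f - \mu) + \mu$, and since $\mu > 0$ is a positive constant gamble it lies in $\gambles^+(\Omega) \subseteq D$; by additivity~\ref{D3} we get $f \in D$. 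For $D \subseteq \overline{D}$, where $\overline{D} \coloneqq \overline{\tau}(\sigma(D)) = \{f \in \gambles : \sigma(D)(f) \ge 0\}$: for $f \in D$ the set $\{\mu : f - \mu \in D\}$ contains $0$ (take $\mu = 0$), hence $\sigma(D)(f) \ge 0$, so $f \in \overline{D}$.

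Next, for the prevision equalities, I would use that $\sigma$ is monotone with respect to inclusion — more precisely, if $D_1 \subseteq D_2$ then $\{\mu : f - \mu \in D_1\} \subseteq \{\mu : f - \mu \in D_2\}$, so $\sigma(D_1)(f) \le \sigma(D_2)(f)$ pointwise. Applying this to $D^+ \subseteq D \subseteq \overline{D}$ gives $\sigma(D^+) \le \sigma(D) \le \sigma(\overline{D})$ pointwise. It then remains to show $\sigma(D^+) = \sigma(\overline{D})$, which would force all three to coincide. But $D^+ = \tau(\sigma(D))$ and $\tau$ is the inverse of $\sigma$ restricted to strictly desirable sets, so $\sigma(D^+) = \sigma(\tau(\sigma(D))) = \sigma(D)$; likewise $\overline{D} = \overline{\tau}(\sigma(D))$ and $\overline{\tau}$ is the inverse of $\sigma$ restricted to almost desirable sets, so $\sigma(\overline{D}) = \sigma(\overline{\tau}(\sigma(D))) = \sigma(D)$. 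Hence $\sigma(D^+) = \sigma(D) = \sigma(\overline{D})$.

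The main obstacle, such as it is, is bookkeeping about domains: the maps $\tau$ and $\overline{\tau}$ and their inversion properties are quoted from the text for coherent lower previsions, and one should check that $\sigma(D)$ for $D$ coherent really is a coherent lower prevision (it is, by the preceding lemma giving $dom(\sigma(D)) = \gambles(\Omega)$ and real-valuedness), so that $\tau(\sigma(D))$ and $\overline{\tau}(\sigma(D))$ are well defined and the inversion identities apply. Everything else is a direct unfolding of the definition of $\sigma$ via suprema of $\{\mu : f - \mu \in D\}$ and the cone axioms~\ref{D1}--\ref{D4}; no genuinely hard step is involved.
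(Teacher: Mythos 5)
Your proposal is correct and follows essentially the same route as the paper: the inclusions are obtained by unfolding $\sigma(D)(f)=\sup\{\mu: f-\mu\in D\}$ (a positive $\mu$ in the set gives $f\in D$ by adding the constant gamble $\mu\in\gambles^+$; membership $f\in D$ gives $\mu=0$ in the set, hence $f\in\overline{D}$), and the prevision equalities come from $\tau$ and $\overline{\tau}$ being inverses of $\sigma$ on strictly and almost desirable sets. The extra monotonicity/squeezing remark is harmless but unnecessary, since the inverse-map identities already yield all three equalities directly.
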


\begin{proof}
Let $\lpr \coloneqq \sigma(D)$. Then $f \in D^+$ means that $0 < \lpr(f) \coloneqq \sup\{\mu \in \mathbb{R}:f - \mu \in D\}$, or $f \in \mathcal{L}^+(\Omega)$. If $f \in  \mathcal{L}^+(\Omega)$ then $f \in D$. Otherwise, there is a $\delta$ such that $0 < \delta < \lpr(f)$ and $f-\delta \in D$. Therefore $f \in D$ and $D^+ \subseteq D$. Further, consider $f \in D$. Then we must have $\lpr(f) \coloneqq \sup\{\mu \in \mathbb{R}:f - \mu \in D\} \geq 0$, hence $f \in \overline{D}$. The second part follows since $\tau$ and $\overline{\tau}$ are the inverse maps of $\sigma$ on strictly desirable and almost desirable sets of gambles.
\end{proof}

Recall that strictly desirable sets of gambles form a subalgebra of the information algebra of coherent sets of gambles. Then this result establishes a map $D \mapsto D^+$ for any coherent set of gambles to a strictly desirable set.  We shall see that this map preserves combination and extraction. To prove this theorem we need the following lemma.

\begin{lemma} \label{le:IntOfCohSet}
Let $D$ be a coherent set of gambles and $D^+ \coloneqq \tau(\sigma(D))$, if $f \notin \mathcal{L}^+(\Omega)$ then $f \in D^+$ if and only if there is a $\delta > 0$ so that $f - \delta \in D$.
\end{lemma}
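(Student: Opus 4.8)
The statement to prove is Lemma~\ref{le:IntOfCohSet}: for a coherent set of gambles $D$ with $D^+ \coloneqq \tau(\sigma(D))$, if $f \notin \mathcal{L}^+(\Omega)$ then $f \in D^+$ if and only if there is a $\delta > 0$ so that $f - \delta \in D$.

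The plan is to unwind the definitions of $\sigma$ and $\tau$. Writing $\lpr \coloneqq \sigma(D)$, by definition $D^+ = \tau(\lpr) = \{g \in \gambles : \lpr(g) > 0\} \cup \mathcal{L}^+(\Omega)$, and $\lpr(g) \coloneqq \sup\{\mu \in \mathbb{R} : g - \mu \in D\}$. Since $D$ is coherent, the earlier lemma guarantees $\lpr(g) \in \mathbb{R}$ for all $g$, so the supremum is finite. Because $f \notin \mathcal{L}^+(\Omega)$ by hypothesis, membership $f \in D^+$ is equivalent to $\lpr(f) > 0$, i.e. $\sup\{\mu : f - \mu \in D\} > 0$.

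For the forward direction, assume $\lpr(f) > 0$. Then there exists $\mu$ with $0 < \mu \le \lpr(f)$ such that $f - \mu \in D$; take $\delta \coloneqq \mu > 0$ and we are done, $f - \delta \in D$. For the converse, assume $f - \delta \in D$ for some $\delta > 0$. Then $\delta \in \{\mu : f - \mu \in D\}$, so $\lpr(f) = \sup\{\mu : f - \mu \in D\} \ge \delta > 0$, whence $\lpr(f) > 0$ and thus $f \in D^+$. Both directions are short and essentially immediate once the supremum is known to be finite and attained arbitrarily closely from below.

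I do not anticipate a genuine obstacle here; this is a routine definitional unwinding. The only point requiring a little care is the role of the hypothesis $f \notin \mathcal{L}^+(\Omega)$: it is needed precisely so that the characterisation $f \in D^+ \iff \lpr(f) > 0$ holds (otherwise $f$ could be in $D^+$ via the $\mathcal{L}^+(\Omega)$ clause while $\lpr(f)$ could still be positive or not, but the equivalence with the $\delta$-condition could fail, e.g. a gamble in $\mathcal{L}^+(\Omega)$ with $\inf f = 0$). One should also note that the supremum in the definition of $\lpr(f)$ need not be attained, but that is irrelevant: for the forward direction we only need \emph{some} $\mu > 0$ below the supremum with $f - \mu \in D$, which exists whenever the supremum is strictly positive.
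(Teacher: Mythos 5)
Your proof is correct and follows essentially the same route as the paper: both directions are handled by unwinding $f \in D^+ \iff \sigma(D)(f) > 0$ (for $f \notin \mathcal{L}^+(\Omega)$) and the supremum defining $\sigma(D)(f)$. The only cosmetic difference is in the forward direction, where the paper appeals to the strict desirability of $D^+$ together with $D^+ \subseteq D$, while you extract a positive $\mu$ directly from the positivity of the supremum; these are interchangeable one-line arguments.
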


\begin{proof}
One part is by definition: if $f \in D^+$ and $f \notin \mathcal{L}^+(\Omega)$, then there is a $\delta > 0$ so that $f - \delta \in D^+ \subseteq D$. Conversely, consider $f  - \delta \in D$ for some $\delta > 0$ with $f \notin \mathcal{L}^+(\Omega)$ and note that $D^+ \coloneqq \{g \in \gambles:\sigma(D)(g) > 0\} \cup \mathcal{L}^+(\Omega)$. We have $\sigma(D)(g) \coloneqq \sup\{\mu \in \mathbb{R}:g- \mu \in D\}$ for every $g \in \gambles$. From $f  - \delta \in D$ we deduce that $\sigma(D)(f) > 0$, hence $f \in D^+$. 
\end{proof}

The next theorem establishes that this map is a weak homomorphism, weak, because it does not apply when $D_1$ and $D_2$ are mutually inconsistent, that is if $D_1 \cdot D_2 = 0$ (see Example 1 below).

Before stating this result we need to introduce a partial order relation on lower previsions.
Indeed, we define $\lpr \leq \underline{Q}$ and we say that $\underline{Q}$ \emph{dominates} $\lpr$, if
$dom(\lpr) \subseteq dom(\underline{Q})$ and $\lpr(f) \leq \underline{Q}(f)$ for all $f \in dom(\lpr)$. This is a partial order on lower previsions~\citep{troffaes2014}.
Note that, in particular, if $D' \subseteq D$ such that $0 \notin \mathcal{E}(D')$ and $D$ coherent, then $\sigma(D') \le \sigma(D)$. So, in particular, $\sigma(D') \le \sigma(\edesirs(D'))$ coherent. Vice versa, if instead we consider only coherent lower previsions, then also $\tau, \overline{\tau}$ preserve order.

\begin{theorem} \label{th:CohtoSXtrict}
Let $D_1$, $D_2$ and $D$ be coherent sets of gambles and $S \subseteq I$. 
\begin{enumerate}
\item If $D_1 \cdot D_2 \not= 0$, then $D_1 \cdot D_2 \mapsto (D_1 \cdot D_2)^+ = D_1^+ \cdot D_2^+$,
\item $\epsilon_S(D) \mapsto (\epsilon_S(D))^+ = \epsilon_S(D^+)$.
\end{enumerate}
\end{theorem}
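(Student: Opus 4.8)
The plan is to prove the two claims separately, using the characterisation of $D^+$ from Lemma~\ref{le:IntOfCohSet}: for $f \notin \gambles^+(\Omega)$, membership $f \in D^+$ is equivalent to the existence of some $\delta > 0$ with $f - \delta \in D$. Throughout I would keep in mind that $D_1^+, D_2^+, D^+$ are strictly desirable, hence coherent, and that combination in $\Phi$ is $D_1 \cdot D_2 = \mathcal{C}(D_1 \cup D_2) = \edesirs(D_1 \cup D_2)$ whenever the right-hand side is coherent, while extraction on a coherent set is $\epsilon_S(D) = \mathcal{C}(\edesirs_S(D)) = \edesirs(D \cap \gambles_S)$.

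\textbf{Item 1.} First I would observe that since $D_i^+ \subseteq D_i$, we have $D_1^+ \cdot D_2^+ \subseteq D_1 \cdot D_2$, and since $D_1 \cdot D_2 \neq 0$ the left side is coherent. To get $D_1^+ \cdot D_2^+ = (D_1 \cdot D_2)^+$, by the uniqueness of the strictly desirable set inducing a given coherent lower prevision (or equivalently because $\tau\circ\sigma$ is idempotent on coherent sets), it suffices to show $\sigma(D_1^+ \cdot D_2^+) = \sigma(D_1 \cdot D_2)$ together with the fact that $D_1^+ \cdot D_2^+$ is itself strictly desirable (it is, being in the subalgebra $\Phi^+$). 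For the equality of previsions: $\sigma$ is monotone, so $\sigma(D_1^+ \cdot D_2^+) \le \sigma(D_1 \cdot D_2)$; for the reverse I would take $f \in D_1 \cdot D_2 = \edesirs(D_1 \cup D_2)$ with $f \notin \gambles^+$, write $f = f_1 + f_2 + h$ with $f_i \in \posi(D_i)$ (absorbing positive scalars) and $h \in \gambles^+ \cup \{0\}$, and show $f - \delta \in D_1^+ \cdot D_2^+$ for small $\delta$. The trick is to split a small constant across the two pieces: replace $f_i$ by $f_i - \delta_i$ where $\delta_i > 0$ is chosen so that $f_i - \delta_i$ still lies in $D_i$ — possible when $f_i \notin \gambles^+$ because then $f_i \in D_i^+$ already gives such a shift, and when $f_i \in \gambles^+$ one instead keeps $f_i$ and moves the constant elsewhere — so that $(f_1 - \delta_1) + (f_2 - \delta_2) + h = f - (\delta_1 + \delta_2) \in D_1^+ \cdot D_2^+$. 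This shows $\sigma(D_1 \cdot D_2)(f) > 0$ implies $f \in D_1^+ \cdot D_2^+$, which is what is needed. The delicate point is the bookkeeping when one or both $f_i$ are positive gambles, and the edge cases where $D_i = \gambles^+$; these should be handled by a short case distinction.

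\textbf{Item 2.} Here I would argue more directly. We have $\epsilon_S(D^+) = \edesirs(D^+ \cap \gambles_S)$ and $(\epsilon_S(D))^+ = \tau(\sigma(\epsilon_S(D)))$. The inclusion $\epsilon_S(D^+) \subseteq (\epsilon_S(D))^+$ follows from $D^+ \subseteq D$ hence $\epsilon_S(D^+) \subseteq \epsilon_S(D)$, combined with monotonicity of $\tau\circ\sigma$ and idempotency; one needs also that $\epsilon_S(D^+) \in \Phi^+$, which holds because $\Phi^+$ is a subalgebra. For the reverse inclusion, take $f \in (\epsilon_S(D))^+$ with $f \notin \gambles^+$: by Lemma~\ref{le:IntOfCohSet} there is $\delta > 0$ with $f - \delta \in \epsilon_S(D) = \edesirs(D \cap \gambles_S)$, and since $f$ is $S$-measurable so is the constant $\delta$, hence $f - \delta/2 \in \gambles_S$ as well. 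I would like to conclude $f - \delta/2 \in D^+$ and $f - \delta/2 \in \gambles_S$, giving $f \in \edesirs(D^+ \cap \gambles_S) = \epsilon_S(D^+)$. From $f - \delta \in D \cap \gambles_S \subseteq D$ and $f - \delta = (f - \delta/2) - \delta/2$ we get, by Lemma~\ref{le:IntOfCohSet} applied to $D$, that $f - \delta/2 \in D^+$ (provided $f - \delta/2 \notin \gambles^+$; if it is in $\gambles^+$ the conclusion is immediate). Since also $f - \delta/2 \in \gambles_S$, we obtain $f \in \edesirs((D^+ \cap \gambles_S) \cup \gambles^+) = \epsilon_S(D^+)$, as desired.

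\textbf{Main obstacle.} I expect the real work to be in item 1: correctly decomposing an element of the natural extension $\edesirs(D_1 \cup D_2)$ and distributing an arbitrarily small positive constant between the two strictly desirable parts, while keeping track of the degenerate cases ($f_i \in \gambles^+$, or $D_i$ equal to $\gambles^+$, or the hypothesis $D_1\cdot D_2 \ne 0$ being exactly what rules out cancellation). Item 2 is comparatively routine once one exploits that constants are $S$-measurable so shifts stay inside $\gambles_S$.
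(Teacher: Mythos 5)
Your overall architecture matches the paper's: the easy inclusions via monotonicity of $\tau\circ\sigma$, and the hard inclusions by extracting a $\delta>0$ with $f-\delta$ in the underlying coherent set, decomposing, and invoking Lemma~\ref{le:IntOfCohSet}. However, in both items the mechanism you describe for the hard direction rests on a false intermediate claim. In item 1, after writing $f=f_1+f_2+h$ with $f_i\in\posi(D_i)$ you assert that ``$f_i\in D_i^+$ already gives such a shift'' $f_i-\delta_i\in D_i$. That is not available: the components of an element of $\edesirs(D_1\cup D_2)$ only lie in $D_i$, and a gamble in $D_i\setminus D_i^+$ admits no downward shift at all, by Lemma~\ref{le:IntOfCohSet}. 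Relatedly, your intermediate goal ``for every $f\in D_1\cdot D_2$ there is a small $\delta>0$ with $f-\delta\in D_1^+\cdot D_2^+$'' is simply false: take $D_2=\gambles^+$ and $f\in D_1\setminus D_1^+$ with $f\notin\gambles^+$. The correct redistribution goes the other way: from $f\in(D_1\cdot D_2)^+\setminus\gambles^+$ obtain $f-\delta=h+\lambda_1f_1+\lambda_2f_2$ with $f_i\in D_i$, set $f_i'\coloneqq\lambda_i f_i+\delta/2$; then $f_i'-\delta/2\in D_i$ (or $f_i'\in\gambles^+$ when $\lambda_i=0$), so Lemma~\ref{le:IntOfCohSet} puts $f_i'$ in $D_i^+$, and $f=h+f_1'+f_2'\in D_1^+\cdot D_2^+$.

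In item 2 the step ``since $f$ is $S$-measurable'' is unjustified and generally false. An element of $(\epsilon_S(D))^+$ need not be $S$-measurable: $\epsilon_S(D)=\posi(\gambles^+(\Omega)\cup(D\cap\gambles_S))$ contains gambles $g+h$ with $g\in D\cap\gambles_S$ and $h\in\gambles^+(\Omega)$ arbitrary (in particular not $S$-measurable), and such sums survive into $(\epsilon_S(D))^+$ while lying outside $\gambles_S$ and outside $\gambles^+$. Consequently $f-\delta/2$ need not lie in $\gambles_S$ and your final step collapses. The fix is the same as in item 1: shift the component that lives in the relevant set rather than $f$ itself. Write $f-\delta=h+g$ with $h\in\gambles^+(\Omega)\cup\{0\}$ and $g\in D\cap\gambles_S$; then $g'\coloneqq g+\delta$ is still $S$-measurable, lies in $D^+$ by Lemma~\ref{le:IntOfCohSet} since $g'-\delta=g\in D$, and $f=h+g'\in\posi(\gambles^+(\Omega)\cup(D^+\cap\gambles_S))=\epsilon_S(D^+)$. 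With these two corrections your argument becomes exactly the paper's proof.
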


\begin{proof}
\begin{enumerate}
    \item Note first that $D_1^+ \subseteq D_1$ and $D_2^+ \subseteq D_2$ so that
\begin{eqnarray*}
D_1^+ \cdot D_2^+ = \tau(\sigma(D_1^+ \cdot D_2^+)) \subseteq \tau(\sigma(D_1 \cdot D_2)) \eqqcolon (D_1 \cdot D_2)^+.
\end{eqnarray*}

Further
\begin{eqnarray*}
(D_1 \cdot D_2)^+ \coloneqq \tau(\sigma(D_1 \cdot D_2)) \coloneqq \{f \in \gambles:\sigma(D_1 \cdot D_2)(f) > 0\} \cup \mathcal{L}^+(\Omega).
\end{eqnarray*}
So, if $f \in (D_1 \cdot D_2)^+$, then either $f \in\mathcal{L}^+(\Omega)$ or 
\begin{eqnarray}
\sigma(D_1 \cdot D_2)(f) \coloneqq \sup\{\mu \in \mathbb{R}:f - \mu \in \mathcal{C}(D_1 \cup D_2)\} > 0.
\end{eqnarray}
In the first case obviously $f \in D_1^+ \cdot D_2^+$. Let us consider now $f \notin \gambles^+$, in this case there is a $\delta > 0$ so that $f -\delta \in \mathcal{C}(D_1 \cup D_2)$. This means that 
$f - \delta = h + \lambda_1 f_1 + \lambda_2 f_2$, where $h \in \mathcal{L}^+(\Omega) \cup \{0\}$, $f_1 \in D_1$, $f_2 \in D_2$ and $\lambda_1, \lambda_2 \geq 0$ and not both equal $0$. But then 
\begin{eqnarray*}
f = h + (\lambda_1 f_1 + \delta/2) + (\lambda_2 f_2 + \delta/2).
\end{eqnarray*}
We have $f'_1 \coloneqq \lambda_1 f_1 + \delta/2 \in D_1$ and $f'_2 \coloneqq \lambda_2 f_2 + \delta/2 \in D_2$. But this, together with $\lambda_1f_1 = f'_1 - \delta/2 \in D_1$ if $\lambda_1>0$ or otherwise $f'_1 \in \mathcal{L}^+(\Omega)$, and $\lambda_2f_2 = f'_2 - \delta/2 \in D_2$ if $\lambda_2>0$ or otherwise $f'_2 \in \mathcal{L}^+(\Omega)$, show according to Lemma~\ref{le:IntOfCohSet} that $f'_1 \in D_1^+$ and $f'_2 \in D_2^+$. So, finally, we have $f \in D_1^+ \cdot D_2^+ = \mathcal{C}(D_1^+ \cup D_2^+)$. This proves that $(D_1 \cdot D_2)^+ = D_1^+ \cdot D_2^+$.
\item Note that $D^+ \subseteq D$ so that
\begin{eqnarray*}
\epsilon_S(D^+)= \tau(\sigma(\epsilon_S(D^+))) \subseteq \tau(\sigma(\epsilon_S(D)))\eqqcolon (\epsilon_S(D))^+.
\end{eqnarray*}
This is valid, because $\epsilon_S(D^+)$ is also strictly desirable. Further
\begin{eqnarray*}
(\epsilon_S(D))^+ \coloneqq \tau(\sigma(\epsilon_S(D))) \coloneqq \{f \in \gambles:\sigma(\epsilon_S(D)) > 0\} \cup \mathcal{L}^+(\Omega),
\end{eqnarray*}
where
\begin{eqnarray*}
\sigma(\epsilon_S(D))(f) \coloneqq \sup \{\mu \in \mathbb{R}:f - \mu \in \mathcal{C}(D \cap \mathcal{L}_S)\}.
\end{eqnarray*}
So, if $f \in (\epsilon_S(D))^+$, then either $f \in \mathcal{L}^+(\Omega)$ in which case $f \in  \epsilon_S(D^+)$ or there is a $\delta > 0$ so that $f - \delta \in \mathcal{C}(D \cap \mathcal{L}_S) = \posi\{\mathcal{L}^+(\Omega) \cup (D \cap \mathcal{L}_S)\}$. In the second case, if $f \notin \gambles^+$, $f - \delta = h + g$ where $h \in \mathcal{L}^+(\Omega) \cup \{0\}$ and $g \in D \cap \mathcal{L}_S$. Then we have $f = h + g'$ where $g' \coloneqq g + \delta$ is still $S$- measurable and $g' \in D$. 
But, given the fact that $g = g' - \delta \in D \cap  \mathcal{L}_S$, from Lemma~\ref{le:IntOfCohSet}, we have $g' \in D^+ \cap \mathcal{L}_S$ and therefore $f \in \epsilon_S(D^+)$. Thus we conclude that $(\epsilon_S(D))^+ = \epsilon_S(D^+)$ which concludes the proof.
\end{enumerate}
\end{proof}

 Next, we claim that the map $\sigma$ restricted to coherent sets of gambles preserves also infima. Here we define the functional $\inf\{\lpr_j:j \in J\}$ by $\inf\{\lpr_j(f):j \in J\}$ for all $f \in \mathcal{L}(\Omega)$.

\begin{lemma} \label{le:MaintainInf}
Let $D_j$, $j \in J$ be any family of coherent sets. Then we have
\begin{eqnarray*}
\sigma(\bigcap_{j \in J} D_j) = \inf\{\sigma(D_j) :j \in J\}.
\end{eqnarray*}
\end{lemma}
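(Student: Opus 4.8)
The statement to prove is that for an arbitrary family $D_j$, $j \in J$, of coherent sets of gambles, $\sigma(\bigcap_{j \in J} D_j) = \inf\{\sigma(D_j) : j \in J\}$, where both sides are functionals on all of $\mathcal{L}(\Omega)$ (each $D_j$ being coherent, each $\sigma(D_j)$ is defined everywhere and real-valued). The plan is to unfold the definition $\sigma(D)(f) = \sup\{\mu \in \mathbb{R} : f - \mu \in D\}$ on both sides and compare the two suprema pointwise in $f$.

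First I would handle the easy inequality $\sigma(\bigcap_{j} D_j) \le \inf_j \sigma(D_j)$. Fix $f \in \mathcal{L}(\Omega)$. If $f - \mu \in \bigcap_j D_j$ then $f - \mu \in D_j$ for every $j$, so $\mu \le \sigma(D_j)(f)$ for every $j$, whence $\mu \le \inf_j \sigma(D_j)(f)$; taking the supremum over such $\mu$ gives the claim. This uses nothing beyond the definitions and the fact that $\bigcap_j D_j$ is again coherent (a $\cap$-structure, noted in Section~\ref{sec:DesGambles}), so that the left-hand side is a well-defined real number.

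The substantive direction is $\inf_j \sigma(D_j)(f) \le \sigma(\bigcap_j D_j)(f)$. Write $\mu_0 := \inf_j \sigma(D_j)(f)$. The goal is to show that for every $\varepsilon > 0$ we have $f - (\mu_0 - \varepsilon) \in \bigcap_j D_j$, i.e. $f - \mu_0 + \varepsilon \in D_j$ for every $j$. Fix $j$. By definition of $\mu_0$ as an infimum, $\mu_0 \le \sigma(D_j)(f) = \sup\{\mu : f - \mu \in D_j\}$, so there exists $\mu$ with $f - \mu \in D_j$ and $\mu > \mu_0 - \varepsilon$. Then $f - \mu_0 + \varepsilon = (f - \mu) + (\mu - \mu_0 + \varepsilon)$ where the scalar $\mu - \mu_0 + \varepsilon > 0$; since $f - \mu \in D_j$ and constant positive gambles lie in $\mathcal{L}^+(\Omega) \subseteq D_j$, additivity \ref{D3} gives $f - \mu_0 + \varepsilon \in D_j$. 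As $j$ was arbitrary, $f - \mu_0 + \varepsilon \in \bigcap_j D_j$, hence $\sigma(\bigcap_j D_j)(f) \ge \mu_0 - \varepsilon$; letting $\varepsilon \to 0$ finishes it.

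The main thing to be careful about is the coherence/finiteness bookkeeping: one must note that each $\sigma(D_j)(f)$ is a genuine real number (guaranteed by the preceding lemma since each $D_j$ is coherent), so $\mu_0 \in \mathbb{R}$ and the manipulations with $\mu_0 - \varepsilon$ make sense, and likewise that $\bigcap_j D_j$ is coherent so the left-hand functional is everywhere real-valued and the identity is an equality of real-valued functionals. There is no deep obstacle here; the only "trick" is the standard shift argument $f - \mu = (f - \mu') - (\mu - \mu')$ together with $\mathcal{L}^+(\Omega) \subseteq D_j$ to trade a not-quite-attained supremum for membership after an $\varepsilon$-perturbation — exactly the device already used in Lemma~\ref{le:IntOfCohSet}.
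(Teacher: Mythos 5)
Your proof is correct, but it takes a genuinely different route from the paper's. You argue pointwise and from first principles: the easy inequality $\sigma(\bigcap_j D_j)(f)\le\inf_j\sigma(D_j)(f)$ follows directly from the definition, and for the converse you perturb by $\varepsilon$, writing $f-\mu_0+\varepsilon=(f-\mu)+(\mu-\mu_0+\varepsilon)$ with the positive constant in $\mathcal{L}^+(\Omega)\subseteq D_j$ and invoking additivity~\ref{D3}. The paper instead proves the easy inequality the same way (via monotonicity of $\sigma$ under inclusion), but for the hard direction it imports from~\cite{walley91} the fact that $\inf\{\sigma(D_j)\}$ is itself a coherent lower prevision, and then transfers the problem back to sets of gambles via the correspondence $\tau$: since $\tau(\inf\{\sigma(D_j)\})\subseteq\tau(\sigma(D_j))\subseteq D_j$ for each $j$, this set lies in $\bigcap_j D_j$, whence $\inf\{\sigma(D_j)\}\le\sigma(\bigcap_j D_j)$. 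Your argument is more elementary and self-contained — it needs neither the external coherence-of-infima result nor the $\tau$/$\sigma$ machinery, and in fact yields the coherence of $\inf_j\sigma(D_j)$ as a byproduct (it equals $\sigma$ of a coherent set) — while the paper's is shorter given the correspondence already set up in that section. One small point worth making explicit in your version: to know $\mu_0=\inf_j\sigma(D_j)(f)$ is finite you need the family to be bounded below, which follows from $\sigma(D_j)(f)\ge\inf_{\omega}f(\omega)$ (first coherence property) and the boundedness of $f$; finiteness of each individual $\sigma(D_j)(f)$ alone does not give this.
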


\begin{proof}
Note that the intersection of the coherent sets $D_j$ equals a coherent set $D$. We have $\sigma(\bigcap_{j \in J} D_j) \coloneqq \sigma(D) \eqqcolon \lpr \leq \sigma(D_j)$, for all $j \in J$. So $\lpr$ is a lower bound of $\sigma(D_j), j \in J$, therefore $\lpr \le \inf\{\sigma(D_j) :j \in J\}$. However, $\inf\{\sigma(D_j): j \in J\}$ is coherent~\citep{walley91}. 
Then we have $\tau(\inf\{\sigma(D_j): j \in J\}) \subseteq \tau(\sigma(D_j)) \subseteq D_j$ for all $j \in J$, by definition of $\inf\{\sigma(D_j): j \in J\}$. 
Hence $\tau(\inf\{\sigma(D_j): j \in J\}) \subseteq \bigcap_j D_j \eqqcolon D$. But this implies $\inf\{\sigma(D_j): j \in J\} \leq \sigma(D) \eqqcolon \lpr$. Thus $\lpr$ is indeed the infimum of the $\sigma(D_j)$ for $j \in J$.
\end{proof}

If $\lpr$ is a lower prevision which is dominated by a coherent lower prevision, then its natural extension is defined as the infimum of the coherent lower previsions which dominate it~\citep{walley91},
\begin{eqnarray} \label{eq:NatExtLowPrev}
\underline{E}(\lpr) \coloneqq \inf\{\lpr' \in \underline{\mathcal{P}}(\Omega): \lpr \leq \lpr'\}.
\end{eqnarray}
So, $\underline{E}(\lpr)$ is the minimal coherent lower prevision which dominates $\lpr$. As usual, the operator applied to a lower prevision $\lpr$, changes with the possibility set on which are defined those gambles over that $\lpr$ operates.
We can also introduce another operator defined for every lower prevision $\lpr$ as follows:
\begin{eqnarray*}
\underline{E}^*(\lpr)(f) \coloneqq
\begin{cases}
   \underline{E}(\lpr)(f) &\text{ if } \exists \lpr' \in \underline{\mathcal{P}}:  \lpr \le \lpr', \\
    \infty &\text{ otherwise}, \notag
  \end{cases}
\end{eqnarray*}
for every $f \in \gambles$, which clearly changes too with the possibility set on which are defined those gambles over that $\lpr$ operates.

Now we prove the key result: the map $\sigma$ commutes with
natural extension under certain conditions.

\begin{theorem} \label{th:CommSigmaExt}
Let $D'$ be a non-empty set of gambles which satisfies the following two conditions
\begin{enumerate}
\item $0 \not\in \mathcal{E}(D')$,
\item for all $f \in D' \setminus \mathcal{L}^+(\Omega)$ there exists a $\delta > 0$ such that $f - \delta \in D'$.
\end{enumerate}
Then we have
\begin{eqnarray*}
\sigma(\mathcal{C}(D')) = \underline{E}^*(\sigma(D'))= \underline{E}(\sigma(D')).
\end{eqnarray*}
\end{theorem}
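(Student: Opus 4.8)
The plan is to deduce from the two conditions that $\mathcal{C}(D')$ is a coherent (in fact strictly desirable) set of gambles which coincides with the strictly desirable set attached to the natural extension of $\sigma(D')$; the argument then reduces to identifying $\sigma(\mathcal{C}(D'))$ as the \emph{smallest} coherent lower prevision dominating $\sigma(D')$.

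First I would record the easy facts. By condition~1, $\mathcal{C}(D') = \mathcal{E}(D')$ is coherent, so $\sigma(\mathcal{C}(D'))$ is a coherent lower prevision (real-valued on all of $\gambles(\Omega)$), and from $D' \subseteq \mathcal{C}(D')$ together with $0 \notin \mathcal{E}(D')$ the monotonicity of $\sigma$ recalled earlier gives $\sigma(D') \leq \sigma(\mathcal{C}(D'))$. In particular $\sigma(D')$ is dominated by some coherent lower prevision, so by the definition of $\underline{E}^*$ we immediately get $\underline{E}^*(\sigma(D')) = \underline{E}(\sigma(D'))$, which disposes of one of the two equalities; and since $\underline{E}(\sigma(D'))$ is by definition the infimum of all coherent lower previsions dominating $\sigma(D')$, we also obtain $\underline{E}(\sigma(D')) \leq \sigma(\mathcal{C}(D'))$.

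The substantive step is the reverse inequality $\sigma(\mathcal{C}(D')) \leq \underline{E}(\sigma(D'))$. Writing $\lpr' \coloneqq \underline{E}(\sigma(D'))$, I would show that $D'$ is contained in the strictly desirable set $\tau(\lpr')$: a gamble $f \in D' \cap \mathcal{L}^+(\Omega)$ lies in $\tau(\lpr')$ trivially, while for $f \in D' \setminus \mathcal{L}^+(\Omega)$ condition~2 supplies a $\delta > 0$ with $f - \delta \in D'$, whence $\sigma(D')(f) \geq \delta > 0$ and therefore $\lpr'(f) \geq \sigma(D')(f) > 0$, i.e. $f \in \tau(\lpr')$. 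Since $\tau(\lpr')$ is coherent and $\mathcal{C}(D') = \mathcal{E}(D')$ is the smallest coherent set containing $D'$, this forces $\mathcal{C}(D') \subseteq \tau(\lpr')$; applying the monotone map $\sigma$ and using that $\sigma \circ \tau$ is the identity on coherent lower previsions yields $\sigma(\mathcal{C}(D')) \leq \sigma(\tau(\lpr')) = \lpr' = \underline{E}(\sigma(D'))$. Combining with the previous paragraph gives $\sigma(\mathcal{C}(D')) = \underline{E}(\sigma(D')) = \underline{E}^*(\sigma(D'))$.

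I expect the main obstacle to be precisely the inclusion $D' \subseteq \tau(\underline{E}(\sigma(D')))$, where condition~2 is indispensable: without it a non-positive $f \in D'$ might have $\sigma(D')(f) = 0$ and hence fail to belong to $\tau$ of any dominating coherent lower prevision, breaking the argument. I would also double-check the bookkeeping with domains, since all inequalities above are taken in the domination order of the theorem's preamble and rely on $dom(\sigma(D')) \subseteq \gambles(\Omega) = dom(\sigma(\mathcal{C}(D'))) = dom(\lpr')$. As a side remark, the same computation as in the substantive step shows directly that $\mathcal{C}(D')$ is strictly desirable — every $f \in \mathcal{C}(D') \setminus \mathcal{L}^+(\Omega)$ is, after absorbing non-negative summands into a single non-negative gamble, a positive combination $\sum_j \lambda_j g_j$ of elements $g_j \in D' \setminus \mathcal{L}^+(\Omega)$ each of which can be lowered by some $\delta_j > 0$, so $f$ can be lowered by $\sum_j \lambda_j \delta_j > 0$ — but this stronger statement is not actually needed for the proof.
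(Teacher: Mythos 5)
Your proof is correct. Both you and the paper establish the easy inequality $\underline{E}(\sigma(D')) \leq \sigma(\mathcal{C}(D'))$ in the same way, and both hinge on the same use of condition~2: a gamble $f \in D'\setminus\mathcal{L}^+(\Omega)$ with $f-\delta\in D'$ satisfies $\sigma(D')(f)\geq\delta>0$ and therefore belongs to $\tau(\lpr)$ for any coherent $\lpr$ dominating $\sigma(D')$. Where you diverge is in how that observation is exploited. The paper applies it to the \emph{whole family} of dominating coherent lower previsions, identifies that family (via $\tau$) with the family of strictly desirable supersets of $D'$, writes $\mathcal{C}(D')$ as the intersection of those supersets, and then invokes Lemma~\ref{le:MaintainInf} (preservation of arbitrary infima by $\sigma$) to conclude $\sigma(\mathcal{C}(D'))=\inf\{\lpr \text{ coherent}:\sigma(D')\leq\lpr\}=\underline{E}(\sigma(D'))$. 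You instead specialize to the single prevision $\lpr'=\underline{E}(\sigma(D'))$, which is itself a coherent lower prevision dominating $\sigma(D')$, deduce $D'\subseteq\tau(\lpr')$ and hence $\mathcal{C}(D')\subseteq\tau(\lpr')$ by minimality of the natural extension, and finish with monotonicity of $\sigma$ and $\sigma\circ\tau=\mathrm{id}$. This bypasses Lemma~\ref{le:MaintainInf} entirely and is a little more economical; the paper's route has the side benefit of exhibiting $\sigma(\mathcal{C}(D'))$ explicitly as the lower envelope over all dominating coherent previsions. Your explicit remark that domination by the coherent prevision $\sigma(\mathcal{C}(D'))$ forces $\underline{E}^*(\sigma(D'))=\underline{E}(\sigma(D'))$ is a point the paper leaves implicit, and your bookkeeping on domains is sound.
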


\begin{proof}
If $D' = \mathcal{L}^+(\Omega)$, then $D' \in C(\Omega)$ and $\sigma(\mathcal{C}(D')) = \underline{E}(\sigma(D'))$ because the lower prevision associated with $\mathcal{L}^+(\Omega)$ is already coherent. So, assume that $D' \not= \mathcal{L}^+(\Omega)$. We have then $\mathcal{E}(D') = \mathcal{C}(D') \in C(\Omega)$, so that
\begin{eqnarray*}
\mathcal{C}(D') = \bigcap \{D:D \textrm{ coherent},D' \subseteq D\}.
\end{eqnarray*}
Let $\lpr' \coloneqq \sigma(D')$, then  $\sigma(\mathcal{C}(D')) \ge \lpr'$ and moreover $\sigma(\mathcal{C}(D'))$ is coherent, hence $\sigma(\mathcal{C}(D')) \ge \underline{E}(\lpr')$, where $\underline{E}(\lpr')$ defined by Eq.~\eqref{eq:NatExtLowPrev}. 

Now, $\mathcal{E}(D')$ is a strictly desirable set of gambles such that $\mathcal{E}(D') \supseteq D'$. So, there exists at least a strictly desirable set of gambles containing $D'$. Therefore we have:
\begin{eqnarray*}
\sigma(\mathcal{C}(D')) =
 \sigma(\{\bigcap\{D:D \textrm{ coherent},D' \subseteq D\})
\leq   \sigma(\bigcap \{D^+ \textrm{ strictly desirable}:D' \subseteq D^+\}).
\end{eqnarray*}
Clearly, if $D' \subseteq D^+$, then $\lpr' \le \lpr \coloneqq \sigma(D^+)$, where $\lpr$ is a coherent lower prevision.
We claim that the converse is also valid. Indeed, let us consider a coherent lower prevision $\lpr$ such that  $\lpr' \leq \lpr$ and its associated strictly desirable set of gambles $D^+ \coloneqq \tau(\lpr)$. 
If $f \in D'$, then $\lpr'(f) \geq 0$. If $f \in \mathcal{L}^+(\Omega)$, then $f \in \tau(\lpr)$, otherwise, if $f \in D' \setminus \mathcal{L}^+(\Omega)$, then there is by assumption a $\delta > 0$ such that $f - \delta \in D'$, hence $0 < \lpr'(f) \leq \lpr(f)$. But this means again that $f \in \tau(\lpr)$. 
So, thanks to Lemma~\ref{le:MaintainInf} we have:
\begin{eqnarray*}
\sigma(\bigcap \{D^+ \textrm{ strictly desirable} :D' \subseteq D^+\})=\inf \{\lpr \textrm{ coherent}:\lpr' \leq \lpr\} \eqqcolon \underline{E}(\lpr')
 \end{eqnarray*}
so that $\sigma(\mathcal{C}(D')) = \underline{E}(\sigma(D'))$,
concluding the proof.
\end{proof}

From this result on natural extensions in the two formalism of coherent sets of gambles and coherent lower previsions, we can now introduce into $\underline{\Phi}(\Omega) \coloneqq  \underline{\mathcal{P}}(\Omega) \cup \{ \sigma(\mathcal{L}(\Omega))\}$, where $\sigma(\mathcal{L}(\Omega))(f) \coloneqq \infty$ for all $f \in \mathcal{L}(\Omega)$, like in $\Phi(\Omega)$, operations of combination and extraction. As usual, when there is no possible ambiguity we can also indicate $\underline{\Phi}(\Omega)$ simply with $\underline{\Phi}$. As in Section~\ref{secDomFreeInfAlg} then, consider a family of variables $X_i$, $i \in I$ with domains $\Omega_i$ and subsets $S \subseteq I$ of variables with domains $\Omega_S$. Let then for two coherent sets of gambles which are not inconsistent, $D_1 \cdot D_2 \not = 0$, with $\lpr_1 \coloneqq \sigma(D_1)$ and $\lpr_2 \coloneqq \sigma(D_2)$,
\begin{eqnarray*}
\lpr'(f) \coloneqq \sigma(D_1 \cup D_2)(f) \coloneqq \sup \{\mu \in \mathbb{R}:f - \mu \in D_1 \cup D_2\} = \max\{\lpr_1(f),\lpr_2(f)\}
\end{eqnarray*}
or $\lpr' = \max\{\lpr_1,\lpr_2\}$. We may take $\underline{E}^*(\lpr')$ to define combination of two lower previsions $\lpr_1$ and $\lpr_2$ in $\underline{\Phi}$. For extraction, for every $S \subseteq I$, we may take the $\underline{E}^*(\lpr_S)$, where of $\lpr_S$ is defined as the restriction of $\lpr \in \underline{\Phi}$ to $\mathcal{L}_S$. Thus, in summary, we can define on  $\underline{\Phi}$ the following operations.
\begin{enumerate}
\item Combination:
\begin{eqnarray*}
\lpr_1 \cdot \lpr_2\coloneqq \underline{E}^*(\max\{\lpr_1,\lpr_2\}).
\end{eqnarray*}
\item Extraction:
\begin{eqnarray*}
\underline{e}_S(\lpr) \coloneqq \underline{E}^*(\lpr_S).
\end{eqnarray*}
\end{enumerate}
The following theorem permits to conclude that $\underline{\Phi}$ with the two operations defined above, forms a domain-free information algebra. With the same little abuse of notation introduced before for coherent sets of gambles, we can call it the \emph{domain-free information algebra of coherent lower previsions}. 

\begin{theorem} \label{th:IsomPrev} \ 
Let $D_1^+$, $D_2^+$ and $D^+$ be strictly desirable sets of gambles and $S \subseteq I$. Then
\begin{enumerate}
\item $\sigma(\mathcal{L}(\Omega))(f) = \infty$, $\sigma(\mathcal{L}^+(\Omega))(f) = \inf f$ for all $f \in \mathcal{L}(\Omega)$,
\item $\sigma(D_1^+ \cdot D_2^+) = \sigma(D_1^+) \cdot \sigma(D_2^+)$,
\item $\sigma(\epsilon_S(D^+)) = \underline{e}_S(\sigma(D^+))$.
\end{enumerate}
\end{theorem}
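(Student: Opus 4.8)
The plan is to handle the three items separately, reducing items 2 and 3 to Theorem~\ref{th:CommSigmaExt}. Item 1 is a direct unravelling of definitions: since every gamble lies in $\gambles(\Omega)$, the set $\{\mu \in \mathbb{R}: f-\mu \in \gambles(\Omega)\}$ is all of $\mathbb{R}$, so $\sigma(\gambles(\Omega))(f)=\infty$, which also agrees with the convention fixed when $\underline{\Phi}$ was introduced; and $f-\mu \in \gambles^+(\Omega)$ holds exactly when $\mu \le \inf f$ (with $\mu = \inf f$ allowed precisely when $f$ is non-constant), so in either case the supremum equals $\inf f$.

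For item 2, I would first observe that for any two non-empty sets of gambles $\sigma(D_1^+ \cup D_2^+)=\max\{\sigma(D_1^+),\sigma(D_2^+)\}$, since the supremum over a union is the maximum of the suprema; write $\lpr'$ for this functional, whose domain is all of $\gambles(\Omega)$ because $\gambles^+(\Omega)\subseteq D_i^+$. Next I would check that $D'\coloneqq D_1^+\cup D_2^+$ satisfies condition~2 of Theorem~\ref{th:CommSigmaExt}: any $f \in D'\setminus\gambles^+(\Omega)$ lies in one of the strictly desirable $D_i^+$, which supplies the required $\delta>0$. Then split into cases. If $D_1^+\cdot D_2^+ \neq 0$, then $0\notin\mathcal{E}(D')$ and $D_1^+\cdot D_2^+=\mathcal{C}(D')$, so Theorem~\ref{th:CommSigmaExt} gives $\sigma(D_1^+\cdot D_2^+)=\underline{E}^*(\lpr')$, which is $\sigma(D_1^+)\cdot\sigma(D_2^+)$ by the very definition of combination in $\underline{\Phi}$. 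If instead $D_1^+\cdot D_2^+=0=\gambles(\Omega)$, the left-hand side is $\sigma(\gambles(\Omega))$, the top of $\underline{\Phi}$; I would show the right-hand side equals it too by arguing that no coherent lower prevision can dominate $\lpr'$ — otherwise, as in the proof of Theorem~\ref{th:CommSigmaExt}, applying $\tau$ to such a prevision yields a coherent set containing $D'$ and hence $\mathcal{C}(D')=\gambles(\Omega)$, which is impossible — so $\underline{E}^*(\lpr')=\sigma(\gambles(\Omega))$ by definition of $\underline{E}^*$.

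For item 3, I would apply Theorem~\ref{th:CommSigmaExt} to $D'\coloneqq D^+\cap\gambles_S$. Its hypotheses hold: $D'$ is non-empty (it contains the positive constants), $0\notin\mathcal{E}(D')\subseteq\mathcal{E}(D^+)=D^+$, and condition~2 transfers from $D^+$ because subtracting a positive constant from an $S$-measurable gamble keeps it $S$-measurable. Hence $\sigma(\epsilon_S(D^+))=\sigma(\mathcal{C}(D'))=\underline{E}^*(\sigma(D'))$. It then remains to identify $\sigma(D^+\cap\gambles_S)$ with the restriction $(\sigma(D^+))_S$ of $\sigma(D^+)$ to $\gambles_S$: membership $f-\mu\in\gambles_S$ forces $f\in\gambles_S$, while for $f\in\gambles_S$ any constant below $\inf f$ witnesses non-emptiness, so the domain is exactly $\gambles_S$; and on $\gambles_S$ we get $\sup\{\mu: f-\mu\in D^+\cap\gambles_S\}=\sup\{\mu: f-\mu\in D^+\}=\sigma(D^+)(f)$ since $f-\mu$ is automatically $S$-measurable. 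Therefore $\sigma(\epsilon_S(D^+))=\underline{E}^*((\sigma(D^+))_S)=\underline{e}_S(\sigma(D^+))$.

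I expect the main obstacle to be the inconsistent case of item~2: there Theorem~\ref{th:CommSigmaExt} does not apply (its first hypothesis fails), and one must instead argue directly that $\max\{\sigma(D_1^+),\sigma(D_2^+)\}$ admits no coherent dominating lower prevision, so that $\underline{E}^*$ sends it to the top element, matching $\sigma$ of the inconsistent set $\gambles(\Omega)$. The remaining work — checking the two hypotheses of Theorem~\ref{th:CommSigmaExt} for the sets $D_1^+\cup D_2^+$ and $D^+\cap\gambles_S$, and identifying $\sigma$ of a measurable subset with the restriction of the induced lower prevision — is routine given the small observations above.
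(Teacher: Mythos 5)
Your proposal is correct and follows essentially the same route as the paper's proof: item 1 by unravelling definitions, items 2 and 3 by verifying the hypotheses of Theorem~\ref{th:CommSigmaExt} for $D_1^+\cup D_2^+$ and $D^+\cap\gambles_S$ respectively, identifying $\sigma(D^+\cap\gambles_S)$ with the restriction $\sigma(D^+)_S$, and treating the inconsistent case of item 2 separately by showing no coherent lower prevision can dominate $\max\{\sigma(D_1^+),\sigma(D_2^+)\}$ (the paper phrases this via order-preservation of $\tau$, you via $D_1^+\cup D_2^+\subseteq\tau(\lpr)$, which is the same argument). You merely spell out some verifications the paper leaves implicit.
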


\begin{proof}
\begin{enumerate}
    \item It follows from the definition.
    \item Assume first that $D_1^+ \cdot D_2^+ = 0$ and let $\lpr_1 \coloneqq \sigma(D_1^+)$, $\lpr_2 \coloneqq \sigma(D_2^+)$. Then there can be no coherent lower prevision $\lpr$ dominating both $\lpr_1$ and $\lpr_2$. Indeed, otherwise we would have $D_1^+ = \tau(\lpr_1) \leq \tau(\lpr)$ and $D_2^+ = \tau(\lpr_2) \leq \tau(\lpr)$, where $\tau(\lpr)$ is a coherent set of gambles. But this is a contradiction. Therefore, we have $\sigma(D_1^+ \cdot D_2^+)(f) = \infty = (\sigma(D_1^+) \cdot \sigma(D_2^+))(f)$, for all gambles $f \in \gambles$.

Let then $D_1^+ \cdot D_2^+ \not= 0$. Then $D_1^+ \cdot D_2^+$ as well as $D_1^+ \cup D_2^+$ satisfy the condition of Theorem~\ref{th:CommSigmaExt}. Therefore, applying this theorem, we have
\begin{eqnarray*}
\lefteqn{\sigma(D_1^+ \cdot D_2^+) \coloneqq \sigma(\mathcal{C}(D_1^+ \cup D_2^+)) = \underline{E}(\sigma(D_1^+ \cup D_2^+)) }\\
&&= \underline{E}(\max\{\sigma(D_1^+),\sigma(D_2^+)\}) \eqqcolon \sigma(D_1^+) \cdot \sigma(D_2^+).
\end{eqnarray*}

\item We remark that $D^+ \cap \mathcal{L}_S$ satisfies the condition of Theorem~\ref{th:CommSigmaExt}. Thus we obtain
\begin{eqnarray*}
\sigma(\epsilon_S(D^+)) \coloneqq \sigma(\mathcal{C}(D^+ \cap \mathcal{L}_S)) = \underline{E}(\sigma(D^+ \cap \mathcal{L}_S)).
\end{eqnarray*}
Now, 
\begin{eqnarray*}
\sigma(D^+ \cap \mathcal{L}_S)(f) \coloneqq \sup\{\mu \in \mathbb{R}:f - \mu  \in D^+ \cap \mathcal{L}_S\}, \; \forall f \in \gambles.
\end{eqnarray*}
But $f - \mu \in D^+ \cap \mathcal{L}_S$ if and only if $f$ is $S$-measurable and $f - \mu \in D^+$. Therefore, we conclude that $\sigma(D^+ \cap \mathcal{L}_S) = \sigma(D^+)_S$. Thus, we have indeed $\sigma(\epsilon_S(D^+)) = \underline{E}(\sigma(D^+)_S) \eqqcolon \underline{e}_S(\sigma(D^+))$. 
\end{enumerate}
\end{proof}

If we define $\tau(\sigma(\gambles(\Omega))) \coloneqq \gambles(\Omega)$, then the map $\sigma$, restricted to the information algebra $\Phi^+$, is bijective. It follows that the set $\underline{\Phi} $ is a domain-free information algebra, isomorphic to $\Phi^+$. There is obviously the connected (isomorphic) information algebra of upper previsions. The following corollary, shows furthermore that $\Phi$ is weakly homomorphic to $\underline{\Phi}$.

\begin{corollary} \label{corr:HomLowPrev}
Let $D_1$, $D_2$ and $D$ be coherent sets of gambles so that $D_1 \cdot D_2 \not= 0$ and $S \subseteq I$. Then
\begin{enumerate}
\item $\sigma(D_1 \cdot D_2) = \sigma(D_1) \cdot \sigma(D_2)$,
\item $\sigma(\epsilon_S(D)) = \underline{e}_S(\sigma(D))$.
\end{enumerate}
\end{corollary}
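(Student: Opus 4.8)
The plan is to deduce both identities by composing two facts that are already established: the weak homomorphism $D \mapsto D^+ \coloneqq \tau(\sigma(D))$ from coherent sets of gambles to strictly desirable ones (Theorem~\ref{th:CohtoSXtrict}), and the fact that $\sigma$ restricted to the subalgebra $\Phi^+$ of strictly desirable sets is a homomorphism onto $\underline{\Phi}$ (Theorem~\ref{th:IsomPrev}). The bridge between the two is the earlier lemma asserting that $D^+ \subseteq D$ and $\sigma(D^+) = \sigma(D)$ for every coherent $D$, i.e.\ replacing a coherent set by its strictly desirable representative leaves the induced lower prevision unchanged.

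Concretely, for part~1 I would write, using the bridging lemma, $\sigma(D_1 \cdot D_2) = \sigma\big((D_1 \cdot D_2)^+\big)$ — this is legitimate because $D_1 \cdot D_2$ is a coherent set of gambles, the hypothesis $D_1 \cdot D_2 \not= 0$ ruling out the absorbing element $\mathcal{L}(\Omega)$. Item~1 of Theorem~\ref{th:CohtoSXtrict} (applicable precisely because $D_1 \cdot D_2 \not= 0$) gives $(D_1 \cdot D_2)^+ = D_1^+ \cdot D_2^+$, item~2 of Theorem~\ref{th:IsomPrev} gives $\sigma(D_1^+ \cdot D_2^+) = \sigma(D_1^+) \cdot \sigma(D_2^+)$, and two more applications of the bridging lemma turn $\sigma(D_i^+)$ into $\sigma(D_i)$. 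For part~2 the argument is the same: $\epsilon_S(D) = \mathcal{C}(D \cap \mathcal{L}_S)$ is coherent whenever $D$ is (since $\mathcal{C}(D \cap \mathcal{L}_S) \subseteq D$, as noted in the proof of E2 in Theorem~\ref{th:DomFreeInfAlg}, and $0 \notin D$), so $\sigma(\epsilon_S(D)) = \sigma\big((\epsilon_S(D))^+\big)$; then item~2 of Theorem~\ref{th:CohtoSXtrict} gives $(\epsilon_S(D))^+ = \epsilon_S(D^+)$, item~3 of Theorem~\ref{th:IsomPrev} gives $\sigma(\epsilon_S(D^+)) = \underline{e}_S(\sigma(D^+))$, and the bridging lemma replaces $\sigma(D^+)$ by $\sigma(D)$.

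I do not anticipate a real difficulty here — the corollary is simply the composition of the map $D \mapsto D^+$, which sends coherent sets to strictly desirable ones, with the map $\sigma$, which sends strictly desirable sets to coherent lower previsions, and each of these two maps has already been shown to commute with combination and extraction. The only things needing a moment's care are the bookkeeping checks that the intermediate elements $D_1 \cdot D_2$ and $\epsilon_S(D)$ are genuine coherent sets (which is where the hypothesis $D_1 \cdot D_2 \not= 0$ enters) and that each cited theorem is applied to sets in the family for which it was proved. A more self-contained but longer route would avoid $(\cdot)^+$ altogether and apply Theorem~\ref{th:CommSigmaExt} directly to $D_1^+ \cup D_2^+$ and to $D^+ \cap \mathcal{L}_S$, but routing through Theorem~\ref{th:IsomPrev} is the shorter option.
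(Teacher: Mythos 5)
Your proof is correct and follows exactly the route the paper intends: the paper's own proof simply states that the corollary is an immediate consequence of Theorems~\ref{th:CohtoSXtrict} and~\ref{th:IsomPrev}, and your write-up is precisely the composition of those two results via the bridging identity $\sigma(D^+)=\sigma(D)$. The bookkeeping checks you flag (coherence of $D_1\cdot D_2$ and of $\epsilon_S(D)$) are the right ones and are handled correctly.
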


\begin{proof}
These claims are immediate consequences of Theorems~\ref{th:CohtoSXtrict} and Theorem~\ref{th:IsomPrev}. 
\end{proof}

The homomorphism does not extend to a pair of inconsistent coherent sets of gambles, as the following example shows.

\begin{example}\label{1.}
Consider a set of possibilities $\Omega \coloneqq \{\omega_1,\omega_2\}$ and let 
\begin{eqnarray*}
D_1 &\coloneqq& \{f \in \gambles:f(\omega_2) > -2 f(\omega_1)\} \cup \{f :f(\omega_2) = -2 f(\omega_1),f(\omega_2) \geq 0,f \not= 0\}, \\
D_2 &\coloneqq& \{f \in \gambles:f(\omega_2) > -2 f(\omega_1)\} \cup \{f \in \gambles:f(\omega_2) = -2 f(\omega_1),f(\omega_2) < 0\}.
\end{eqnarray*}
These are coherent sets of gambles, but they are mutually inconsistent, since we have $D_1 \cdot D_2 = \mathcal{L}(\Omega)$ because $0 \in \mathcal{E}(D_1 \cup D_2)$. But on the other hand,
\begin{eqnarray*}
D_1^+ = D_2^+ = \{f \in \gambles :f(\omega_2) > -2 f(\omega_1)\}
\end{eqnarray*}
therefore $D_1 \cdot D_2 =0$ while $D_1^+ \cdot D_2^+=D_1^+$ and $\sigma(D_1) \cdot \sigma(D_2)= \sigma(D_1^+) \cdot \sigma(D_2^+)= \sigma(D_1^+) \not= \sigma(\mathcal{L}(\Omega))$. So we have also 
$\sigma(D_1 \cdot D_2) \not= \sigma(D_1) \cdot \sigma(D_2)$ in this example.
\end{example}

Finally, it follows from Lemma~\ref{le:MaintainInf} and Corollary~\ref{corr:HomLowPrev} that for any family of coherent sets of gambles $D_j$,
\begin{eqnarray*}
\sigma(\epsilon_S(\bigcap_j D_j)) = \underline{e}_S(\sigma(\bigcap_j D_j)) = \underline{e}_S(\inf\{\sigma(D_j)\})
\end{eqnarray*}
and
\begin{eqnarray*}
\sigma(\bigcap_j \epsilon_S(D_j)) = \inf\{\sigma(\epsilon_S(D_j))\} = \inf\{\underline{e}_S(\sigma(D_j))\}.
\end{eqnarray*}
Therefore it also follows from Eq.~\eqref{eq:ExzrMeet} that for any family $\lpr_j$ of coherent lower previsions, we have 
\begin{eqnarray*}
\underline{e}_S(\inf\{\lpr_j\}) = \inf\{\underline{e}_S(\lpr_j)\}.
\end{eqnarray*}
So in the information algebra of coherent lower previsions extraction distributes still over meet (infimum).

As always there is also a labeled version of the domain-free information algebra of coherent lower previsions. 

Let us introduce the concept of support also for lower previsions.
\begin{definition}[Support for lower previsions] A subset $S$ of $I$ is called \emph{support} of a lower prevision $\lpr \in \underline{\Phi}$, if $\underline{e}_S(\lpr) =\lpr$.
\end{definition}
Now, we can construct the first version of the labeled information algebra of coherent lower previsions, using the same standard procedure described in Section~\ref{sec:LabInfAlg} for coherent set of gambles. So, let us define $\underline{\Psi}_S(\Omega) \coloneqq \{(\lpr,S): S \text{ is a support of } \lpr \in \underline{\Phi}(\Omega) \}$ for every $S \subseteq I$ and $\underline{\Psi}(\Omega) \coloneqq \bigcup_{S \subseteq I} \underline{\Psi}_S(\Omega)$. It is possible to define on $\underline{\Psi}(\Omega)$, analogously to the first version of the labeled information algebra of coherent set of gambles, the following operations:
\begin{enumerate}
\item Labeling: $d(\lpr,S) \coloneqq S$.
\item Combination: $(\lpr_1,S) \cdot (\lpr_2,T) \coloneqq (\lpr_1 \cdot \lpr_2,S \cup T)$.
\item Projection (Marginalization): $\underline{\pi}_T(\lpr,S) \coloneqq (\underline{e}_T(\lpr),T)$, for every $T \subseteq S \subseteq I$.
\end{enumerate}
Moreover, with the same procedure used in Section~\ref{sec:LabInfAlg}, we can show that $\underline{\Psi}(\Omega)$ with the operations above forms a labeled information algebra. This is in fact, a standard method for constructing a labeled information algebra from a domain-free one and it does not depend on the specific domain-free information algebra considered.

Now, we can proceed constructing the second version of this labeled algebra, again following the same reasoning used for coherent sets of gambles.
So, for any subset $S$ of $I$ let
\begin{eqnarray*}
\underline{\tilde{\Psi}}_S(\Omega) \coloneqq \{(\tilde{\lpr},S): \tilde{\lpr} \in \underline{\Phi}(\Omega_S) \}.
\end{eqnarray*}

Further let
\begin{eqnarray*}
\underline{\tilde{\Psi}}(\Omega) \coloneqq \bigcup_{S \subseteq I} \underline{\tilde{\Psi}}_S(\Omega).
\end{eqnarray*}
As usual, we can refer to $\underline{\tilde{\Psi}}_S(\Omega)$ and $\underline{\tilde{\Psi}}(\Omega)$, also with $\underline{\tilde{\Psi}}_S$ and $\underline{\tilde{\Psi}}$ respectively, when there is no possible ambiguity.

Within $\underline{\tilde{\Psi}}$ we define the following operations.
\begin{enumerate}
\item Labeling: $d(\tilde{\lpr},S) \coloneqq S$.
 
\item Combination:
\begin{eqnarray*}
(\tilde{\lpr}_1,S) \cdot (\tilde{\lpr}_2,T) \coloneqq (\underline{E}^*( \tilde{\lpr}_1^{\uparrow{S \cup T}})  \cdot \underline{E}^*( \tilde{\lpr}_2^{\uparrow{S \cup T}}) , S \cup T),
\end{eqnarray*}
where, given a lower prevision $\lpr$ with $dom(\lpr) \subseteq \gambles(\Omega_Z)$ with $Z \subseteq S \cup T$,  $\lpr^{\uparrow{S \cup T}}(f) \coloneqq \lpr(f^{\downarrow{Z}})$, for every $f \in \gambles_Z(\Omega_{S \cup T})$ such that $f^{\downarrow{Z}} \in dom(\lpr)$.


\item Projection (Marginalization):
\begin{eqnarray*}
\underline{\pi}_T(\tilde{\lpr},S)\coloneqq (\underline{e}_T(\tilde{\lpr} )_T^{\downarrow{T}},T),
\end{eqnarray*}
for every $T \subseteq S \subseteq I$, where given a lower prevision $\lpr$ with $dom(\lpr) \subseteq \gambles(\Omega_Z)$ with $T \subseteq Z$, $\lpr_T^{\downarrow{T}}(f) \coloneqq \lpr_T(f^{\uparrow{Z}})= \lpr(f^{\uparrow{Z}})$ for every $f \in \gambles(\Omega_T)$ such that $f^{\uparrow{Z}} \in dom(\lpr)$.
\end{enumerate}

We can show that the map $\underline{h}: \underline{\Psi} \rightarrow \tilde{\underline{\Psi}}$, defined as $(\lpr,S) \rightarrow  (\underline{e}_S(\lpr)_S^{\downarrow{S}}, S)= (\lpr_S^{\downarrow{S}}, S)$, 
is an isomorphism.

First of all notice that, given $(\lpr,S) \in \underline{\Psi}$, $(\lpr_S^{\downarrow{S}},S) \in \tilde{\underline{\Psi}}$. Indeed, $\lpr_S^{\downarrow{S}} = \sigma ( (D^+ \cap \gambles_S)^{\downarrow{S}})$ where $D^+ = \tau(\lpr)$, and $(D^+ \cap \gambles_S)^{\downarrow{S}} \in \Phi(\Omega_S)$ (see Section~\ref{sec:LabInfAlg} ). In fact, for every $f \in \gambles(\Omega_S)$:
\begin{eqnarray*}
\lefteqn{\sigma((D^+ \cap \gambles_S)^{\downarrow{S}})(f) \coloneqq \sup\{ \mu \in \mathbb{R}: f - \mu \in ( D^+ \cap \gambles_S)^{\downarrow{S}} \}} \\&& =
\sup\{ \mu \in \mathbb{R}: f^{\uparrow{I}}- \mu \in D^+ \cap \gambles_S\} \eqqcolon \sigma(D^+)_S ^{\downarrow{S}}(f).
\end{eqnarray*}
So the map $\underline{h}$ is well defined. Now, we can prove the main result.

\begin{theorem} \label{th:LabelIsomPrev}
The map $\underline{h}$ has the following properties.
\begin{enumerate}
\item It maintains combination, null and unit, and projection. Let $(\lpr,S), \; (\lpr_1,S), \; (\lpr_2,T) \in  \underline{\Psi}$:
\begin{eqnarray*}
\underline{h}((\lpr_1,S) \cdot (\lpr_2,T)) &=& \underline{h}(\lpr_1,S) \cdot \underline{h}(\lpr_2,T), \\
 \underline{h}(\sigma({\mathcal{L}}),S) &=& (\sigma({\mathcal{L}}(\Omega_S)),S), \\
 \underline{h}(\sigma({\mathcal{L}^+}),S) &=& (\sigma({\mathcal{L}^+(\Omega_S)}),S), \\
\underline{h}(\underline{\pi}_T(\lpr,S)) &=& \underline{\pi}_T(\underline{h}(\lpr,S)), \textrm{ if } T  \subseteq S,
\end{eqnarray*}
where, $ \sigma({\mathcal{L}}(\Omega_S)) \coloneqq + \infty,$ for all $f \in \gambles(\Omega_S)$.
\item $\underline{h}$ is bijective.
\end{enumerate}
\end{theorem}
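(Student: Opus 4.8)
The plan is to obtain Theorem~\ref{th:LabelIsomPrev} from the isomorphism $h$ of Theorem~\ref{th:LabelIsom} by transport along $\sigma$, rather than redoing the computations by hand. Write $\Psi^+$ for the labeled subalgebra of $\Psi$ whose elements are the pairs $(D^+,S)$ with $D^+\in\Phi^+$ having support $S$, and $\tilde{\Psi}^+$ for its second, isomorphic version, whose components lie in $C^+(\Omega_S)\cup\{\gambles(\Omega_S)\}$. Let $\hat{\sigma}\colon(D^+,S)\mapsto(\sigma(D^+),S)$ and $\tilde{\hat{\sigma}}\colon(\tilde{D}^+,S)\mapsto(\sigma(\tilde{D}^+),S)$ be the componentwise applications of $\sigma$. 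The backbone of the argument is the commuting square $\underline{h}\circ\hat{\sigma}=\tilde{\hat{\sigma}}\circ h|_{\Psi^+}$: for $(D^+,S)\in\Psi^+$, since $D^+$ has support $S$ one has $h(D^+,S)=((D^+\cap\gambles_S)^{\downarrow S},S)$, while $\underline{h}(\sigma(D^+),S)=(\sigma(D^+)_S^{\downarrow S},S)$, and these agree by the identity $\sigma(D^+)_S^{\downarrow S}=\sigma((D^+\cap\gambles_S)^{\downarrow S})$ recorded just before the theorem.

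Granting this square, it suffices to show that each of $h|_{\Psi^+}$, $\hat{\sigma}$ and $\tilde{\hat{\sigma}}$ is an isomorphism of labeled information algebras; then $\underline{h}=\tilde{\hat{\sigma}}\circ h|_{\Psi^+}\circ\hat{\sigma}^{-1}$ is one as well, which gives at once the preservation of combination, null, unit and projection (the identities $\underline{h}(\sigma(\mathcal{L}(\Omega)),S)=(\sigma(\mathcal{L}(\Omega_S)),S)$ and $\underline{h}(\sigma(\mathcal{L}^+(\Omega)),S)=(\sigma(\mathcal{L}^+(\Omega_S)),S)$ being the transports through $\tilde{\hat{\sigma}}$ of the corresponding identities $h(\mathcal{L}(\Omega),S)=(\mathcal{L}(\Omega_S),S)$, $h(\mathcal{L}^+(\Omega),S)=(\mathcal{L}^+(\Omega_S),S)$ of Theorem~\ref{th:LabelIsom}) together with bijectivity. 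For $h|_{\Psi^+}$: since $\Phi^+$ is a subalgebra of $\Phi$, $h$ restricts to $\Psi^+$, and one checks it maps $\Psi^+$ onto $\tilde{\Psi}^+$, i.e. that $(D^+\cap\gambles_S)^{\downarrow S}$ is strictly desirable on $\Omega_S$ when $D^+$ is strictly desirable with support $S$, and conversely that $\mathcal{C}((\tilde{D}^+)^{\uparrow I})$ is strictly desirable when $\tilde{D}^+\in C^+(\Omega_S)$ — both short verifications from the strict-desirability condition and $\mathcal{C}=\posi(\,\cdot\,\cup\gambles^+)$. For $\hat{\sigma}$: $\sigma$ restricted to $\Phi^+$ is a domain-free isomorphism onto $\underline{\Phi}$ (Theorem~\ref{th:IsomPrev}, with the convention $\tau(\sigma(\gambles))\coloneqq\gambles$), and the operations of $\underline{\Psi}$ are the standard labeled operations built from those of $\underline{\Phi}$, so $\hat{\sigma}$ is automatically a labeled isomorphism.

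The real work is checking that $\tilde{\hat{\sigma}}\colon\tilde{\Psi}^+\to\tilde{\underline{\Psi}}$ intertwines the operations defining $\tilde{\underline{\Psi}}$. This needs the prevision counterparts of Lemma~\ref{le:cylExtProperties}, namely $\sigma(D^{\uparrow R})=\sigma(D)^{\uparrow R}$ and $\sigma(D)_T=\sigma(D\cap\gambles_T)$, together with $\sigma(D)_S^{\downarrow S}=\sigma((D\cap\gambles_S)^{\downarrow S})$ already in hand — all immediate from the definition of $\sigma$ as a supremum over shifts. Since cylindrifications and $\gambles_T$-intersections of strictly desirable sets again satisfy the two hypotheses of Theorem~\ref{th:CommSigmaExt}, that theorem converts every occurrence of $\underline{E}^*$ in the definitions of combination and projection on $\tilde{\underline{\Psi}}$ into $\sigma\circ\mathcal{C}$; then Theorem~\ref{th:IsomPrev}(2)--(3) (with the understanding that $\sigma(A)\cdot\sigma(B)$ is the constant functional $\infty$ when $A\cdot B=0$) and Theorem~\ref{th:IsomPrev}(1) complete the check, and being componentwise $\sigma$ the map $\tilde{\hat{\sigma}}$ is bijective. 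I expect this last step to be the main obstacle: it is routine but bookkeeping-heavy, and it is essentially the content of the auxiliary lemma relating $\lpr^{\uparrow R}$, $\lpr_T^{\downarrow T}$ and $\underline{E}^*$ that has been left out. An alternative to the whole transport argument is to mirror the proof of Theorem~\ref{th:LabelIsom} line by line, replacing $\mathcal{C}$ by $\underline{E}^*$ and Lemma~\ref{le:cylExtProperties} by its prevision analog.
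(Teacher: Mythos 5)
Your proposal is correct and is essentially the paper's own argument: the paper likewise factors $\underline{h}$ through the gamble-side isomorphism $h$ of Theorem~\ref{th:LabelIsom} via the componentwise map $\sigma'$ (your $\hat{\sigma}$, $\tilde{\hat{\sigma}}$), using the commuting identity $\underline{h}(\sigma'(D^+,S))=\sigma'(h(D^+,S))$ and then Theorems~\ref{th:IsomPrev} and~\ref{th:CommSigmaExt} to intertwine the operations. The ``bookkeeping-heavy'' step you defer --- verifying $\sigma(\tilde{D}^{+\uparrow S\cup T})=\sigma(\tilde{D}^+)^{\uparrow S\cup T}$, checking the hypotheses of Theorem~\ref{th:CommSigmaExt} for the lifted sets, and converting $\underline{E}^*$ into $\sigma\circ\mathcal{C}$ --- is precisely where the paper's proof spends its length, and your outline of it is accurate.
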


\begin{proof}
Let us define the map $\tau':  \underline{\Psi} \mapsto \Psi$ defined as $\tau'(\lpr,S) \mapsto (\tau(\lpr),S)$. It is well defined, because, if $(\lpr,S) \in  \underline{\Psi}$, then $\epsilon_S(\tau(\lpr)) = \tau(\underline{e}_S(\lpr)) = \tau(\lpr)$. Analogously, we can define $\sigma'$. We use the same notation, also for the corresponding maps $\tau': \underline{\tilde{\Psi}}  \mapsto \tilde{\Psi}$ and $\sigma': \tilde{\Psi} \mapsto \underline{\tilde{\Psi}}$ defined respectively as $\tau'(\tilde{\lpr},S) \mapsto (\tau(\tilde{\lpr}),S)$ and $\sigma'(\tilde{D},S) \mapsto (\sigma(\tilde{D}),S)$. Clearly, also in this case they are well defined.

Now, we can proceed with the proof.
\begin{enumerate}
    \item  We have, by definition
    \begin{eqnarray*}
\lefteqn{\underline{h}((\lpr_1,S) \cdot (\lpr_2,T)) \coloneqq \underline{h}  (\lpr_1 \cdot \lpr_2, S \cup T) } \\ && = \underline{h} ( \sigma( D^+_1) \cdot  \sigma(D^+_2), S \cup  T) ,
\end{eqnarray*}
if we call with $D^+_1 \coloneqq \tau(\lpr_1)$ and $D^+_2 \coloneqq \tau(\lpr_2)$.
Now, thanks to Theorem~\ref{th:IsomPrev}, we have
  \begin{eqnarray*}
\lefteqn{\underline{h}((\lpr_1,S) \cdot (\lpr_2,T))  = \underline{h} ( \sigma( D^+_1) \cdot  \sigma(D^+_2), S \cup  T)  } \\ && =\underline{h} ( \sigma( D^+_1 \cdot D^+_2), S \cup  T) \\ && \eqqcolon \underline{h} ( \sigma' (D^+_1 \cdot D^+_2, S \cup T)).
\end{eqnarray*}
As observed before $\underline{h} (\sigma'(D^+, S)) = \sigma'( h(D^+,S))$, for every $(D^+,S) \in \Psi$ such that $D^+ \in C^+ \cup \{\gambles\}$. Indeed, $\underline{h} (\sigma'(D^+, S)) \coloneqq (\sigma(D^+)_S^{\downarrow{S}},S)$ and  $\sigma'( h(D^+,S)) \coloneqq ( \sigma( (D^+ \cap \gambles_S)^{\downarrow{S}}), S)$ and we proved that $\sigma(D^+)_S^{\downarrow{S}} = \sigma( (D^+ \cap \gambles_S)^{\downarrow{S}})$ discussing the well-definiteness  of $\underline{h}$.
Therefore, 
 \begin{eqnarray*}
\lefteqn{\underline{h}((\lpr_1,S) \cdot (\lpr_2,T))  =  \underline{h} ( \sigma' (D^+_1 \cdot D^+_2, S \cup T))  } \\ && =  \sigma' (h (D^+_1 \cdot D^+_2, S \cup T)) \\ && \eqqcolon \sigma' (h ((D^+_1,S) \cdot (D^+_2, T)) ) \\ && = \sigma' (h (D^+_1,S) \cdot h(D^+_2, T)),
\end{eqnarray*}
thanks to Theorem~\ref{th:LabelIsom}.
Now, we claim that 
\begin{eqnarray*}
\sigma' (h (D^+_1,S) \cdot h(D^+_2, T)) = \sigma' (h (D^+_1,S)) \cdot \sigma' (h (D^+_2,T)).
\end{eqnarray*}
Indeed, on the one hand, we have
\begin{eqnarray*}
\lefteqn{\sigma' (h (D^+_1,S) \cdot h(D^+_2, T))} \\ && \coloneqq  \sigma' (((D^+_1 \cap \gambles_S)^{\downarrow{S}}, S) \cdot ((D^+_2 \cap \gambles_T)^{\downarrow{T}}, T))\\ && \coloneqq
( \sigma( \mathcal{C}(((D^+_1 \cap \gambles_S)^{\downarrow{S}})^{\uparrow{S \cup T}} ) \cdot  \mathcal{C}(((D^+_2 \cap \gambles_T)^{\downarrow{T}})^{\uparrow{S \cup T}})), S \cup T)\\ && \coloneqq
( \sigma( \mathcal{C}(D^+_{1,S \cup T}) \cdot  \mathcal{C}(D^+_{2,S \cup T})), S \cup T),
\end{eqnarray*}
where $D^+_{1,S \cup T} \coloneqq ((D^+_1 \cap \gambles_S)^{\downarrow{S}})^{\uparrow{S \cup T}}$ and $D^+_{2,S \cup T} \coloneqq ((D^+_2 \cap \gambles_T)^{\downarrow{T}})^{\uparrow{S \cup T}} $.
On the other hand instead, we have
\begin{eqnarray*}
\lefteqn{ \sigma'(h(D^+_1,S)) \cdot \sigma'(h(D^+_2,T))} \\ && \coloneqq \sigma'( ((D^+_1 \cap \gambles_S)^{\downarrow{S}}, S)) \cdot \sigma'( ((D^+_2 \cap \gambles_T)^{\downarrow{T}}, T)) \\ && \coloneqq (\sigma ((D^+_1 \cap \gambles_S)^{\downarrow{S}}),S) \cdot  (\sigma ((D^+_2 \cap \gambles_T)^{\downarrow{T}}),  T)  \\ && \coloneqq ( \underline{E}^*( (\lpr_{1,S}^{\downarrow{S}})^{\uparrow{S \cup T}} ) \cdot \underline{E}^*( (\lpr_{2,T}^{\downarrow{T}})^{\uparrow{S \cup T}} ), S \cup T).
\end{eqnarray*}
Now, we can show that $(\lpr_{1,S}^{\downarrow{S}})^{\uparrow{S \cup T}} = \sigma(D^+_{1, S \cup T})= \sigma( ((D_1^+ \cap \gambles_S)^{\downarrow{S}})^{\uparrow{S \cup T}}) $. Indeed, 
\begin{eqnarray*}
\lefteqn{\sigma( ((D_1^+ \cap \gambles_S)^{\downarrow{S}})^{\uparrow{S \cup T}})(f) \coloneqq \sup \{ \mu \in \mathbb{R}: f- \mu \in  ((D_1^+ \cap \gambles_S)^{\downarrow{S}})^{\uparrow{S \cup T}}\}=} \\&& = \sup \{ \mu \in \mathbb{R}: f^{\downarrow{S}}- \mu \in  ((D_1^+ \cap \gambles_S)^{\downarrow{S}})\} \eqqcolon (\lpr_{1,S}^{\downarrow{S}})^{\uparrow{S \cup T}}(f),
\end{eqnarray*}
for every $f \in \gambles_S(\Omega_{S \cup T})$ such that $f^{\downarrow{S}} \in dom(\lpr_{1,S}^{\downarrow{S}})$. Analogously, we can show that $(\lpr_{2,T}^{\downarrow{T}})^{\uparrow{S \cup T}} = \sigma(D^+_{2, S \cup T})= \sigma( ((D_2^+ \cap \gambles_T)^{\downarrow{T}})^{\uparrow{S \cup T}}) $. So, we have:
\begin{eqnarray*}
\lefteqn{ ( \underline{E}^*( (\lpr_{1,S}^{\downarrow{S}})^{\uparrow{S \cup T}} ) \cdot \underline{E}^*( (\lpr_{2,T}^{\downarrow{T}})^{\uparrow{S \cup T}} ), S \cup T) } \\ &&
= ( \underline{E}^*( \sigma(D^+_{1,S \cup T} ) ) \cdot \underline{E}^*( \sigma(D^+_{2,S \cup T} ) ), S \cup T)= \\ &&
= ( \sigma( \mathcal{C}( D^+_{1,S \cup T} )) \cdot \sigma( \mathcal{C}(D^+_{2,S \cup T})), S \cup T).
\end{eqnarray*}
In fact, if $D^+_1 = \gambles$, we have the result. Otherwise, $D^+_{1,S \cup T}$ satisfies the hypotheses of Theorem~\ref{th:CommSigmaExt}. Indeed, clearly $0 \notin \edesirs(D^+_{1,S \cup T})$. Moreover, if $f \in D^+_{1,S \cup T} \setminus \gambles^+(\Omega_{S \cup T}) $, then $f \in ((D^+_1 \cap \gambles_S)^{\downarrow{S}})^{\uparrow{S \cup T}} \setminus \gambles^+(\Omega_{S \cup T})$, therefore $f^{\uparrow{I}} \in D_1^+ \cap \gambles_S \setminus \gambles^+$ and then, there exists $\delta >0$ such that $f^{\uparrow{I}} - \delta \in D_1^+ \cap \gambles_S$ that means $f - \delta \in ((D^+_1 \cap \gambles_S)^{\downarrow{S}})^{\uparrow{S \cup T}}$. Therefore, thanks to Theorem~\ref{th:CommSigmaExt}, we have:  $ \underline{E}^*( \sigma( D^+_{1,S \cup T})) = \sigma( \mathcal{C}( D^+_{1,S \cup T}))$. Analogously, we can show that $ \underline{E}^*( \sigma( D^+_{2,S \cup T})) = \sigma( \mathcal{C}( D^+_{2,S \cup T}))$.


Therefore, given the fact that $\mathcal{C}(D^+_{1,S \cup T}),\mathcal{C}(D^+_{1,S \cup T}) \in \Phi^+(\Omega_{S \cup T})$, thanks again to Theorem~\ref{th:IsomPrev}, we have 
\begin{eqnarray*}
\lefteqn{ \sigma'(h(D^+_1,S)) \cdot \sigma'(h(D^+_2,T)) =  ( \sigma( \mathcal{C}( D^+_{1,S \cup T} )) \cdot \sigma( \mathcal{C}(D^+_{2,S \cup T})), S \cup T) =} \\ &=
( \sigma( \mathcal{C}(D^+_{1,S \cup T}) \cdot  \mathcal{C}(D^+_{2,S \cup T})), S \cup T) =\sigma' (h (D^+_1,S) \cdot h(D^+_2, T)).
\end{eqnarray*}
Hence, we have:
 \begin{eqnarray*}
\lefteqn{\underline{h}((\lpr_1,S) \cdot (\lpr_2,T))  = \sigma' (h (D^+_1,S) \cdot h(D^+_2, T))} \\ && = \sigma' (h (D^+_1,S)) \cdot \sigma'(h(D^+_2, T)) \\ && = \underline{h} (\sigma'(D^+_1,S)) \cdot \underline{h}(\sigma'(D^+_2,T))= \\ && = \underline{h}(\lpr_1,S) \cdot \underline{h}(\lpr_2,T).
\end{eqnarray*}
Obviously, $\underline{h}(\sigma(\gambles),S) = (\sigma(\gambles(\Omega_S)),S)$ and $\underline{h}(\sigma(\gambles^+),S) = (\sigma(\gambles^+(\Omega_S)),S)$.

Then, we have
\begin{eqnarray*}¨
\underline{h}(\underline{\pi}_T(\lpr,S)) \coloneqq \underline{h}( \underline{e}_T(\lpr), T) = \underline{h}( \underline{e}_T(\sigma(D^+)), T),
\end{eqnarray*}
if $D^+= \tau(\lpr)$. Therefore,
\begin{eqnarray*}¨
\lefteqn{\underline{h}(\underline{\pi}_T(\lpr,S))   = \underline{h}( \underline{e}_T(\sigma(D^+)), T)  }\\ && 
= \underline{h}( \sigma(\epsilon_T(D^+)), T) \\ &&
\eqqcolon \underline{h}( \sigma'(\pi_T(D^+, S))) \\ &&
= \sigma'( h(\pi_T(D^+, S) ) ) \\ &&
= \sigma'( \pi_T( h(D^+,S)) )  \\ &&
\coloneqq\sigma'( \pi_T ( ( D^+ \cap \gambles_S)^{\downarrow{S}},S) ) \\ &&
\coloneqq \sigma'( (\epsilon_T ( ( D^+ \cap \gambles_S)^{\downarrow{S}}) \cap \gambles_T(\Omega_S))^{\downarrow{T}},T) )\\ &&
\coloneqq (\sigma( (( D^+ \cap \gambles_S)^{\downarrow{S}} \cap \gambles_T(\Omega_S))^{\downarrow{T}}), T),
\end{eqnarray*}
thanks to Theorem~\ref{th:IsomPrev} and Theorem~\ref{th:LabelIsom}.
Now, analogously as before,  
 we have
\begin{eqnarray*}
\lefteqn{\underline{h}(\underline{\pi}_T(\lpr,S)) = (\sigma(( D^+ \cap \gambles_S)^{\downarrow{S}} \cap \gambles_T(\Omega_S))^{\downarrow{T}}), T)} \\ &&
= (\sigma( ( D^+ \cap \gambles_S)^{\downarrow{S}})_T^{\downarrow{T}}, T) \\ &&
= \underline{\pi}_T( \underline{h}(\lpr,S)),
\end{eqnarray*}
thanks to Theorem~\ref{th:IsomPrev} and the fact that $\sigma ( (D^+ \cap \gambles_S)^{\downarrow{S}}) =\sigma ( D^+)_S^{\downarrow{S}} \eqqcolon \lpr_S^{\downarrow{S}}$, so we have the result.

\item Suppose $\underline{h}(\lpr_1,S) =\underline{h}(\lpr_2,T)$. Then we have $S=T$ and $\lpr_{1,S}^{\downarrow{S}} = \lpr_{2,S}^{\downarrow{S}}$, from which we derive that $\lpr_{1,S} = \lpr_{2,S}$ and therefore, $\lpr_1= \underline{E}^*(\lpr_{1,S})= \underline{E}^*(\lpr_{2,S}) = \lpr_{2,S}$.
So the map $\underline{h}$ is injective.

Moreover, for any $(\tilde{\lpr},S) \in \underline{\tilde{\Psi}}$ we have $(\tilde{\lpr},S) = \underline{h}(\underline{E}^*(\tilde{\lpr}^{\uparrow{I}}), S)$, with $(\underline{E}^*(\tilde{\lpr}^{\uparrow{I}}), S) \in \underline{\Psi}$. Clearly, this is true if $\tilde{\lpr} \notin \underline{\mathcal{P}}(\Omega_S)$. Otherwise, let us call $\tilde{D}^+ \coloneqq \tau(\tilde{\lpr})$.  Then, $\tilde{\lpr}^{\uparrow{I}} = \sigma((\tilde{D}^+ )^{\uparrow{I}})$. In fact,
\begin{eqnarray*}
\sigma((\tilde{D}^+) ^{\uparrow{I}})(f) \coloneqq \sup\{ \mu \in \mathbb{R}: f - \mu \in (\tilde{D}^+) ^{\uparrow{I}}\} = \sup\{ \mu \in \mathbb{R}: f^{\downarrow{S}} - \mu \in \tilde{D}^+ \}\eqqcolon \tilde{\lpr}^{\uparrow{I}}(f),
\end{eqnarray*}
for every $f \in \gambles_S(\Omega_I)$, such that $f^{\downarrow{S}} \in dom(\tilde{\lpr})$. Therefore, ${E}^*(\tilde{\lpr}^{\uparrow{I}}) = {E}^*(\sigma((\tilde{D}^+)^{\uparrow{I}}))$.

Hence,
\begin{eqnarray*}
\lefteqn{\underline{e}_S({E}^*(\tilde{\lpr}^{\uparrow{I}})) = \underline{e}_S({E}^*(\sigma((\tilde{D}^+)^{\uparrow{I}})))=  \underline{e}_S(\sigma(\mathcal{C}((\tilde{D}^+)^{\uparrow{I}})))=} \\ &&= \sigma(\epsilon_S(\mathcal{C}((\tilde{D}^+)^{\uparrow{I}})))) =\sigma(\mathcal{C}((\tilde{D}^+)^{\uparrow{I}})) =\underline{e}_S({E}^*(\tilde{\lpr}^{\uparrow{I}})),
\end{eqnarray*}
thanks to Theorem~\ref{th:CommSigmaExt} (that can be applied on $(\tilde{D}^+)^{\uparrow{I}}$), Theorem~\ref{th:IsomPrev} and the fact that $\mathcal{C}((\tilde{D}^+)^{\uparrow{I}})$ has support $S$ (see proof of item 2 of Theorem~\ref{th:LabelIsom}). Therefore, 
$(\underline{E}^*(\tilde{\lpr}^{\uparrow{I}}), S) \in \underline{\tilde{\Psi}}$.
Moreover, 
\begin{eqnarray*}
\lefteqn{\underline{h}(\underline{E}^*(\tilde{\lpr}^{\uparrow{I}}), S) = ({E}^*(\tilde{\lpr}^{\uparrow{I}})_S^{\downarrow{S}},S) = } \\&&= (\sigma(\mathcal{C}((\tilde{D}^+)^{\uparrow{I}}))_S^{\downarrow{S}},S) = (\sigma((\mathcal{C}((\tilde{D}^+)^{\uparrow{I}})  \cap \gambles_S)^{\downarrow{S}}), S) =(\sigma(\tilde{D}^+),S)= (\tilde{\lpr},S).
\end{eqnarray*}
thanks to item 5 of Lemma~\ref{le:cylExtProperties}.
So $h$ is surjective, hence bijective.

\end{enumerate}
\end{proof}

Therefore $\underline{\tilde{\Psi}}$, with the operations defined above, is a labeled information algebra too.

The results of this section show that the coherent lower (and upper) previsions form an information algebra closely related to the information algebra of coherent sets of gambles. This relationship carries over to the labeled versions of the algebras involved. 
However, we have seen that the algebra of coherent sets of gambles is completely atomistic. In the next section we discuss what this means for the algebra of coherent lower previsions.

\section{Linear Previsions} \label{sec:LinPrev}

If, given a coherent lower prevision $\lpr$, it is true that $\lpr(f) = - \lpr(-f)$ for all $f$ in $\mathcal{L}(\Omega)$, that is if a coherent lower prevision $\lpr$ and its respective coherent upper prevision $\overline{P}$ coincide, $\lpr$ is called a linear prevision. Then it is usual to write $\lpr = \upr = P$. Linear previsions have an important role in the theory of imprecise probabilities. Therefore, in this section, they will be examined from the point of view of information algebras. First of all a linear prevision is a coherent lower (and upper) prevision. So, if $\mathcal{P}(\Omega)$ denotes the set of linear previsions on $\mathcal{L}(\Omega)$, we have $\mathcal{P}(\Omega) \subseteq \underline{\mathcal{P}}(\Omega)$. Note that from the third coherence property of lower previsions it follows that $P(f + g) = P(f) + P(g)$, for every $f,g \in \mathcal{L}(\Omega)$. 

Let us concentrate ourselves on the strictly desirable set of gambles associated to a linear prevision $P$,
\begin{eqnarray*}
\tau(P) \coloneqq \{f \in \gambles:P(f) > 0\} \cup \mathcal{L}^+(\Omega) = \{f \in \gambles:-P(-f) > 0\} \cup \mathcal{L}^+(\Omega).
\end{eqnarray*}
We call these sets \emph{maximal strictly desirable} sets of gambles and we indicate them with $M^+$.
It is possible to show that $M^+ \coloneqq \tau(P) = \tau(\sigma(M))$, where $M$ is a maximal set of gambles~\citep{walley91}.
Since maximal sets of gambles are atoms in the algebra of coherent sets of gambles, we may presume that maximal strictly desirable sets of gambles are atoms in the algebra of strictly desirable sets of gambles and linear previsions are also atoms in the algebra of coherent lower previsions. This is indeed the case.

In this section, we will consider the information order on the information algebra of coherent lower previsions defined as usual: given two lower previsions $\lpr_1, \lpr_2 \in \underline{\Phi}(\Omega)$, $\lpr_1 \ge \lpr_2$ if $\lpr_1 \cdot \lpr_2 = \lpr_1$. This order clearly coincides with the usual order on lower previsions restricted to $\underline{\Phi}(\Omega)$. We will consider also the analogous order on elements of $\underline{\tilde{\Psi}}$.


\begin{lemma}\label{lem:linearprevatoms}
Let $\lpr$ be an element of $\underline{\Phi}$ and $P$ a linear prevision. Then $P \leq \lpr$ implies either $\lpr = P$ or $\lpr(f) = \infty$ for all $f \in \mathcal{L}$.
\end{lemma}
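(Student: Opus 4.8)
The plan is to dispose of the two cases that are built into the definition of $\underline{\Phi}$. Recall that $\underline{\Phi}(\Omega) = \underline{\mathcal{P}}(\Omega) \cup \{\sigma(\mathcal{L}(\Omega))\}$, that by the earlier lemma every coherent lower prevision in $\underline{\mathcal{P}}(\Omega)$ has domain all of $\mathcal{L}(\Omega)$, and that the top element $\sigma(\mathcal{L}(\Omega))$ assigns the value $\infty$ to every gamble. If $\lpr = \sigma(\mathcal{L}(\Omega))$ we are already in the second alternative and there is nothing to prove, so I would assume $\lpr \in \underline{\mathcal{P}}(\Omega)$, i.e.\ $\lpr$ is a coherent lower prevision defined on all of $\mathcal{L}(\Omega)$, and aim to conclude $\lpr = P$. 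Since the information order on $\underline{\Phi}$ coincides with pointwise domination, the hypothesis $P \leq \lpr$ just says $P(f) \leq \lpr(f)$ for every $f \in \mathcal{L}(\Omega)$.

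Next I would record the conjugacy inequality $\lpr(f) \leq -\lpr(-f)$, valid for any coherent lower prevision. It follows from the three coherence properties: positive homogeneity and superadditivity force $\lpr(0)=0$ (indeed $\lpr(0) \geq \inf_{\omega \in \Omega} 0 = 0$, while $\lpr(0) = \lpr(0+0) \geq 2\lpr(0)$ gives $\lpr(0) \leq 0$), and then superadditivity applied to $f$ and $-f$ yields $0 = \lpr(0) \geq \lpr(f) + \lpr(-f)$, that is $\lpr(f) \leq -\lpr(-f) = \upr(f)$.

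The heart of the argument is then a short chain of inequalities. Fixing $f \in \mathcal{L}(\Omega)$ and using, in order, $P \leq \lpr$ at $f$, the conjugacy inequality, $P \leq \lpr$ at $-f$ (which gives $-\lpr(-f) \leq -P(-f)$), and finally linearity of $P$ (which gives $-P(-f) = P(f)$), one obtains
\begin{eqnarray*}
P(f) \leq \lpr(f) \leq -\lpr(-f) \leq -P(-f) = P(f).
\end{eqnarray*}
Hence every inequality is an equality; in particular $\lpr(f) = P(f)$. As $f$ was arbitrary, $\lpr = P$, completing the proof.

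I do not expect any genuine obstacle here: beyond unwinding the definitions, the only ingredient is the conjugacy inequality $\lpr \leq \upr$, which is a routine consequence of coherence. The one point requiring a little care is the domain bookkeeping — confirming that, once $\lpr$ is taken to be an honest coherent lower prevision rather than the top element, it is defined on all of $\mathcal{L}(\Omega)$, so that the displayed chain is meaningful for every gamble $f$.
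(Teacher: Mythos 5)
Your proof is correct and follows essentially the same route as the paper: after disposing of the case $\lpr = \sigma(\mathcal{L}(\Omega))$, the key is the chain $P(f) \leq \lpr(f) \leq -\lpr(-f) \leq -P(-f) = P(f)$, which is exactly the argument given there. The only difference is that you derive the conjugacy inequality $\lpr \leq \upr$ from the coherence axioms rather than citing it from Walley, which is a harmless (and correct) addition.
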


\begin{proof}
Clearly $\lpr(f)= + \infty$ for all $f \in \mathcal{L}$ is a possible solution. Consider instead the case in which $\lpr$ is coherent.

From~\cite{walley91}, we know that $\lpr(f) \le \upr(f),$ for all $f \in \mathcal{L}(\Omega)$. Then, we have:
\begin{equation}
    \lpr(f) \le \upr(f) = - \lpr(-f) \le -P(-f)=P(f), \; \forall f \in \mathcal{L}(\Omega). 
\end{equation}
Given the fact that, by hypothesis, we have also $\lpr(f) \ge P(f),$ for all $f \in \mathcal{L}(\Omega)$, we have the result.
\end{proof}

\begin{lemma}
Let $D^+$ be an element of $\Phi^+$ and $M^+$ a maximal strictly desirable set of gambles. Then $M^+ \le D^+$ implies either $D^+=M^+$ or $D^+ = \mathcal{L}(\Omega)$.
\end{lemma}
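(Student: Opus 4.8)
The plan is to transfer the atom property from linear previsions (Lemma~\ref{lem:linearprevatoms}) to maximal strictly desirable sets of gambles via the isomorphism $\sigma$ between $\Phi^+$ and $\underline{\Phi}$. First I would recall that, since $M^+$ is a maximal strictly desirable set, we have $M^+ = \tau(P)$ for some linear prevision $P$, equivalently $\sigma(M^+) = P$. Likewise, since $D^+ \in \Phi^+$, it corresponds under $\sigma$ to some element $\lpr \coloneqq \sigma(D^+) \in \underline{\Phi}$, with $D^+ = \tau(\lpr)$ (and $D^+ = \mathcal{L}(\Omega)$ exactly when $\lpr = \sigma(\mathcal{L}(\Omega))$, i.e. $\lpr(f) = \infty$ for all $f$).

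Next I would observe that the hypothesis $M^+ \le D^+$ — information order in $\Phi^+$ — translates into $P \le \lpr$ in $\underline{\Phi}$. This is because $\sigma$ restricted to $\Phi^+$ is an isomorphism of information algebras (Theorem~\ref{th:IsomPrev} together with the bijectivity of $\sigma$ on $\Phi^+$), so it maps combination to combination and hence preserves the information order; alternatively one can argue directly from the earlier remark that $\tau$ and $\overline\tau$ preserve order and so does $\sigma$ on coherent (strictly desirable) sets, so $M^+ \subseteq D^+$ gives $\sigma(M^+) = P \le \sigma(D^+) = \lpr$. Then Lemma~\ref{lem:linearprevatoms} applies and yields either $\lpr = P$ or $\lpr(f) = \infty$ for all $f \in \mathcal{L}$.

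Finally I would unwind these two alternatives. If $\lpr = P$, then $D^+ = \tau(\lpr) = \tau(P) = M^+$. If instead $\lpr(f) = \infty$ for all $f$, then $\lpr = \sigma(\mathcal{L}(\Omega))$, and since $\sigma$ restricted to $\Phi^+$ is a bijection (with $\tau(\sigma(\mathcal{L}(\Omega))) \coloneqq \mathcal{L}(\Omega)$), we get $D^+ = \mathcal{L}(\Omega)$. This gives exactly the claimed dichotomy. I do not expect a genuine obstacle here; the only point requiring a little care is making the order-translation step precise — namely, that $M^+ \le D^+$ in $\Phi^+$ corresponds to $P \le \lpr$ in $\underline{\Phi}$ — but this is immediate from the order-preservation of $\sigma$ and $\tau$ on strictly desirable sets already noted in the text, so the argument is essentially a transport of Lemma~\ref{lem:linearprevatoms} along the isomorphism.
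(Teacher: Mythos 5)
Your proposal is correct and follows essentially the same route as the paper: both apply $\sigma$ to translate $M^+ \le D^+$ into $\sigma(M^+) \le \sigma(D^+)$, invoke Lemma~\ref{lem:linearprevatoms} to obtain the dichotomy for lower previsions, and then pull the conclusion back through $\tau$ using the bijectivity of $\sigma$ on $\Phi^+$. The only difference is that you spell out the order-preservation step in slightly more detail, which the paper takes for granted.
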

\begin{proof} Notice that
\begin{equation}
    M^+ \le D^+ \Rightarrow \sigma(M^+) \le \sigma(D^+). 
\end{equation}
Therefore, from Lemma~\ref{lem:linearprevatoms}, we have $\sigma(D^+)= \sigma(M^+)$ or $\sigma(D^+)= \sigma(\gambles)$, from which we derive $D^+= \tau(\sigma(D^+))= \tau (\sigma(M^+))=M^+$ or $D^+= \tau(\sigma(D^+))= \tau(\sigma(\gambles))= \gambles$.
\end{proof}

From Lemma~\ref{lem:linearprevatoms}, we may automatically deduce the properties of atoms in an information algebra, so, for example, we have $\lpr \cdot P = P$ or $\lpr \cdot P = 0$, where here $0$ is the null element $\lpr(f) = \infty$, for all $f \in \mathcal{L}(\Omega)$. The information algebra $\Phi$ of coherent sets of gambles is completely atomistic. It is to be expected that the same holds for the algebra $\underline{\Phi}$ of coherent lower previsions. Let $At(\underline{\Phi}) \coloneqq \mathcal{P}(\Omega)$ be the set of all linear previsions (atoms) and $At(\lpr)$ the set of all linear previsions (atoms) dominating $\lpr \in \Phi$,
\begin{eqnarray*}
At(\lpr) \coloneqq  \{P \in At(\underline{\Phi}):\lpr \leq P\}.
\end{eqnarray*}
Then the following theorem shows that the information algebra $\underline{\Phi}$ is completely atomistic.

\begin{theorem} \label{th:LowPrevAtomistic}
In the information algebra of lower previsions $\underline{\Phi}$, the following holds.
\begin{enumerate}
\item If $\lpr$ is a coherent lower prevision, then $At(\lpr) \neq \emptyset$ and
\begin{eqnarray*}
\lpr = \inf At(\lpr).
\end{eqnarray*}
\item If $A$ is any non-empty subset of linear previsions in $At(\underline{\Phi})$, then 
\begin{eqnarray*}
\lpr \coloneqq \inf A
\end{eqnarray*}
exists and is a coherent lower prevision.
\end{enumerate}
\end{theorem}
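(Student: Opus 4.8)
The plan is to obtain both parts by transporting the complete atomisticity of the algebra $\Phi$ of coherent sets of gambles (Section~\ref{sec:Atoms}) across the order isomorphism $\sigma\colon\Phi^{+}\to\underline{\Phi}$, whose inverse is $\tau$ (Theorem~\ref{th:IsomPrev} and the discussion following it), and to control the resulting infima with Lemma~\ref{le:MaintainInf}. The facts I would lean on are: the information order on $\underline{\Phi}$ is pointwise domination; $\tau$ and $\sigma$ are order preserving (on coherent lower previsions and on coherent, resp.\ strictly desirable, sets of gambles); $\tau(\sigma(D))\subseteq D$, with equality when $D$ is strictly desirable; and a maximal strictly desirable set has the form $M^{+}=\tau(\sigma(M))$ for a maximal coherent set $M$, with $\sigma(M^{+})=\sigma(M)=:P_{M}$ a linear prevision. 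Lemma~\ref{lem:linearprevatoms} already shows the linear previsions are exactly the atoms of $\underline{\Phi}$, so what is left is the atomistic content.

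\emph{Item 1.} Put $D^{+}\coloneqq\tau(\lpr)$, a strictly desirable, hence coherent, set of gambles. Since $\Phi$ is atomic, the set $At(D^{+})$ of maximal coherent sets containing $D^{+}$ is non-empty. For $M\in At(D^{+})$ one has $D^{+}\subseteq M^{+}\subseteq M$: the right inclusion is $\tau(\sigma(M))\subseteq M$, and the left follows from $D^{+}=\tau(\sigma(D^{+}))\subseteq\tau(\sigma(M))=M^{+}$ by monotonicity of $\tau\circ\sigma$ applied to $D^{+}\subseteq M$. Hence $D^{+}=\bigcap_{M\in At(D^{+})}M=\bigcap_{M\in At(D^{+})}M^{+}$, the first equality being atomisticity of $\Phi$. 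Applying $\sigma$ and Lemma~\ref{le:MaintainInf} to the family of coherent sets $\{M^{+}:M\in At(D^{+})\}$ gives $\lpr=\sigma(D^{+})=\inf\{\sigma(M^{+}):M\in At(D^{+})\}=\inf\{P_{M}:M\in At(D^{+})\}$, a non-empty family of linear previsions each dominating $\lpr$. It remains to identify this family with $At(\lpr)$: one inclusion is immediate from $D^{+}\subseteq M^{+}$ (apply $\sigma$); for the reverse, if $P\in At(\lpr)$ then $D^{+}=\tau(\lpr)\subseteq\tau(P)=\tau(\sigma(M_{P}))=M_{P}^{+}\subseteq M_{P}$ for the maximal coherent set $M_{P}$ with $\sigma(M_{P})=P$, so $M_{P}\in At(D^{+})$ and $P=P_{M_{P}}$. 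Therefore $At(\lpr)\neq\emptyset$ and $\lpr=\inf At(\lpr)$.

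\emph{Item 2.} Let $A$ be a non-empty subset of $\mathcal{P}(\Omega)=At(\underline{\Phi})$. For each $P\in A$ the set $\tau(P)$ is a (maximal) strictly desirable set of gambles, in particular coherent, and $\mathcal{L}^{+}(\Omega)\subseteq\tau(P)$; hence $D\coloneqq\bigcap_{P\in A}\tau(P)$ is a non-empty intersection of coherent sets and so is itself coherent. By Lemma~\ref{le:MaintainInf}, $\sigma(D)=\inf\{\sigma(\tau(P)):P\in A\}=\inf\{P:P\in A\}=\inf A$, and $\sigma(D)$ is a coherent lower prevision since $D$ is coherent. As the information order on $\underline{\Phi}$ is pointwise domination and Lemma~\ref{le:MaintainInf} exhibits $\sigma(D)$ as the greatest lower bound, $\inf A$ exists in $\underline{\Phi}$ and equals the coherent lower prevision $\sigma(D)$.

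The main obstacle is not any single computation but the bookkeeping: one must keep straight how inclusions and dominations translate among the three representations — coherent sets, strictly desirable sets, lower previsions — so that the step $D^{+}=\bigcap_{M}M^{+}$ (which needs $D^{+}\subseteq M^{+}$ for every maximal coherent $M\supseteq D^{+}$) and the matching of the index family with $At(\lpr)$ are both legitimate. These hinge only on $\tau$ and $\sigma$ being mutually inverse order isomorphisms between $\Phi^{+}$ and $\underline{\Phi}$ together with $\tau(\sigma(M))=M^{+}$ for maximal $M$; once these are in place, Lemma~\ref{le:MaintainInf} and the atom facts of Section~\ref{sec:Atoms} do the rest, so the theorem is essentially the statement that complete atomisticity of $\Phi$ descends to $\underline{\Phi}$.
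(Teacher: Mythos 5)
Your proof is correct, but it takes a different route from the paper: the paper does not prove Theorem~\ref{th:LowPrevAtomistic} internally at all, it simply refers the reader to Theorems~2.6.3 and~3.3.3 of Walley (the existence of dominating linear previsions and the lower envelope theorem). You instead derive the statement from the machinery already built in the paper: the complete atomisticity of $\Phi$ (Section~\ref{sec:Atoms}, which rests on the de Cooman--Quaeghebeur results about maximal coherent sets, not on the lower envelope theorem), the order-preserving inverse pair $\sigma$, $\tau$ between $\Phi^{+}$ and $\underline{\Phi}$, the identity $M^{+}=\tau(\sigma(M))$ with $\sigma(M)$ linear for maximal $M$, and Lemma~\ref{le:MaintainInf} to turn intersections into infima. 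All the individual steps check out: the sandwich $D^{+}\subseteq M^{+}\subseteq M$ is legitimate (monotonicity of $\tau\circ\sigma$ on one side, $\tau(\sigma(M))\subseteq M$ on the other), the squeeze $D^{+}=\bigcap M=\bigcap M^{+}$ follows, and the identification of $\{P_{M}\}$ with $At(\lpr)$ uses only the surjectivity of $M\mapsto\sigma(M)$ onto linear previsions together with Lemma~\ref{lem:linearprevatoms}; Item~2 needs nothing beyond coherent sets forming a $\cap$-structure. What your approach buys is exactly the point the introduction advertises but the paper does not carry out in detail: the lower envelope theorem appears as a corollary of the atomistic information-algebra structure of coherent sets of gambles rather than as an imported fact. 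What the paper's citation buys is brevity and independence from the (Walley-sourced) claim that maximal coherent sets induce precisely the linear previsions, which your argument must assume.
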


For the proof of this theorem, see Theorem 2.6.3 and Theorem 3.3.3 in~\cite{walley91}.

According to this theorem, if $A$ is any non-empty family of linear previsions on $\mathcal{L}(\Omega)$, then $\inf A$ exists and it is a coherent lower prevision $\lpr$. Then we have $A \subseteq At(\lpr)$ and it follows 
\begin{eqnarray*}
\lpr \coloneqq \inf A = \inf At(\lpr).
\end{eqnarray*}

So, the coherent lower prevision $\lpr$ is the lower envelope of the linear previsions (atoms) which dominate it.

As any atomistic information algebra, $\underline{\Phi}$ is embedded in the set algebra of subsets of $At(\underline{\Phi})$ by the map $\lpr \mapsto At(\lpr)$. 
This rises the question of how to characterize the images of $\underline{\Phi}$ in the algebra of subsets of $At(\underline{\Phi})$. The answer is given by the weak*-compactness theorem~\citep[Theorem~3.6.1]{walley91}: the sets $At(\lpr)$ for any coherent lower prevision $\lpr$ are exactly the weak*-compact convex subsets of $At(\underline{\Phi})$ in the weak* topology on $At(\underline{\Phi})$. There are many other sets of linear previsions $A$ whose lower envelope equals $\lpr$.  If $\lpr = \inf A$ and $A \subseteq B \subseteq At(\lpr)$, then $\lpr = \inf B$. In fact there is a minimal set $E \subseteq At(\lpr)$ so that $\lpr = \inf E$ and this is the set of extremal points of the convex set $At(\lpr)$. This follows from the extreme point theorem~\citep{walley91}.

We shall come back to the embedding of the algebra of coherent lower previsions in the algebra of subsets of $At(\underline{\Phi})$ below in the labeled view of the algebra.


We now examine linear previsions in the labeled view of the information algebra of coherent lower previsions, see Section~\ref{sec:LowUpPrev}. 
The elements $(\tilde{P},S)$, where $\tilde{P}$ is a linear prevision on $\gambles(\Omega_S)$, are local atoms in the labeled information algebra $\underline{\tilde{\Psi}}$, that is, if $(\tilde{P},S) \leq (\tilde{\lpr},S)$, then either $ (\tilde{\lpr},S) =(\tilde{P},S)$ or $(\tilde{\lpr},S)= (\sigma(\gambles(\Omega_S)),S)$, that is the null element for label $S$. 
To show it, we can simply use Lemma~\ref{lem:linearprevatoms} on $\Omega_S$. As in Section~\ref{sec:Atoms}, we call $(\tilde{P},S)$ an atom relative to $S$. There follow, as usual, some elementary properties of linear previsions, as atoms, for instance such as the equivalent of Lemma~\ref{le:PropOfAtoms}.

\begin{lemma}
Assume $(\tilde{P},S)$ and $(\tilde{P}_1,S), (\tilde{P}_2,S) $ to be atoms relative to $S$ and $(\tilde{\lpr},S) \in \tilde{\underline{\Psi}}$, with $S \subseteq I$. Then 
\begin{itemize}
\item $(\tilde{P},S) \cdot (\tilde{\lpr},S) = (\sigma(\gambles(\Omega_S)),S)$ or $(\tilde{P},S) \cdot (\tilde{\lpr},S) = (\tilde{P},S)$.
\item If $T \subseteq S$, then $\underline{\pi}_T(\tilde{P},S)$ is an atom relative to $T$.
\item Either $(\tilde{\lpr},S) \leq (\tilde{P},S)$ or $(\tilde{\lpr},S) \cdot (\tilde{P},S) = (\sigma(\gambles(\Omega_S)),S)$,
\item Either $(\tilde{P}_{1},S) \cdot (\tilde{P}_{2},S) = (\sigma(\gambles(\Omega_S)),S)$ or $(\tilde{P}_{1},S) = (\tilde{P}_{2},S)$.
\end{itemize}
\end{lemma}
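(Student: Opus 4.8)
The plan is to isolate the one fact special to linear previsions — that $(\tilde{P},S)$ is an atom relative to $S$ — and then observe that items 1, 3 and 4 are purely order-theoretic consequences of this, holding in any labeled information algebra exactly as in Lemma~\ref{le:PropOfAtoms}. That $(\tilde{P},S)$ is an atom relative to $S$ has already been argued above by applying Lemma~\ref{lem:linearprevatoms} on $\Omega_S$: if $(\tilde{P},S)\le(\tilde{\lpr},S)$, then $\tilde{\lpr}$ is an element of $\underline{\Phi}(\Omega_S)$ dominating the linear prevision $\tilde{P}$, so either $\tilde{\lpr}=\tilde{P}$ or $\tilde{\lpr}(f)=\infty$ for all $f\in\gambles(\Omega_S)$, i.e. $(\tilde{\lpr},S)$ is the null element for label $S$. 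Recall also that $\underline{\tilde{\Psi}}$ is a labeled information algebra, so for elements of a common label $S$ combination is idempotent, whence $a\le a\cdot b$ (since $a\cdot(a\cdot b)=a\cdot b$), and the information order is characterised by $a\le b \iff a\cdot b=b$.

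For item 1 I would simply take $a=(\tilde{P},S)$, $b=(\tilde{\lpr},S)$: then $(\tilde{P},S)\le(\tilde{P},S)\cdot(\tilde{\lpr},S)$, and applying the defining property of a relative atom to $(\tilde{P},S)\cdot(\tilde{\lpr},S)$ yields the dichotomy. Item 3 then follows at once: if $(\tilde{\lpr},S)\cdot(\tilde{P},S)\neq(\sigma(\gambles(\Omega_S)),S)$, then by item 1 it equals $(\tilde{P},S)$, which by definition of the information order is exactly $(\tilde{\lpr},S)\le(\tilde{P},S)$. Item 4 is obtained by applying item 3 with $(\tilde{\lpr},S)=(\tilde{P}_1,S)$: either $(\tilde{P}_1,S)\cdot(\tilde{P}_2,S)$ is the null element, or $(\tilde{P}_1,S)\le(\tilde{P}_2,S)$; in the latter case, since $(\tilde{P}_1,S)$ is itself an atom relative to $S$ and $(\tilde{P}_2,S)$ is not the null element (a linear prevision is coherent), the atom property forces $(\tilde{P}_2,S)=(\tilde{P}_1,S)$.

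For item 2 I would show directly that the first component of $\underline{\pi}_T(\tilde{P},S)$ is a linear prevision on $\gambles(\Omega_T)$, and then invoke Lemma~\ref{lem:linearprevatoms} on $\Omega_T$ to conclude it is an atom relative to $T$. The point is that the marginal of a linear prevision is again linear: the restriction $\tilde{P}_T$ of $\tilde{P}$ to the $T$-measurable gambles is already a coherent prevision, so $\underline{E}^*$ leaves it unchanged there, and the operator $(\cdot)_T^{\downarrow{T}}$ merely transports it along the bijection $f\mapsto f^{\uparrow{S}}$ between $\gambles(\Omega_T)$ and $\gambles_T(\Omega_S)$; additivity and the self-conjugacy $\tilde{P}(f)=-\tilde{P}(-f)$ are preserved both by restriction to a linear subspace and by this bijection. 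Hence $\underline{\pi}_T(\tilde{P},S)=(\tilde{P}_T^{\downarrow{T}},T)$ with $\tilde{P}_T^{\downarrow{T}}$ a linear prevision on $\Omega_T$, and Lemma~\ref{lem:linearprevatoms} applied on $\Omega_T$ gives the claim.

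I expect item 2 to be the only step requiring any actual argument; items 1, 3 and 4 are immediate from the relative-atom property together with $a\le a\cdot b$ and the definition of the information order, and are in fact generic facts about atoms in labeled information algebras — so one could alternatively dispatch the whole lemma by citing the analogue Lemma~\ref{le:PropOfAtoms} together with~\cite{kohlas03}.
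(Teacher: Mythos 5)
Your proposal is correct, but it does more work than the paper, which proves nothing here: it simply remarks that these are generic properties of relative atoms in a labeled information algebra and refers to~\cite{kohlas03} (exactly the ``alternative dispatch'' you mention in your last sentence). Your arguments for items 1, 3 and 4 are precisely the abstract ones — idempotency gives $a\le a\cdot b$, the relative-atom dichotomy is applied to $(\tilde{P},S)\cdot(\tilde{\lpr},S)$, and the rest follows from $a\le b\iff a\cdot b=b$ — so they coincide with what the cited general theory would supply. The genuine divergence is item 2: the general proof in~\cite{kohlas03} runs through the combination axiom, $\underline{\pi}_T(\tilde{\lpr}\cdot(\tilde{P},S))=\tilde{\lpr}\cdot\underline{\pi}_T(\tilde{P},S)$ for $d(\tilde{\lpr})=T\subseteq S$, together with the dichotomy for $(\tilde{P},S)\cdot\tilde{\lpr}$, and never looks inside the elements; you instead verify concretely that the first component of $\underline{\pi}_T(\tilde{P},S)$ is again a linear prevision (restriction to the linear subspace $\gambles_T$ preserves additivity and self-conjugacy, $\underline{E}^*$ fixes the already-coherent restriction, and $(\cdot)_T^{\downarrow T}$ is just transport along the bijection $f\mapsto f^{\uparrow S}$) and then reapply Lemma~\ref{lem:linearprevatoms} on $\Omega_T$. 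The abstract route buys uniformity across all atomistic labeled information algebras; your route buys the concrete and independently useful fact that marginals of linear previsions are linear, and keeps the whole lemma self-contained given only Lemma~\ref{lem:linearprevatoms}. Both are sound.
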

For a proof of this result on atoms, we refer again to~\cite{kohlas03}.

Just as the labeled information algebra of coherent sets of gambles is atomic, atomistic and completely atomistic, the same holds for the labeled algebra of coherent lower previsions. Let $At(\underline{\tilde{\Psi}}_S)$ be the set of atoms $(\tilde{\pr},S)$ relative to $S$, and $At (\tilde{\lpr},S)$ the set of atoms dominating $(\tilde{\lpr},S) \in \tilde{\underline{\Psi}}_S$.

\begin{itemize}
\item \textit{Atomic:} For any element $(\tilde{\lpr},S) \in \underline{\tilde{\Psi}}_S, S \subseteq I$ with $\tilde{\lpr} \in \underline{\mathcal{P}}(\Omega_S)$, there is an atom relative to $S$,  $(\tilde{P},S)$, so that $(\tilde{\lpr},S) \leq (\tilde{P},S)$. That is, $At(\tilde{\lpr},S)$ is not empty.
\item \textit{Atomistic}: For any element, $(\tilde{\lpr},S) \in \underline{\tilde{\Psi}}_S, S \subseteq I$, with $\tilde{\lpr} \in \underline{\mathcal{P}}(\Omega_S)$,  we have $(\tilde{\lpr},S) = \inf At(\tilde{\lpr},S)$.
\item \textit{Completely Atomistic:} For any, not empty, subset $A$ of $At(\underline{\tilde{\Psi}}_S)$, $\inf A$ exists and belongs to
$\underline{\tilde{\Psi}}_S$, for every $S \subseteq I$.
\end{itemize}

It is further well-known that the local atoms of any atomic labeled information algebra satisfy the following conditions, expressed here for atoms of $\underline{\tilde{\Psi}}$~\citep{kohlas03}.

\begin{lemma} \label{le:TupleSys}
Let
\begin{eqnarray*}
At(\underline{\tilde{\Psi}}) \coloneqq \bigcup_{S \subseteq I} At(\underline{\tilde{\Psi}}_S)
\end{eqnarray*}
and let $(\tilde{P},S),(\tilde{P},T), (\tilde{P}_1,S),(\tilde{P}_{2},T) \in At(\underline{\tilde{\Psi}})$.

Then,
\begin{enumerate}
\item if $T \subseteq d(\tilde{P},S)$, then $d(\underline{\pi}_T( \tilde{P},S)) = T$,
\item if $U \subseteq T \subseteq S$, then $\underline{\pi}_U(\underline{\pi}_T(\tilde{P},S)) = \underline{\pi}_U(\tilde{P},S)$,
\item $\underline{\pi}_S(\tilde{P},S) = (\tilde{P},S)$,
\item if $\underline{\pi}_{S \cap T}(\tilde{P}_{1},S) = \underline{\pi}_{S \cap T}(\tilde{P}_{2},T)$, then there is a $(\tilde{P}, S \cup T)$ $\in At(\underline{\tilde{\Psi}})$ so that $\underline{\pi}_S(\tilde{P}, S \cup T) = (\tilde{P}_{1},S)$ and $\underline{\pi}_T(\tilde{P}, S \cup T) = (\tilde{P}_{2}, T)$,
\item for an element $(\tilde{P},T) \in At(\underline{\tilde{\Psi}})$, if $T \subseteq S$,  there is a $(\tilde{P},S)$ $\in At(\underline{\tilde{\Psi}})$ so that $\underline{\pi}_T(\tilde{P},S) = (\tilde{P},T)$.
\end{enumerate}
\end{lemma}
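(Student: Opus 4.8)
The plan is to obtain all five items from the labeled information algebra axioms for $\underline{\tilde{\Psi}}$ (the analogue of Theorem~\ref{th:LabInfAlg}), together with the atomic property of $\underline{\tilde{\Psi}}$ established above and the elementary dichotomy properties of atoms relative to a domain: namely that the projection of an atom is again an atom, and that an atom which is dominated, in information order, by another atom of the same label must coincide with it. Since these are purely generic facts about atomic labeled information algebras, the argument is the one in~\cite{kohlas03}; I sketch it here.

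Items 1--3 hold for \emph{every} element of $\underline{\tilde{\Psi}}$, not just atoms: item~1 is the labeling rule for marginalization, item~2 is transitivity of projection, and item~3 is the identity axiom, all part of the labeled information algebra structure of $\underline{\tilde{\Psi}}$.

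For item~4, given atoms $(\tilde{P}_1,S)$ and $(\tilde{P}_2,T)$ with $\underline{\pi}_{S\cap T}(\tilde{P}_1,S) = \underline{\pi}_{S\cap T}(\tilde{P}_2,T)$, I would take as candidate their combination $(\tilde{Q},S\cup T)\coloneqq (\tilde{P}_1,S)\cdot(\tilde{P}_2,T)$. By the combination axiom, then the hypothesis and idempotency,
\[
\underline{\pi}_S(\tilde{Q},S\cup T)=(\tilde{P}_1,S)\cdot\underline{\pi}_{S\cap T}(\tilde{P}_2,T)=(\tilde{P}_1,S)\cdot\underline{\pi}_{S\cap T}(\tilde{P}_1,S)=(\tilde{P}_1,S),
\]
and symmetrically $\underline{\pi}_T(\tilde{Q},S\cup T)=(\tilde{P}_2,T)$. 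In particular $(\tilde{Q},S\cup T)$ is not the null of label $S\cup T$ (otherwise its projection would be the null of label $S$, not the atom $(\tilde{P}_1,S)$), so by the atomic property there is an atom $(\tilde{P},S\cup T)\geq(\tilde{Q},S\cup T)$. Since marginalization is order-preserving, $(\tilde{P}_1,S)=\underline{\pi}_S(\tilde{Q},S\cup T)\leq\underline{\pi}_S(\tilde{P},S\cup T)$; the right-hand side is an atom relative to $S$ above the atom $(\tilde{P}_1,S)$, hence equals it, and likewise $\underline{\pi}_T(\tilde{P},S\cup T)=(\tilde{P}_2,T)$, which is the assertion.

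For item~5, given an atom $(\tilde{P},T)$ relative to $T$ and $T\subseteq S$, I would vacuously extend it to $S$, setting $(\tilde{R},S)\coloneqq(\tilde{P},T)\cdot(\sigma(\gambles^+(\Omega_S)),S)$; the combination axiom, together with $\underline{\pi}_T(\sigma(\gambles^+(\Omega_S)),S)=(\sigma(\gambles^+(\Omega_T)),T)$ and the unit property, yields $\underline{\pi}_T(\tilde{R},S)=(\tilde{P},T)$, so again $(\tilde{R},S)$ is not null and the atomic property gives an atom $(\tilde{P}',S)\geq(\tilde{R},S)$; order-preservation of projection and the rigidity of atoms then give $\underline{\pi}_T(\tilde{P}',S)=(\tilde{P},T)$, and relabelling $\tilde{P}'$ as $\tilde{P}$ finishes the proof. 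The only genuinely non-routine step, shared by items~4 and~5, is this passage from the naturally built element --- a combination, resp.\ a vacuous extension --- to a true atom whose marginals are still \emph{exactly} the prescribed atoms; everything else is bookkeeping with the axioms.
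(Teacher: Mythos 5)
Your proposal is correct and follows essentially the same route as the paper, which disposes of items 1--3 as immediate consequences of the labeled information algebra axioms and defers items 4 and 5 to the generic argument for atomic labeled information algebras in Kohlas (2003) --- precisely the combination/vacuous-extension construction, atomicity, and atom-rigidity argument you sketch. No gaps; you have merely written out the details the paper delegates to the reference.
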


Most of these properties are immediate consequences from the coherent lower previsions being a labeled information algebra. For a proof of item 4 and 5 we refer to~\cite{kohlas03}. A system like $At(\underline{\tilde{\Psi}})$ with a labeling and a projection operation satisfying the conditions of Lemma~\ref{le:TupleSys} is called a \textit{tuple system}, since it abstracts the properties of concrete tuples as used in relational database systems. And generalized relational algebras can be defined using tuple systems and they turn out to be labeled information algebras~\citep{kohlas03}. In the case of the tuple system $At(\underline{\tilde{\Psi}})$ this goes as follows: a subset $R$ of $At(\underline{\tilde{\Psi}}_S)$ is called a (generalized) relation on $S$. Denote by $\mathcal{R}_S$ all these relations on $S$ and let
\begin{eqnarray*}
\mathcal{R} \coloneqq \bigcup_{S \subseteq I} \mathcal{R}_S.
\end{eqnarray*}
Then, in $\mathcal{R}$, we define the following operations.

\begin{enumerate}
\item \textit{Labeling:} $d(R) \coloneqq S$ if $R \in \mathcal{R}_S$.
\item \textit{Natural Join:} $R_1 \bowtie R_2 \coloneqq \{(\tilde{P},S \cup T) \in At(\underline{\tilde{\Psi}}_{S  \cup T}):\underline{\pi}_S(\tilde{P},S \cup T) \in R_1,\underline{\pi}_T(\tilde{P},S \cup T) \in R_2\}$ if $d(R_1) = S$ and $d(R_2) = T$.
\item \textit{Projection:} $\underline{\pi}_T(R) \coloneqq \{\underline{\pi}_T(\tilde{P},S): (\tilde{P},S) \in R\}$, if $d(R) = S$ and $T \subseteq S$.
\end{enumerate}

With these operations, in particular natural join as combination, the algebra of relations $\mathcal{R}$ becomes a labeled information algebra. Note that the null element of natural join is the empty set and the unit with label $S$ is
$At(\underline{\tilde{\Psi}}_{S})$. This depends only on $At(\underline{\tilde{\Psi}})$ being a tuple system~\citep{kohlas03}. What is more, it turns out that the atomistic labeled algebra of coherent lower previsions is embedded into this generalized relational algebra.

\begin{theorem}
Let $(\tilde{\lpr},S)$ and $(\tilde{\underline{Q}},T)$ be elements of $\underline{\tilde{\Psi}}$. Then
\begin{enumerate}
\item $At(( \tilde{\lpr},S) \cdot (\tilde{\underline{Q}},T)) = At(\tilde{\lpr},S) \bowtie At(\tilde{\underline{Q}},T)$,
\item $At(\underline{\pi}_T(\tilde{\lpr},S)) = \underline{\pi}_T(At(\tilde{\lpr},S))$ if $T \subseteq S$,
\item $At(\sigma(\gambles^+_S),S) = At(\tilde{\underline{\Psi}}_S)$,
\item $At(\sigma(\gambles(\Omega_S)),S)= \emptyset$.
\end{enumerate}
\end{theorem}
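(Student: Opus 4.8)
The plan is to transport the analogous statement for coherent sets of gambles (Theorem~\ref{th:SertAlgInfom} in the domain-free case, and its labeled counterpart via the tuple system of maximal sets) across the isomorphism $\underline{h}: \underline{\Psi} \to \underline{\tilde{\Psi}}$ and the bijection $\tau$ between linear previsions and maximal strictly desirable sets of gambles. First I would record the key dictionary: $(\tilde{P},S)$ dominates $(\tilde{\lpr},S)$ in $\underline{\tilde{\Psi}}_S$ if and only if $(\tilde{D}^+,S) \coloneqq (\tau(\tilde{P}),S)$ is a maximal strictly desirable set of gambles on $\Omega_S$ containing $(\tau(\tilde{\lpr}),S)$, using that $\tau$ and $\sigma$ are mutually inverse order isomorphisms on $\Phi^+(\Omega_S)$ and that $M^+ = \tau(P)$ for a linear prevision $P$ corresponds to a maximal coherent set $M$. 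Hence $At(\tilde{\lpr},S)$ is, up to the map $\tau$, exactly the set $At(\tilde{D}^+,S)$ of atoms of the (labeled) algebra of coherent sets of gambles lying above $(\tilde{D}^+,S)$.

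Next I would prove items 3 and 4, which are the easy ones. For item 4, $(\sigma(\gambles(\Omega_S)),S)$ is the null element for label $S$, and no atom $(\tilde{P},S)$ satisfies $(\sigma(\gambles(\Omega_S)),S) \leq (\tilde{P},S)$, since an atom is by definition distinct from the null and the only element above the null is the null itself; hence $At(\sigma(\gambles(\Omega_S)),S) = \emptyset$. For item 3, $(\sigma(\gambles^+_S),S)$ is the unit for label $S$ (it is $\underline{h}$ applied to the vacuous lower prevision, whose $\tau$-image is $\gambles^+(\Omega_S)$), so it is dominated by every atom relative to $S$; therefore $At(\sigma(\gambles^+_S),S) = At(\underline{\tilde{\Psi}}_S)$. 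These two are essentially restatements of item~1 of Theorem~\ref{th:SertAlgInfom} transported through $\tau$.

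For item 2, I would use the labeled version of part 3 of Theorem~\ref{th:SertAlgInfom}: projection on the algebra of coherent sets of gambles corresponds, under $D \mapsto At(D)$, to the projection operation on relations of maximal sets, i.e. $At(\pi_T(\tilde{D}^+,S)) = \underline{\pi}_T(At(\tilde{D}^+,S))$ (this follows since the algebra is atomistic and $\underline{\pi}_T$ on atoms is the tuple-system projection). Then apply part 2 of Theorem~\ref{th:CohtoSXtrict}, namely $(\epsilon_S(D))^+ = \epsilon_S(D^+)$, together with the fact (Theorem~\ref{th:LabelIsomPrev}) that $\underline{h}$ intertwines $\underline{\pi}_T$ on $\underline{\tilde{\Psi}}$ with $\pi_T$ on $\tilde{\Psi}^+$, and that $\tau$ takes $\underline{\pi}_T(\tilde{\lpr},S)$ to $\pi_T(\tilde{D}^+,S)$. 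Chaining these identifications gives $At(\underline{\pi}_T(\tilde{\lpr},S)) = \underline{\pi}_T(At(\tilde{\lpr},S))$. For item 1, I would similarly use that combination on $\underline{\tilde{\Psi}}$ corresponds, via $\tau$ and $\underline{h}$, to combination on $\tilde{\Psi}^+$ (Theorem~\ref{th:LabelIsomPrev}, Corollary~\ref{corr:HomLowPrev} and Theorem~\ref{th:CohtoSXtrict}, noting that the inconsistency case $\tilde{\lpr}\cdot\tilde{\underline{Q}} = \sigma(\gambles)$ matches $\tilde{D}_1^+\cdot\tilde{D}_2^+ = \gambles$, in which case both sides are empty by item~4), and that the map $(\tilde D^+,S)\mapsto At(\tilde D^+,S)$ sends combination to natural join --- this last fact is the labeled analogue of part~2 of Theorem~\ref{th:SertAlgInfom} for the tuple system $At(\underline{\tilde{\Psi}})$, established exactly as there: an atom dominates $(\tilde{D}_1^+,S)\cdot(\tilde{D}_2^+,T)$ iff its $S$- and $T$-projections dominate $(\tilde{D}_1^+,S)$ and $(\tilde{D}_2^+,T)$ respectively, which is precisely membership in the natural join.

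The main obstacle I expect is bookkeeping: one must be careful that the inconsistency case is handled uniformly (so that the statement holds even when $\tilde{\lpr}$ or the combination is the null element), and that the correspondence "linear prevision relative to $S$ $\leftrightarrow$ maximal coherent set on $\Omega_S$" is applied at the correct domain $\Omega_S$ rather than $\Omega$; apart from this, every step reduces to a result already proved in the excerpt.
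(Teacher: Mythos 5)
Your plan is correct, but it takes a different route from the paper. The paper gives no explicit proof at all: it observes that the statement is a general theorem about atomistic labeled information algebras over a tuple system and simply cites \cite{kohlas03}, having previously established that $\underline{\tilde{\Psi}}$ is atomic, atomistic and completely atomistic with the $(\tilde{P},S)$ as local atoms. You instead reduce the claim to the already-proved gambles case: you use the order isomorphism $\tau/\sigma$ between $\underline{\Phi}(\Omega_S)$ and $\Phi^+(\Omega_S)$ (plus the correspondence $M^+=\tau(\sigma(M))$ between linear previsions and maximal coherent sets), the labeled isomorphisms $h$ and $\underline{h}$ of Theorems~\ref{th:LabelIsom} and~\ref{th:LabelIsomPrev}, and the weak homomorphisms of Theorem~\ref{th:CohtoSXtrict} and Corollary~\ref{corr:HomLowPrev}, and then transport the embedding result of Theorem~\ref{th:SertAlgInfom}. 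This buys a concrete, self-contained argument grounded in results actually proved in the paper, at the cost of more bookkeeping; the paper's citation-based approach is shorter but leaves the labeled argument entirely to the reference. Two places in your sketch still lean on results the paper also only cites rather than proves: the labeled analogue of item~3 of Theorem~\ref{th:SertAlgInfom} (the ``onto'' direction, that every atom relative to $T$ above $\underline{\pi}_T(\tilde{\lpr},S)$ is the projection of an atom relative to $S$ above $(\tilde{\lpr},S)$) and the natural-join characterization of item~1; your indication of how to adapt the domain-free saturation-operator argument and the idempotency argument ($(\tilde{P},S\cup T)\cdot\pi_S(\tilde{P},S\cup T)=(\tilde{P},S\cup T)$ together with $\pi_S(\tilde{P},S\cup T)\geq(\tilde{\lpr},S)$ forcing $(\tilde{P},S\cup T)\geq(\tilde{\lpr},S)\cdot(\tilde{\underline{Q}},T)$) is the right way to fill these in, so I see no genuine gap.
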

Again, this is a theorem of atomistic information algebras and not particular to lower previsions~\citep{kohlas03}. It says that the map $(\tilde{\lpr},S) \mapsto At(\tilde{\lpr},S)$, is an homomorphism between the labeled information algebra of coherent lower previsions and the generalized relational algebra of sets of atoms of $\underline{\tilde{\Psi}}$. Furthermore, since the map is obvioulsy one-to-one, it tells us that $\underline{\tilde{\Psi}}$ is embedded into this relational algebra. This is the labeled version of the embedding of the domain-free algebra into a set algebra of atoms.

\section{The Marginal Problem} \label{sec:Comp}

Consistency, inconsistency, compatibility or incompatibility, whatever is exactly meant by these concepts,  are general properties of information. Here these notions will be defined and studied with respect to coherent sets of gambles and coherent lower previsions, but in the frame and using results of information algebras. Two or more pieces of information can be considered as \emph{consistent}, if their combination is not the null element. 
\begin{definition}[Consistent family of coherent sets of gambles]
A finite family of coherent sets of gambles $D_1,\ldots,D_n$ is \emph{consistent}, or $D_1,\ldots,D_n$ are \emph{consistent}, if $0 \neq D_1 \cdot \ldots \cdot D_n$.
\end{definition}
This is called ``avoiding partial loss" in desirability~\citep{mirzaffalon20}. Otherwise, the family is, or the sets of gambles are, called inconsistent. There is, however, a more restrictive concept of consistency. In the view of information algebras, we prefer to call it \emph{compatibility} since the pieces of information $D_i$ come from, or are part of, the same piece of information $D$~\citep{mirzaffalon20}. 
\begin{definition}[Compatible family of coherent sets of gambles]
A finite family of coherent sets of gambles  $D_1,\ldots,D_n$, where $D_i$ has support $S_i$ for every $i=1,\ldots,n$ respectively, is called \emph{compatible}, or $D_1,\ldots,D_n$ are called \emph{compatible}, if there is a coherent set of gambles $D$ such that $\epsilon_{S_i}(D) = D_i$ for $i=1,\ldots,n$.
\end{definition}
To decide whether a family of $D_i$ is consistent in this sense is also called \emph{the marginal problem}, since extractions are (in the labeled view) projections or marginals. 

In~\cite{mirzaffalon20} is given also a definition of \emph{pairwise compatibility}, 
for coherent sets of gambles.

\begin{definition}[Pairwise compatibility for coherent sets of gambles]
Two coherent sets $D_i$ and $D_j$, where $D_i$ has support  $S_i$ and $D_j$ support $S_j$, are called \emph{pairwise compatible} if 
\begin{eqnarray*}
D_i \cap \mathcal{L}_{S_j} = D_j \cap \mathcal{L}_{S_i}
\end{eqnarray*}
or, equivalently, $\mathcal{E}_{S_j}(D_i) = \mathcal{E}_{S_i}(D_j)$. Analogously, a finite family of coherent sets of gambles $D_i,..., D_n$, where $D_i$ has support $S_i$ for every $i=1,...,n$ respectively, is pairwise compatible, or again $D_i,..., D_n$ are pairwise compatible, if pairs $D_i,D_j$ are pairwise compatible for every $i,j \in \{1, ...n\}$. 
\end{definition}

In terms of the information algebra, this means that
\begin{eqnarray} \label{eq:PairWiseComp}
\epsilon_{S_j}(D_i) = \mathcal{C}(\mathcal{E}_{S_j}(D_i)) = \mathcal{C}(\mathcal{E}_{S_i}(D_j)) = \epsilon_{S_i}(D_j).
\end{eqnarray}
From this it follows that 
\begin{eqnarray*}
\epsilon_{S_i \cap S_j}(D_i) = \epsilon_{S_i \cap S_j}(\epsilon_{S_j}(D_i)) = \epsilon_{S_i \cap S_j}(\epsilon_{S_i}(D_j)) = \epsilon_{S_i \cap S_j}(D_j).
\end{eqnarray*}
In an information algebra in general we could take this as a definition of pairwise compatibility. From this we may recover Eq.~\eqref{eq:PairWiseComp}, since by item 5 of the list of properties of support (Section~\ref{sec:LabInfAlg}), if $S_i$ is a support of $D_i$ and $S_j$ of $D_j$, we have $\epsilon_{S_j}(D_i) = \epsilon_{S_i \cap S_j}(D_i)$ and $\epsilon_{S_i}(D_j) = \epsilon_{S_i \cap S_j}(D_j)$.

Now let $D \coloneqq D_i \cdot D_j$, where $D_i$ and $D_j$ are pairwise compatible, $D_i$ has support $S_i$ and $D_j$ has support $S_j$. Then $\epsilon_{S_i}(D) = D_i \cdot \epsilon_{S_i}(D_j) = D_i \cdot \epsilon_{S_j}(D_i) = D_i$ and also $\epsilon_{S_j}(D) = D_j$. So, pairwise compatible pieces of information are compatible. And, conversely, if $D_i$ and $D_j$ are compatible, then there exists a coherent set $D$, such that $\epsilon_{S_i \cap S_j}(D) = \epsilon_{S_i \cap S_j}(\epsilon_{S_i}(D)) = \epsilon_{S_i \cap S_j}(D_i)$ and similarly $\epsilon_{S_i \cap S_j}(D) = \epsilon_{S_i \cap S_j}(D_j)$. Therefore the two elements are pairwise compatible.

It is well-known that pairwise compatibility among a family of $D_1,\ldots,D_n$ of pieces of information is not sufficient for the family to be compatible. It is also well-known that a sufficient condition to obtain compatibility from pairwise compatibility  is that the family of supports $S_1,\ldots,S_n$ of the $D_i$ satisfy the running intersection property (RIP, that is form a join tree or a hypertree construction sequence).
\begin{description}
\item[RIP] For $i=1$ to $n-1$ there is an index $p(i)$, $i+1 \leq p(i) \leq n$ such that
\begin{eqnarray*}
S_i \cap S_{p(i)} = S_i \cap (\cup_{j=i+1}^n S_j).
\end{eqnarray*}
\end{description}
Then we have the following theorem
~\cite[Theorem~2, Proposition~1]{mirzaffalon20}, a theorem that in fact is a theorem of information algebras in general.

\begin{theorem} \label{th:RIP}
Consider a family of consistent coherent sets of gambles $D_1,\ldots,D_n$ with $n > 1$ where $D_i$ has support $S_i$ for every $i=1, ...,n$ respectively. If  $S_1,\ldots,S_n$ satisfy RIP and $D_1,\ldots,D_n$ are pairwise compatible, then they are compatible and $\epsilon_{S_i}(D_1 \cdot \ldots \cdot D_n) = D_i$ for $i=1,\ldots,n$.
\end{theorem}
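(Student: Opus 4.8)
The plan is an induction on $n$, peeling off one set at a time along the order witnessed by the RIP. For $n=2$ the claim is exactly the computation carried out just before the statement: if $D_1,D_2$ are pairwise compatible with supports $S_1,S_2$ and $D\coloneqq D_1\cdot D_2$, then $\epsilon_{S_1}(D)=D_1\cdot\epsilon_{S_1}(D_2)=D_1\cdot\epsilon_{S_2}(D_1)=D_1$ and symmetrically $\epsilon_{S_2}(D)=D_2$, where I use that $S_1$ supports $D_1$ (so $\epsilon_{S_1}(D_2)=\epsilon_{S_1\cap S_2}(D_2)=\epsilon_{S_2}(D_1)\le D_1$) together with the identity $\epsilon_S(D'\cdot D'')=\epsilon_S(D')\cdot D''$, valid whenever $S$ is a support of $D''$ (a consequence of axiom E3 of Theorem~\ref{th:DomFreeInfAlg}).

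For the inductive step, set $k\coloneqq p(1)$, so that RIP gives $S_1\cap S_k=S_1\cap(S_2\cup\dots\cup S_n)$; in particular $S_1\cap S_j\subseteq S_k$, hence $S_1\cap S_j\subseteq S_k\cap S_j$, for every $j\ge 2$. Put $D_k'\coloneqq D_1\cdot D_k$, which has support $S_1\cup S_k$ by the support properties in Section~\ref{sec:LabInfAlg}, and form the $(n-1)$-element family obtained from $D_1,\dots,D_n$ by deleting $D_1$ and replacing $D_k$ by $D_k'$. I would verify that this reduced family again satisfies the hypotheses of the theorem: (i) it is consistent, since its combination is again $D\coloneqq D_1\cdot\dots\cdot D_n\ne 0$; (ii) its supports satisfy RIP, because for an index $i<k$ the new tail of $S_i$ is $S_1\cup\bigcup_{j>i}S_j$ and $S_i\cap S_1\subseteq S_i\cap S_k\subseteq S_i\cap\bigcup_{j>i}S_j$, so the intersection with $S_i$ is unchanged and the old $p(i)$ (with $S_{p(i)}$ enlarged to $S_1\cup S_k$ if $p(i)=k$) still works, merging $S_k$ into $S_1\cup S_k$ likewise changes no relevant intersection by the same inclusion, and the indices $>k$ are untouched; (iii) it is pairwise compatible, the only new pairs being $(D_k',D_j)$ with $j\ne 1,k$, for which $(S_1\cup S_k)\cap S_j=S_k\cap S_j$ by the inclusion above, and $\epsilon_{S_k\cap S_j}(D_k')=\epsilon_{S_k\cap S_j}\bigl(\epsilon_{S_k}(D_k')\bigr)=\epsilon_{S_k\cap S_j}(D_k)=\epsilon_{S_k\cap S_j}(D_j)$, using $\epsilon_{S_k}(D_k')=\epsilon_{S_k}(D_1)\cdot D_k=D_k$ (since $\epsilon_{S_k}(D_1)=\epsilon_{S_1\cap S_k}(D_1)=\epsilon_{S_1\cap S_k}(D_k)\le D_k$ by pairwise compatibility of $D_1,D_k$) and pairwise compatibility of $D_k,D_j$.

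By the induction hypothesis the reduced family is compatible and, moreover, its marginals are recovered from $D$: $\epsilon_{S_j}(D)=D_j$ for all $j\ne 1,k$ and $\epsilon_{S_1\cup S_k}(D)=D_k'$. It remains to recover the two missing marginals. Since $S_k\subseteq S_1\cup S_k$, transitivity of extraction gives $\epsilon_{S_k}(D)=\epsilon_{S_k}(D_k')=\epsilon_{S_k}(D_1)\cdot D_k=D_k$; and since $S_1\subseteq S_1\cup S_k$, $\epsilon_{S_1}(D)=\epsilon_{S_1}(D_k')=D_1\cdot\epsilon_{S_1}(D_k)=D_1\cdot\epsilon_{S_1\cap S_k}(D_k)=D_1$, the last step because $\epsilon_{S_1\cap S_k}(D_k)=\epsilon_{S_k}(D_1)\le D_1$ by pairwise compatibility of $D_1,D_k$. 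Hence $\epsilon_{S_i}(D_1\cdot\dots\cdot D_n)=D_i$ for every $i$, which is the assertion.

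The step I expect to be the real work is (ii): one must keep track of the order underlying the RIP after removing index $1$ and merging $S_k$ with $S_1$, and check that every defining equality of the RIP survives — harmless for the indices $\le k$ thanks to the inclusion $S_i\cap S_1\subseteq S_i\cap S_k$, and trivial for the indices above. Everything else is a routine application of the axioms of Theorem~\ref{th:DomFreeInfAlg} and the elementary support properties collected in Section~\ref{sec:LabInfAlg}.
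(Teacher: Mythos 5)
Your proof is correct, but it follows a genuinely different route from the paper's. The paper fixes the family and runs a two-phase ``message-passing'' induction: first a collect phase showing $\epsilon_{Y_i}(D)=D_{i+1}\cdot\ldots\cdot D_n$ for $Y_i=S_{i+1}\cup\ldots\cup S_n$ (yielding $\epsilon_{S_n}(D)=D_n$), then the identity $\epsilon_{S_i}(D)=\epsilon_{S_i\cap S_{p(i)}}(D)\cdot D_i$ and a backward induction that distributes the marginals down the RIP order. You instead induct on $n$ by contracting the join tree: merge $D_1$ into $D_{p(1)}$ and check that consistency, RIP and pairwise compatibility survive the merge. The price of your approach is exactly the bookkeeping you flag in (ii)--(iii), and those verifications do go through: the key inclusion $S_1\cap S_j\subseteq S_1\cap S_k\subseteq S_k$ for $j\ge 2$ gives $(S_1\cup S_k)\cap S_j=S_k\cap S_j$ and shows that no tail intersection changes, while $\epsilon_{S_k}(D_1\cdot D_k)=\epsilon_{S_k}(D_1)\cdot D_k=D_k$ reduces the new compatibility conditions to the old ones; the final recovery of $\epsilon_{S_1}(D)$ and $\epsilon_{S_k}(D)$ via transitivity of extraction is the paper's $n=2$ computation. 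What the paper's proof buys is that it never has to re-establish the hypotheses on a modified family, and it makes the computational collect/distribute scheme on the join tree explicit; what yours buys is a shorter single induction whose only nontrivial content is the stability of RIP and pairwise compatibility under merging a leaf into its neighbour, a fact of independent interest. Both arguments use only the axioms of Theorem~\ref{th:DomFreeInfAlg} and the support properties, so both apply verbatim to any idempotent information algebra.
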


\begin{proof}
We give a proof in the framework of general information algebras. Let $Y_i \coloneqq S_{i+1} \cup \ldots \cup S_n$ for $i=1,\ldots,n-1$ and $D \coloneqq D_1 \cdot \ldots \cdot D_n$. Then by RIP
\begin{eqnarray*}
\lefteqn{\epsilon_{Y_1}(D) = \epsilon_{Y_1}(D_1) \cdot D_2 \cdot \ldots \cdot D_n = \epsilon_{Y_1}(\epsilon_{S_1}(D_1)) \cdot D_2 \cdot \ldots \cdot D_n }\\
&&= \epsilon_{S_1 \cap Y_1}(D_1) \cdot D_2 \cdot \ldots \cdot D_n = \epsilon_{S_1 \cap S_{p(1)}}(D_1) \cdot D_2 \cdot \ldots \cdot D_n.
\end{eqnarray*}
But by pairwise compatibility $ \epsilon_{S_1 \cap S_{p(1)}}(D_1) =  \epsilon_{S_1 \cap S_{p(1)}}(D_{p(1)})$, hence by idempotency
\begin{eqnarray*}
\epsilon_{Y_1}(D) = D_2 \cdot \ldots \cdot D_n.
\end{eqnarray*}
By induction on $i$, one shows exactly in the same way that
\begin{eqnarray*}
\epsilon_{Y_i}(D) = D_{i+1} \cdot \ldots \cdot D_n, \; \forall i=1,..,n-1.
\end{eqnarray*}
So, we obtain $\epsilon_{S_n}(D) = \epsilon_{Y_{n-1}}(D) = D_n$.
Now, we claim that $\epsilon_{S_i}(D) = \epsilon_{S_i \cap S_{p(i)}}(D) \cdot D_i$ for every $i=1, ...,n-1$. Since $S_{p(i)} \subseteq Y_i$, we have by RIP
\begin{eqnarray*}
\lefteqn{D_i \cdot \epsilon_{S_i \cap S_{p(i)}}(D) = D_i \cdot \epsilon_{S_i \cap S_{p(i)}}(\epsilon_{Y_i}(D)) = D_i \cdot \epsilon_{S_i \cap Y_i}(\epsilon_{Y_i}(D))= D_i \cdot \epsilon_{S_i}(\epsilon_{Y_i}(D)) } \\
&&= D_i \cdot \epsilon_{S_i}(D_{i+1} \cdot \ldots \cdot D_n) = \epsilon_{S_i}(D_i \cdot D_{i+1} \cdot \ldots \cdot D_n) = \epsilon_{S_i}(\epsilon_{Y_{i-1}}(D))  \\
&&= \epsilon_{S_i}(D).
\end{eqnarray*}
Then, by backward induction, based on the induction assumption $\epsilon_{S_j}(D) = D_j$ for $j > i$, and rooted in $\epsilon_{S_n}(D) = D_n$, for $i = n-1,\ldots 1$, we have by pairwise compatibility
\begin{eqnarray*}
\lefteqn{\epsilon_{S_i}(D) = \epsilon_{S_i \cap S_{p(i)}}(D) \cdot D_i = \epsilon_{S_i \cap S_{p(i)}}(\epsilon_{S_{p(i)}}(D)) \cdot D_i }\\
&&=  \epsilon_{S_i \cap S_{p(i)}}(D_{p(i)}) \cdot D_i = \epsilon_{S_i \cap S_{p(i)}}(D_i) \cdot D_i = D_i.
\end{eqnarray*}
This concludes the proof.
\end{proof} 

Note that the condition $\epsilon_{S_i}(D_1 \cdot \ldots \cdot D_n) = D_i$ for every $i$, implies that the family $D_1,\ldots,D_n$ is compatible. So, this is a sufficient condition for compatibility. This theorem is a theorem of information algebras, it holds not only for coherent sets of gambles, but for any information algebra, in particular for the algebra of coherent lower previsions for instance.

The definition of compatibility and pairwise compatibility depend on the supports (hence indirectly on the $S$-measurability) of the elements $D_i$. But $D_i$ may have different supports. How does this influence compatibility? Assume $D_i$ and $D_j$ are pairwise compatible according to their supports $S_i$ and $S_j$, that is $\epsilon_{S_i \cap S_j}(D_i) = \epsilon_{S_i \cap S_j}(D_j)$. It may be that a set $S'_i \subseteq S_i$ is still a support of $D_i$ and a subset $S'_j \subseteq S_j$ a support of $D_j$. Then
\begin{eqnarray*}
\epsilon_{S'_i \cap S'_j}(D_i) = \epsilon_{S'_i \cap S'_j}(\epsilon_{S_i \cap S_j}(D_i)) = \epsilon_{S'_i \cap S'_j}(\epsilon_{S_i \cap S_j}(D_j)) = \epsilon_{S'_i \cap S'_j}(D_j).
\end{eqnarray*}
So, $D_i$ and $D_j$ are also pairwise compatible relative to the smaller supports $S'_i$ and $S'_j$. The finite supports of a coherent set of gambles $D_i$, have a least support
\begin{eqnarray*}
d_i \coloneqq \bigcap \{S : S \textrm{ support of}\ D_i\}.
\end{eqnarray*}
This is called the \textit{dimension} of $D_i$, it is itself a support of $D_i$ (see item 4 on the list of properties of supports). So, if $D_i$ and $D_j$ are pairwise compatible relative to two of their respective supports $S_i$ and $S_j$, they are pairwise compatible relative to their dimensions $d_i$ and $d_j$. This makes pairwise compatibility independent of an ad hoc selection of supports.

But what about compatibility? Assume that the family $D_1,\ldots,D_n$ is compatible relative to the supports $S_i$ of $D_i$, that is, there exists a coherent set $D$ such that $\epsilon_{S_i}(D) = D_i$. Then we have
\begin{eqnarray*}
\epsilon_{d_i}(D) = \epsilon_{d_i}(\epsilon_{S_i}(D)) = \epsilon_{d_i}(D_i) = D_i.
\end{eqnarray*}
So, the family $D_1,\ldots,D_n$ is also compatible with respect to the system of their dimensions. Again, this makes the definition of compatibility independent from a particular selection of supports. We remark that the set $\{S : S \textrm{ support of}\ D_i\}$ is an upset, that is with any element $S$ in the set an element $S' \supseteq S$ belongs also to the set (item 6 on the list of properties of supports). In fact, 
\begin{eqnarray*}
\{S : S \textrm{ support of}\ D_i\} = \uparrow\!d_i
\end{eqnarray*}
is the set of all supersets of the dimension. Now, assume $D_1,\ldots,D_n$ with $n > 1$, consistent and pairwise compatible. The dimensions $d_i$ may not satisfy RIP, but some sets $S_i \supseteq d_i$ may. 
Then by Theorem~\ref{th:RIP} and this discussion, $D_1,\ldots,D_n$ are compatible. 


From a point of view of information, compatibility of pieces of information $D_1,\ldots,D_n$ is not always desirable. It is a kind of irrelevance or (conditional) independence condition. In fact, if the members of the family $D_1,\ldots,D_n$ with $n > 1$ are consistent, pairwise compatible, and their supports $S_i$ satisfy RIP, then $D_i  = \epsilon_{S_i}(D_1 \cdot \ldots \cdot D_n)$ means that, 
the pieces of information $D_j$ for $j \not= i$ give no new information relative to variables in $S_i$. If, on the other hand, the family of pieces of information $D_1,\ldots,D_n$ is not compatible, but consistent in the sense that $D \coloneqq D_1 \cdot \ldots \cdot D_n \not= 0$, then, if $S_1$ to $S_n$ satisfy RIP, we have that the family $\epsilon_{S_i}(D) \geq D_i$ (in the information order). Indeed $\epsilon_{S_i}(D) \cdot D_i =\epsilon_{S_i}( D_1 \cdot ... \cdot D_{i-1} \cdot D_{i+1} \cdot D_n) \cdot D_i \cdot D_i=\epsilon_{S_i}( D_1 \cdot ... \cdot D_{i-1} \cdot D_{i+1} \cdot D_n) \cdot D_i= \epsilon_{S_i}(D)$. This means that $D_j$ may furnish additional information on the variables in $S_i$ for $i \not= j$. By formula (4.33), p. 119 in \cite{kohlas03}, we have
\begin{eqnarray*}
D = \epsilon_{S_1}(D) \cdot \ldots \cdot \epsilon_{S_n}(D).
\end{eqnarray*}
Obviously the $D'_i \coloneqq \epsilon_{S_i}(D)$ are pairwise compatible and  by definition compatible (this is remark 1 in~\citealp{mirzaffalon20}).

To conclude, note that most of this discussion of compatibility (in particular Theorem~\ref{th:RIP}) depends strongly on idempotency E2 of the information algebra. For instance the valuation algebra corresponding to Bayesian networks is not idempotent, as well as many other semiring-valuation algebras~\citep{kohlaswilson06}. So Theorem~\ref{th:RIP} does not apply in these cases.

We have remarked that compatibility is essentially an issue of information algebras. So, we may expect that concepts and results on compatibility of coherent sets of gambles carry over to coherent lower and upper previsions.

\begin{definition}[Consistent family of coherent lower previsions]
A finite family of coherent lower previsions $\lpr_1,\ldots,\lpr_n$, is \emph{consistent}, or $\lpr_1,\ldots,\lpr_n$ are \emph{consistent}, if $0 \neq \lpr_1 \cdot \ldots \cdot \lpr_n$.
\end{definition}

\begin{definition}[Compatible family of coherent lower previsions]
A finite family of coherent lower previsions $\lpr_1,\ldots,\lpr_n$, where $\lpr_i$ has support $S_i$ for every $i=1,\ldots,n$ respectively, is called \emph{compatible}, or $\lpr_1,\ldots,\lpr_n$ are called \emph{compatible}, if there is a coherent lower prevision $\lpr$ such that $\underline{e}_{S_i}(\lpr) = \lpr_i$ for $i=1,\ldots,n$.
\end{definition}


\begin{definition}[Pairwise compatibility for coherent lower previsions]
Two coherent lower previsions $\lpr_i$ and $\lpr_j$, where $\lpr_i$ has support  $S_i$ and $\lpr_j$ support $S_j$, are called \emph{pairwise compatible}, if 
\begin{eqnarray*}
\underline{e}_{S_ i\cap S_j}(\lpr_j) =  \underline{e}_{S_i \cap S_j}(\lpr_i).
\end{eqnarray*}
Analogously, a finite family of coherent lower previsions $\lpr_i,..., \lpr_n$, where $\lpr_i$ has support $S_i$ for every $i=1, ...,n$ respectively, is pairwise compatible, or again $\lpr_i,..., \lpr_n$ are pairwise compatible, if pairs $\lpr_i,\lpr_j$ are pairwise compatible for every $i,j \in \{1, ...n\}$. 
\end{definition}

Theorem~\ref{th:RIP} carries over, since it is in fact a theorem of information algebras.

\begin{theorem} \label{th:RIP_2}
Consider a family of consistent coherent lower previsions $\lpr_1,\ldots,\lpr_n$ with $n > 1$, where $\lpr_i$ has support $S_i$ for every $i=1, ...,n$ respectively. If $S_1,\ldots,S_n$ satisfy RIP and $\lpr_1,\ldots,\lpr_n$ are pairwise compatible, then they are compatible and $\underline{e}_{S_i}(\lpr_1 \cdot \ldots \cdot \lpr_n) = \lpr_i$ for $i=1,\ldots,n$.
\end{theorem}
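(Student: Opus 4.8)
The plan is to observe that Theorem~\ref{th:RIP_2} is literally an instance of Theorem~\ref{th:RIP}, because the proof of the latter used only the information-algebra axioms (the commutative semigroup structure, together with E1--E3 and the support properties), and never any feature specific to coherent sets of gambles. Since Theorem~\ref{th:IsomPrev} (together with the surrounding discussion) establishes that $\underline{\Phi}$ with the operations $\lpr_1 \cdot \lpr_2 \coloneqq \underline{E}^*(\max\{\lpr_1,\lpr_2\})$ and $\underline{e}_S(\lpr) \coloneqq \underline{E}^*(\lpr_S)$ is a domain-free information algebra (in fact isomorphic to $\Phi^+$), all the tools invoked in the proof of Theorem~\ref{th:RIP} are available verbatim for lower previsions: combination is commutative and associative with null $\sigma(\mathcal{L}(\Omega))$, extraction satisfies E1--E3, idempotency $\underline{e}_S(\lpr) \cdot \lpr = \lpr$ holds, and the eight support properties carry over. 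So the first step is simply to record this transfer.

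Concretely, I would set $Y_i \coloneqq S_{i+1} \cup \ldots \cup S_n$ for $i = 1,\ldots,n-1$ and $\lpr \coloneqq \lpr_1 \cdot \ldots \cdot \lpr_n$, and then run the identical argument: using RIP, $\underline{e}_{Y_1}(\lpr) = \underline{e}_{Y_1}(\lpr_1) \cdot \lpr_2 \cdot \ldots \cdot \lpr_n = \underline{e}_{S_1 \cap Y_1}(\lpr_1) \cdot \lpr_2 \cdot \ldots \cdot \lpr_n = \underline{e}_{S_1 \cap S_{p(1)}}(\lpr_1) \cdot \lpr_2 \cdot \ldots \cdot \lpr_n$, which by pairwise compatibility equals $\underline{e}_{S_1 \cap S_{p(1)}}(\lpr_{p(1)}) \cdot \lpr_2 \cdot \ldots \cdot \lpr_n = \lpr_2 \cdot \ldots \cdot \lpr_n$ by idempotency. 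Induction on $i$ gives $\underline{e}_{Y_i}(\lpr) = \lpr_{i+1} \cdot \ldots \cdot \lpr_n$, hence $\underline{e}_{S_n}(\lpr) = \lpr_n$. Then, exactly as before, one shows $\underline{e}_{S_i}(\lpr) = \underline{e}_{S_i \cap S_{p(i)}}(\lpr) \cdot \lpr_i$ using $S_{p(i)} \subseteq Y_i$ and RIP, and a backward induction on $i$ from $n-1$ down to $1$, rooted in $\underline{e}_{S_n}(\lpr) = \lpr_n$, yields $\underline{e}_{S_i}(\lpr) = \underline{e}_{S_i \cap S_{p(i)}}(\lpr_{p(i)}) \cdot \lpr_i = \underline{e}_{S_i \cap S_{p(i)}}(\lpr_i) \cdot \lpr_i = \lpr_i$. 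Since $\underline{e}_{S_i}(\lpr) = \lpr_i$ for all $i$ with $\lpr$ coherent (consistency guarantees $\lpr \neq 0$), the family is compatible.

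There is essentially no obstacle beyond bookkeeping: the only thing to check is that the consistency hypothesis $0 \neq \lpr_1 \cdot \ldots \cdot \lpr_n$ plays the same role it did in the gambles case (ensuring $\lpr$ is a genuine coherent lower prevision rather than the null element), and that the support notion introduced for lower previsions via $\underline{e}_S(\lpr) = \lpr$ behaves like the abstract support notion — both of which follow from $\underline{\Phi}$ being a domain-free information algebra. Hence the cleanest presentation is to state the theorem and give the one-line proof that it is an immediate consequence of Theorem~\ref{th:RIP} applied to the information algebra $\underline{\Phi}$, perhaps with a parenthetical reminder that the proof of Theorem~\ref{th:RIP} was phrased purely in information-algebra terms. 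If a self-contained proof is preferred, it is obtained from the proof of Theorem~\ref{th:RIP} by the substitutions $D_i \rightsquigarrow \lpr_i$, $\epsilon_S \rightsquigarrow \underline{e}_S$, $\cdot \rightsquigarrow \cdot$, $0 \rightsquigarrow \sigma(\mathcal{L}(\Omega))$.

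\begin{proof}
This is again a theorem of information algebras. Since $\underline{\Phi}$ with combination $\lpr_1 \cdot \lpr_2 = \underline{E}^*(\max\{\lpr_1,\lpr_2\})$ and extraction $\underline{e}_S(\lpr) = \underline{E}^*(\lpr_S)$ is a domain-free information algebra (Theorem~\ref{th:IsomPrev}), the proof of Theorem~\ref{th:RIP}, which was carried out purely within the axioms of an information algebra, applies verbatim with $D_i$ replaced by $\lpr_i$, $\epsilon_S$ by $\underline{e}_S$, and the null $\mathcal{L}(\Omega)$ by $\sigma(\mathcal{L}(\Omega))$. For completeness we repeat the argument. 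Let $Y_i \coloneqq S_{i+1} \cup \ldots \cup S_n$ for $i=1,\ldots,n-1$ and $\lpr \coloneqq \lpr_1 \cdot \ldots \cdot \lpr_n$; by consistency $\lpr \neq 0$. By RIP,
\begin{eqnarray*}
\lefteqn{\underline{e}_{Y_1}(\lpr) = \underline{e}_{Y_1}(\lpr_1) \cdot \lpr_2 \cdot \ldots \cdot \lpr_n = \underline{e}_{Y_1}(\underline{e}_{S_1}(\lpr_1)) \cdot \lpr_2 \cdot \ldots \cdot \lpr_n} \\
&&= \underline{e}_{S_1 \cap Y_1}(\lpr_1) \cdot \lpr_2 \cdot \ldots \cdot \lpr_n = \underline{e}_{S_1 \cap S_{p(1)}}(\lpr_1) \cdot \lpr_2 \cdot \ldots \cdot \lpr_n.
\end{eqnarray*}
By pairwise compatibility $\underline{e}_{S_1 \cap S_{p(1)}}(\lpr_1) = \underline{e}_{S_1 \cap S_{p(1)}}(\lpr_{p(1)})$, hence by idempotency $\underline{e}_{Y_1}(\lpr) = \lpr_2 \cdot \ldots \cdot \lpr_n$. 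By induction on $i$ in the same way,
\begin{eqnarray*}
\underline{e}_{Y_i}(\lpr) = \lpr_{i+1} \cdot \ldots \cdot \lpr_n, \; \forall i=1,\ldots,n-1,
\end{eqnarray*}
so that $\underline{e}_{S_n}(\lpr) = \underline{e}_{Y_{n-1}}(\lpr) = \lpr_n$. Next, for $i=1,\ldots,n-1$, since $S_{p(i)} \subseteq Y_i$, by RIP
\begin{eqnarray*}
\lefteqn{\lpr_i \cdot \underline{e}_{S_i \cap S_{p(i)}}(\lpr) = \lpr_i \cdot \underline{e}_{S_i \cap S_{p(i)}}(\underline{e}_{Y_i}(\lpr)) = \lpr_i \cdot \underline{e}_{S_i \cap Y_i}(\underline{e}_{Y_i}(\lpr)) = \lpr_i \cdot \underline{e}_{S_i}(\underline{e}_{Y_i}(\lpr))} \\
&&= \lpr_i \cdot \underline{e}_{S_i}(\lpr_{i+1} \cdot \ldots \cdot \lpr_n) = \underline{e}_{S_i}(\lpr_i \cdot \lpr_{i+1} \cdot \ldots \cdot \lpr_n) = \underline{e}_{S_i}(\underline{e}_{Y_{i-1}}(\lpr)) = \underline{e}_{S_i}(\lpr).
\end{eqnarray*}
Then by backward induction, rooted in $\underline{e}_{S_n}(\lpr) = \lpr_n$, with induction hypothesis $\underline{e}_{S_j}(\lpr) = \lpr_j$ for $j > i$, we obtain by pairwise compatibility
\begin{eqnarray*}
\lefteqn{\underline{e}_{S_i}(\lpr) = \underline{e}_{S_i \cap S_{p(i)}}(\lpr) \cdot \lpr_i = \underline{e}_{S_i \cap S_{p(i)}}(\underline{e}_{S_{p(i)}}(\lpr)) \cdot \lpr_i} \\
&&= \underline{e}_{S_i \cap S_{p(i)}}(\lpr_{p(i)}) \cdot \lpr_i = \underline{e}_{S_i \cap S_{p(i)}}(\lpr_i) \cdot \lpr_i = \lpr_i.
\end{eqnarray*}
Thus $\underline{e}_{S_i}(\lpr_1 \cdot \ldots \cdot \lpr_n) = \lpr_i$ for $i=1,\ldots,n$, so the family is compatible.
\end{proof}
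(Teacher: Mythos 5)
Your proposal is correct and matches the paper exactly: the paper itself gives no separate proof, stating only that Theorem~\ref{th:RIP} carries over because it is a theorem of general information algebras, which is precisely your argument (you additionally spell out the substituted proof, which is fine but not needed). No gaps.
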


Of course, there are close relations between compatibility of coherent sets of gambles and coherent lower previsions by the homomorphism between the related algebras. If a family of coherent sets $D_1,\ldots,D_n$ with supports $S_1,\ldots,S_n$ respectively, is compatible, then the associated family of coherent lower previsions $\sigma(D_1), \ldots, \sigma(D_n)$ is compatible too, since $\underline{e}_{S_i}(\sigma(D)) = \sigma(\epsilon_{S_i}(D)) = \sigma(D_i)$. Conversely, if $\lpr_1,\ldots,\lpr_n$ is a compatible family of coherent lower previsions with support $S_1, \ldots,S_n$ respectively, then there is a compatible family of strictly desirable sets of gambles $D_i^+ \coloneqq \tau(\lpr_i)$. 

\section{Outlook}

This paper presents a first approach to information algebras related to coherent sets of desirable gambles and coherent lower and upper previsions.
This leads us the possibility to abstract away properties of desirability that can be regarded as properties of the more general algebraic structure of information algebras rather than special ones of desirability.

\cite{decooman2005} however, pursued a similar purpose. He showed indeed that there is a common order-theoretic structure that he calls \emph{belief structure}, underlying many of the
models for representing beliefs in the literature such as, for example, classical propositional logic, almost desirable sets of gambles or lower and upper previsions.

 There are surely important and interesting connections between De Cooman’s belief structures~\citep{decooman2005} and information algebras. In particular between belief structures and information algebras based on closure operations, linked with information systems~\citep{kohlas03}. This hints at a profound connection between the two theories, which certainly deserves careful study. However, this is a subject which has yet to be worked out and may advance both information algebra theory as well as belief structures.

There are also other aspects and issues which are not addressed here. In particular, we limit our work to multivariate models, where coherent sets of gambles and  coherent lower previsions 
represent pieces of information or belief relative to sets $S$ of variables. 
However, more general possibility spaces can be considered. In the view of information algebras, this translates in considering coherent sets of gambles and coherent previsions as pieces of information regarding more general partitions of the set of possibilities. This case has been analyzed in more detail in~\cite{kohlas21}.
Moreover, another important issue which is not been addressed here is the issue of conditioning. It should be analyzed both for the multivariate that for the more general cases of possibility spaces.
This would also serve to analyze the issue of conditional independence, which seems to be fundamental for any theory of information.

\bibliography{text}
\bibliographystyle{authordate3}


\end{document}